\newtheorem{lemma}{Lemma}
\newtheorem{definition}{Definition}
\newcommand\res[3][round-precision=1]{
    \SI[round-mode=places, scientific-notation=fixed, fixed-exponent=0, output-decimal-marker={.},#1]{#2}{}%
    \pm%
    \SI[round-mode=places, scientific-notation=fixed, fixed-exponent=0, output-decimal-marker={.},#1]{#3}{}
}
\definecolor{codegreen}{rgb}{0,0.6,0}
\definecolor{codegray}{rgb}{0.5,0.5,0.5}
\definecolor{codepurple}{rgb}{0.58,0,0.82}
\definecolor{backcolour}{rgb}{0.95,0.95,0.92}
\lstdefinestyle{mystyle}{
    backgroundcolor=\color{backcolour},   
    commentstyle=\color{codegreen},
    keywordstyle=\color{magenta},
    numberstyle=\tiny\color{codegray},
    stringstyle=\color{codepurple},
    basicstyle=\ttfamily\footnotesize,
    breakatwhitespace=false,         
    breaklines=true,                 
    captionpos=b,                    
    keepspaces=true,                 
    numbers=left,                    
    numbersep=5pt,                  
    showspaces=false,                
    showstringspaces=false,
    showtabs=false,                  
    tabsize=2
}
\DeclareRobustCommand\onedot{\futurelet\@let@token\@onedot}
\def\@onedot{\ifx\@let@token.\else.\null\fi\xspace}
\def\iid{{i.i.d}\onedot}
\def\eg{{e.g}\onedot} 
\def\ie{{i.e}\onedot}
\newcommand{\Method}{Banded Square Root Factorization\xspace}
\newcommand{\method}{banded square root factorization\xspace}
\newcommand{\SR}{SR\xspace}
\newcommand{\acronym}{BSR\xspace}
\newcommand{\AOF}{AOF\xspace}
\newcommand{\AOFfull}{approximately optimal factorization\xspace}
\newcommand{\E}{\mathbb{E}}
\newcommand{\Ecal}{\mathcal{E}}
\newcommand{\R}{\mathbb{R}}
\newcommand{\sens}{\operatorname{sens}}
\newcommand{\diag}{\operatorname{diag}}
\newcommand{\Fr}{F}
\newcommand{\trace}{\operatorname{trace}}
\newcommand{\Id}{\operatorname{Id}}
\renewcommand{\b}[1]{{|#1|}} 
\newcommand{\toep}{\operatorname{LDToep}}
\renewcommand{\paragraph}[1]{\noindent\textbf{#1}\quad}
\title{Banded Square Root Matrix Factorization for Differentially Private Model Training} 
\author{%
  Nikita  Kalinin  \\
  Institute of Science and Technology (ISTA)\\
  Klosterneuburg, Austria \\
  \texttt{nikita.kalinin@ist.ac.at} \\
   \And
   Christoph Lampert \\
   Institute of Science and Technology (ISTA) \\
   Klosterneuburg, Austria \\
   \texttt{chl@ist.ac.at} \\
}
\begin{document}
\renewcommand{\proofname}{Proof sketch}

\maketitle

\begin{abstract}
    Current state-of-the-art methods for differentially private model 
    training are based on matrix factorization techniques. 
    However, these methods suffer from high computational overhead 
    because they require numerically solving a demanding optimization 
    problem to determine an approximately optimal factorization prior 
    to the actual model training. 
    In this work, we present a new matrix factorization approach, 
    \acronym, which overcomes this computational bottleneck. 
    By exploiting properties of the standard matrix square root,
    \acronym allows to efficiently handle also large-scale problems. 
    For the key scenario of stochastic gradient descent with momentum 
    and weight decay, we even derive analytical expressions for \acronym
    that render the computational overhead negligible. 
    We prove bounds on the approximation quality that hold both in the 
    centralized and in the federated learning setting.
    Our numerical experiments demonstrate that models trained 
    using \acronym perform on par with the best existing methods,
    while completely avoiding their computational overhead. 
\end{abstract}

\section{Introduction}
We study the problem of \emph{differentially private (DP) model training with 
stochastic gradient descent (SGD)} in the setting of either federated or 
centralized learning. 
This task has recently emerged as one of the most promising ways to train powerful
machine learning models but nevertheless guarantee the privacy of the used data,
which led to a number of studies, both theoretical as well as application-driven~\citep{Abadi,yugradient,zhang2021adaptive,Kairouz,Denisov}.
The state of the art in the field are approaches based on the \emph{matrix factorization (MF) mechanism}~\citep{Li_MF,Henzinger}, which combines theoretical guarantees with practical applicability~\citep{BandedMatrix,MultiEpoch,PAMM, CNPBINDPL}\footnote{Note that this 
specific type of MF should not be confused with other occurrences of matrix factorization 
in, potentially private, machine learning, such as in recommender systems~\citep{shin2018privacy,li2021federated}.}
It is based on the observation that all iterates of SGD are 
simply linear combinations of model gradients, which are 
computed at intermediate time steps. 
Consequently, the iterates can be written formally as the result of 
multiplying the matrix of coefficients, called \emph{workload matrix}, 
with the row-stacked gradient vectors.  
To preserve the privacy of the training data in this process 
one adds suitably scaled Gaussian noise at intermediate 
steps of the computation. 
The MF mechanism provides a way to select the noise 
covariance structure based on a factorization of the workload 
matrix into two matrices.

Identifying the minimal amount of noise necessary to achieve 
a desired privacy level requires solving an optimization 
problem over all possible factorizations, subject to 
\emph{data participation constraints}.
%
For some specific settings, the optimal solutions have 
been characterized: for \emph{streaming learning}, 
when each data batch contributes at most once to the 
gradients, \citet{Li_MF} presented a formulation of 
this problem as a semi-definite program. 
\citet{Henzinger} proved that a square root factorization 
of the workload matrix is asymptotically optimal for different 
linear workloads, including continual summation and decaying 
sums. 

In this work, our focus lies on the settings that are most 
relevant to machine learning tasks: workload matrices that 
reflect SGD-like optimization, and participation schemes in 
which each data batch can potentially contribute to more than 
one gradient vector, as it is the case for standard multi-epoch 
training. 
Assuming that this happens at most once every $b$ steps, for 
some value $b\geq 1$, leads to the problem of optimal matrix 
factorization in the context of \emph{$b$-min-separated participation} 
sensitivity. 
Unfortunately, as shown in~\citet{BandedMatrix}, finding the 
optimal matrix factorization in this setting is computationally
intractable. 
Instead, the authors proposed an approximate solution by 
posing additional constraints on the solution set.  
The result is a semi-definite program that is tractable, but 
still has high computational cost, making it practical only
for small to medium-sized problem settings. 

Subsequent work concentrated on improving or better understanding
the factorizations for specific algorithms, such as plain SGD 
without gradient clipping, momentum, or weight decay~\citep{koloskova2023gradient}
or specific, \eg convex, objective functions~\citep{CNPBINDPL}.
Often, streaming data was asssumed, \ie each data item can contribute 
at most to one model update, which is easier to analyze theoretical, 
but further removed from real-world applications~\citep{dvijotham2024efficient}. 
A concurrent line of works has also focused on scaling matrix factorizations for large-scale training. \citet{hasslefree2024} extended the Buffered Linear Toeplitz (BLT) mechanism to support multi-participation in federated learning, improving privacy-utility tradeoffs, while ensuring memory efficiency.  \citet{scalingmckenna2024} improved DP Banded Matrix Factorization's scalability, enabling it to handle millions of iterations and large models efficiently. These advancements enhance the practicality of differentially private training in real-world applications.

Our work aims at a general-purpose solution
that covers as many realistic scenarios as possible. Our
 ultimate goal is to make general-purpose differentially 
private model training as simple and efficient to use as 
currently dominating non-private technique. 
Our main contributions are:
\begin{enumerate}
    \item We introduce a new factorization, 
    \textbf{the \emph{banded squared root (\acronym)}, 
    which is efficiently computable even for large 
    workload matrices} and agnostic to the underlying
    training objective. 
    For matrices stemming from SGD optimization 
    potentially with momentum and/or weight decay, 
    we even provide \textbf{closed form expressions.} 
    \item We provide \textbf{general lower bounds} on the approximate error for any factorization, along with specific \textbf{upper and lower bounds for the BSR}, Square Root, and baseline factorizations, in the contexts of both single participation (streaming)  and \textbf{repeated participation} (e.g., multi-epoch) training.
    \item We demonstrate experimentally that 
    \textbf{\acronym's approximation error is 
    comparable to the state-of-the-art} method, 
    and that \textbf{both methods also perform 
    comparably in real-world training tasks}. 
\end{enumerate}
Overall, the proposed \textbf{\acronym factorization achieves 
training high-accuracy models with provable privacy 
guarantees while staying computationally efficient even 
for large-scale training tasks.}

\section{Background}
Our work falls into the areas of \emph{differentially private (stochastic) optimization},
of which we remind the reader here, following mostly the description of \citet{Denisov} and \citet{BandedMatrix}.
The goal is to estimate a sequence of parameter vectors, $\Theta = (\theta_1, \dots, \theta_{n})\in\R^d$, where each $\theta_i$ is a linear combination of \emph{update vectors}, $x_1,\dots,x_{i}\in\R^d$, 
that were computed in previous steps, typically as gradients of a model with respect to 
training data that is meant to stay private. We assume that all $x_i$ have a bounded norm, $\|x_i\|\leq \zeta$.
Compactly, we write $\Theta=AX$, where the lower triangular \emph{workload matrix} $A$ contains 
the coefficients and $X\in\R^{n\times d}$ is formed by stacking the update vectors as rows.
With different choices of $A$, the setting then reflects many popular first-order optimization 
algorithms, in particular stochastic gradient descent (SGD), potentially with momentum 
and/or weight decay.
Depending on how exactly $x_1,\dots,x_n$ are obtained, the setting can express
different centralized as well as federated training paradigms.
To formalize this aspect, we adopt the concept of \emph{$b$-min-separated participation}~\citep{BandedMatrix}. 
For some integer $b\geq 1$, it states that if a data item (\eg a single training 
example in central training, or a client batch in federated learning) contributed 
to an update $x_i$, the earliest it can contribute again is the update $x_{i+b}$. 
Additionally, let $1\leq k\leq \frac{n}{b}$ be the maximal number any data point can contribute. 
In particular, this notion also allows us to treat in a unified way 
\emph{streaming data} ($b=n$ or $k=1$), as well as \emph{unrestricted access} 
patterns ($k=n$ with $b=1$), but also intermediate settings, such as 
\emph{multi-epoch training} on a fixed-size dataset. 

The \emph{matrix factorization} approach~\citep{Li_MF} adopts a \emph{factorization} $A=BC$ of the workload matrix and computes $\Theta^{\text{MF}} = B(CX+Z)$, where $Z$ is Gaussian noise that is chosen appropriately to make the 
intermediate result $CX+Z$ private to the desired level. 
Algorithm~\ref{alg:cap} shows the resulting algorithm in pseudocode.
It exploits the fact that instead of explicit multiplication by $C$ 
and $B$, standard optimization toolboxes can be employed with suitably 
modified update vectors, because also $\Theta^{\text{MF}}=A(X+C^{-1}Z)$,
and multiplication by $A$ corresponds to performing the optimization.

\begin{algorithm}[t]
\caption{Differentially Private SGD with Matrix Factorization}\label{alg:cap} 
\begin{algorithmic}
\Require Initial model $\theta_0 \in \mathbb{R}^d$, dataset $D$, batchsize $b$, matrix $C \in \mathbb{R}^{n \times n}$, model loss $\ell(\theta,d)$, clipnorm $\zeta$,
noise matrix $Z \in \mathbb{R}^{n \times d}$ with \iid entries $\sim \mathcal{N}(0, s^2)$, where $s=\sigma\sens_{k,b}(C)$.

\For{$i = 1, 2, \dots ,n$}
\State $S_i \leftarrow \{d_1, \dots, d_m\} \subseteq D$\quad select a data batch, respecting the data participation constraints
\State $g_i \leftarrow \nabla_\theta \ell(\theta_{i-1}, d_j)) \quad\text{for $j=1,\dots,m$}$
\State $x_i \leftarrow \sum\nolimits_{j = 1}^{m} \operatorname{clip}_{\zeta}(g_j)$ \quad
\text{where $\operatorname{clip}_{\zeta}(d) = \min(1, \zeta/||d||)d$}
\State $\hat{x}_i \leftarrow x_i + \zeta [C^{-1}Z]_{[i,\cdot]}$
\State $\theta_{i} \leftarrow \text{update}(\theta_{i-1},\hat{x}_i)$, \qquad// SGD model updates
\EndFor
\Ensure $\Theta=(\theta_1,\dots,\theta_n)$
\end{algorithmic}
\end{algorithm}

Different factorizations recover different algorithms from the literature. For 
example, $B=A$, $C=\Id$ recovers DP-SGD~\citep{Abadi}, where noise is added 
directly to the gradients.
Conversely, $B=\Id$, $C=A$ simply 
adds noise to each iterate of the 
optimization~\citep{dwork2006calibrating}. 
However, better choices than these baselines are possible, in the 
sense that they can guarantee the same levels of privacy with less 
added noise, and therefore potentially with higher retained accuracy. 
The reason lies in the fact that $B$ and $C$ play different roles:
$B$ acts as a \emph{post-processing} operation of already private 
data. Hence, it has no further effect on privacy, but it 
influences to what amount the added noise affects the expected
error in the approximation of $\Theta$. Specifically, for 
$Z\sim\mathcal{N}(0;s\Id)$, 
\begin{align}
\E_Z\|\Theta-\Theta^{\text{MF}}\|^2_{\Fr} = 
\E_Z\|BZ\|^2_{\Fr} = s^2\|B\|^2_{\Fr}.
\label{eq:expected_theta}
\end{align}

In contrast, $CX$ is the quantity that is meant to be made private.
Doing so requires noise of a strength proportional to $C$'s \emph{sensitivity}, 
$\sens(C) := \sup\nolimits_{X \sim X'} \|CX - CX'\|_F,$ where the \emph{neighborhood relation}, $X\sim X'$, indicates that the 
two sequences of update vectors differ only in those entries that 
correspond to a single data item\footnote{As proved in \citet[Theorem 2.1]{Denisov}, 
establishing privacy in this \emph{non-adaptive} setting suffices to guarantee 
also privacy in the \emph{adaptive} setting, where the update vectors depend not 
only on the data items but also the intermediate estimates of the model parameters,
as it is the case for private model training.}
As shown in~\citet{BandedMatrix}, in the setting of $b$-min-separated repeated 
participation, it holds that
\begin{align}
\sens_{k,b}(C) &\le \max_{\pi \in \Pi_{k,b}}\sqrt{{\sum\limits_{i, j \in \pi}}\big|(C^\top\!C)_{[i, j]}\big|}, 
\label{eq:sensPi}
\end{align}
where $\Pi_{k,b} = \{\ \pi\subset \{1,\dots,n\} \ :\  |\pi|\leq k \wedge (\{i, j\} \subset \pi \ \Rightarrow\ i=j \ \vee\ |i-j|\geq b)\ \}$, 
is the set of possible $b$-min-separated index sets with at most $k$ participation.
Furthermore, \eqref{eq:sensPi} holds even with equality if all entries of $C^\top\!C$ are non-negative. 

Combining~\eqref{eq:expected_theta} with $s=\sens_{k,b}(C)$ yields a quantitative measure for the quality of a factorization.
\begin{definition}
For any factorization $A=BC$, its \textbf{expected approximation error} is 
\begin{align}
    \mathcal{E}(B,C) &:= \sqrt{\E_Z\|\Theta-\Theta^{\text{MF}}\|^2_{\Fr}/n} =  \frac{1}{\sqrt{n}}\sens_{k,b}(C)\|B\|_{\Fr},
    \label{eq:approxerror}
\end{align}
where the $1/\sqrt{n}$ factor is meant to make the quantity comparable across different problem sizes. 
\end{definition}

The \emph{optimal factorization} by this reasoning would be the one 
of smallest expected approximation error. 
Unfortunately, minimizing~\eqref{eq:approxerror} across all 
factorizations it is generally computationally intractable.
Instead, \citet{BandedMatrix} propose an \emph{approximately 
optimal factorization}.

\begin{definition}\label{def:AOF} 

For a workload matrix $A$, let $S$ be the solution to the 
optimization problem
\begin{align}
\arg\min_{S \in \mathcal{S}_{+}^{n}} \trace[A^\top\!\!A\,S^{-1}] 
\quad\text{subject to \quad $\diag(S)=1$ \ \  and\ \ $S_{[i,j]}=0$\ \ for\ \ $|i-j|\geq b$,}
\label{eq:AOF}
\end{align}
where $\mathcal{S}_{+}^{n} $ is the cone of positive definite $n\times n$ matrices. 
Then, $A=BC$ is called the \textbf{\AOFfull (\AOF)}, 
if $C$ is lower triangular and fulfills $C^\top\! C=S$. 
\end{definition}

The optimization problem \eqref{eq:AOF} is a semi-definite program (SDP), 
and can therefore be solved numerically using standard packages. 
However, this is computationally costly,
and for large problems (\eg $n>5000$) computing the \AOF solution is impractical.
This poses a problem for real-world training tasks, where the number of 
update steps are commonly thousands or tens of thousands.

Solving~\eqref{eq:AOF} itself only approximately can mitigate this problem
to some extent, but as we will discuss in Section~\ref{sec:experiments}, 
this can lead to robustness problems, especially because the recovery of 
$C$ from $S$ in Definition~\ref{def:AOF}, \eg by a Cholesky decomposition, 
tends to be sensitive to numerical errors.

\section{\Method} 
In the following section, we introduce our main contribution: 
a general-purpose factorization for the task of differentially 
private stochastic optimization that can be computed efficiently 
even for large problem sizes.

\begin{definition}[\Method]\label{def:banded_square_root}
Let $A\in\R^{n\times n}$ be a lower triangular workload 
matrix with strictly positive diagonal entries. Then,
we call $A=C^2$ the \textbf{square root factorization (SR)}, 
when $C$ denotes the unique matrix square root 
that also has strictly positive diagonal entries.
Furthermore, for any \emph{bandwidth} $p\in\{1,\dots,n\}$, 
we define the \textbf{\method of bandwidth $p$ ($p$-\acronym)} 
of $A$, as 
\begin{align}
A = B^{\b{p}}C^{\b{p}}
\end{align}
where $C^{\b{p}}$ is created from $C$ by setting all entries 
below the $p$-th diagonal to $0$, 
\begin{align}
C^{\b{p}}_{[i,j]}=\begin{dcases} C_{[i,j]} &\quad\text{if $i-j<p$,} \\ 
0&\quad\text{otherwise.}\end{dcases}
\qquad\qquad\text{and}\quad B^{\b{p}}=A (C^{\b{p}})^{-1}.
\label{eq:method}
\end{align}
\end{definition}

Note that determining the \SR, and therefore any $p$-\acronym, 
is generally efficient even for large workload matrices, 
because explicit recursive expressions exist for computing 
the square root of a lower triangular matrix~\citep{Bjorck1983ASM,deadman2012blocked}.  %

In the rest of this work, we focus on the case where the workload 
matrix stems from \emph{SGD with momentum and/or weight decay}, 
and we show that then even closed form expressions for the entries 
of $C^{\b{p}}$ exist that renders the computational cost negligible.

\subsection{\Method for SGD with Momentum and Weight Decay}
We recall the update steps of SGD with momentum and weight decay:
\begin{align}
\theta_i &= \alpha\theta_{i - 1} - \eta m_i
\qquad\text{for}\qquad
m_i = \beta m_{i - 1} + x_i
\end{align}
where $x_1,\dots,x_n$ are the update vectors, $\eta>0$ is the \emph{learning rate} and $ 0\le\beta<1$ 
is the \emph{momentum strength} and $0<\alpha\leq 1$ is the weight decay parameter. 
Note that our results also hold for $\beta=0$, \ie without momentum, and for $\alpha=1$, 
\ie, without weight decay. 
In line with real algorithms and to avoid degenerate cases, 
we assume $\beta<\alpha$ throughout this work.
The update vectors are typically gradients of the model with respect to its 
parameters, but additional operations such as normalization or clipping might 
have been applied. 

Unrolling the recursion, we obtain an expression for $\theta_i$ as a linear 
combination of update vectors as
\begin{align}
\theta_i = \eta \sum_{j = 1}^{i}x_j \Big(\sum_{k=j}^{i} \alpha^{i-k}\beta^{k-j}\Big).
\end{align}

Consequently, the workload matrix has the explicit form\footnote{The workload matrix $A_{1, \beta}$ without weight decay has been considered before in the work of \citet{Denisov}, but not in the context of square root factorization.} $A=\eta A_{\alpha,\beta}$ for 
\begin{align}\label{eq:A_alpha_beta}
A_{\alpha,\beta} &= \begin{pmatrix}
    a_0 & 0 & 0 & \dots &0\\
    a_1 & a_0 & 0 & \dots & 0\\
    a_2 & a_1 & a_0 & \dots & 0\\
    \vdots & \vdots & \ddots & \ddots & \vdots\\
    a_{n-1} & a_{n-2} & \dots & a_1 & a_0 \\
\end{pmatrix}\quad \text{with $a_j = \sum_{i=0}^j \alpha^{i}\beta^{j-i}= \frac{\alpha^{j+1}-\beta^{j+1}}{\alpha-\beta}$.}
\end{align}

As one can see, $A_{\alpha,\beta}$ is a lower triangular Toeplitz-matrix, so 
it is completely determined by the entries of its first column. 
In the following, we use the notation $\toep(m_1,\dots,m_n)$ to denote 
a lower triangular Toeplitz matrix with first column $m_1,\dots,m_n$, \ie $A_{\alpha,\beta}=\toep(a_0,\dots,a_{n-1})$.

Our first result is an explicit expression for the positive square root of $A_{\alpha,\beta}$ (and thereby its $p$-\acronym).

\begin{restatable}[Square-Root of SGD Workload Matrix]{theorem}{SGDroot}
\label{thm:SGDM_Root}
Let $A_{\alpha,\beta}$ be the workload matrix~\eqref{eq:A_alpha_beta}. Then $A_{\alpha,\beta}=C_{\alpha,\beta}^2$ 
for $C_{\alpha,\beta}=\toep(c_0,\dots,c_{n-1})$, with $c_0=1$ and $c_j = \sum_{i = 0}^{j} \alpha^{j-i} r_{j-i} r_i \beta^{i}$ 
for $j=1,\dots,n-1$ with coefficients $r_i = \big|\binom{-1/2}{i}\big|$.
For any $p\in\{1,\dots,n\}$, the $p$-banded \acronym matrix $C^{\b{p}}_{\alpha,\beta}$ is 
obtained from this by setting all coefficients $c_j=0$ for $j\geq p$.
\end{restatable}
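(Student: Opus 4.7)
The plan is to work inside the commutative algebra of $n\times n$ lower triangular Toeplitz matrices, which is isomorphic to the truncated formal power series ring $\mathbb{R}[x]/(x^n)$ under the map $\toep(m_0,\dots,m_{n-1}) \mapsto \sum_{j=0}^{n-1} m_j x^j$. Matrix multiplication corresponds to polynomial multiplication modulo $x^n$, and the unique lower triangular Toeplitz square root with positive diagonal corresponds to the unique formal power series square root with constant term $1$. So to prove the theorem, it suffices to show that the generating function of $(c_0,\dots,c_{n-1})$ squares to that of $(a_0,\dots,a_{n-1})$.

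First, I would identify the generating function of $A_{\alpha,\beta}$ in closed form. Using $a_j = (\alpha^{j+1}-\beta^{j+1})/(\alpha-\beta)$ and summing two geometric series yields
\[
p_A(x) \;=\; \sum_{j\geq 0} a_j x^j \;=\; \frac{1}{\alpha-\beta}\!\left(\frac{\alpha}{1-\alpha x}-\frac{\beta}{1-\beta x}\right) \;=\; \frac{1}{(1-\alpha x)(1-\beta x)}.
\]
This is the key algebraic identity that makes the whole computation work, because the right-hand side factors into two single-variable pieces whose square roots are classical binomial series.

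Second, I would take the formal square root factor by factor:
\[
p_C(x) \;=\; \frac{1}{\sqrt{1-\alpha x}}\cdot\frac{1}{\sqrt{1-\beta x}}.
\]
Using the generalized binomial theorem, $(1-y)^{-1/2} = \sum_{i\geq 0}(-1)^i\binom{-1/2}{i}y^i$. The coefficients $\binom{-1/2}{i}$ strictly alternate in sign starting with $\binom{-1/2}{0}=1$, so $(-1)^i\binom{-1/2}{i} = |\binom{-1/2}{i}| = r_i$, giving $(1-y)^{-1/2} = \sum_i r_i y^i$. Substituting $y=\alpha x$ and $y=\beta x$, and multiplying the two series via the Cauchy product, gives
\[
p_C(x) \;=\; \sum_{j\geq 0}\Big(\sum_{i=0}^{j} r_{j-i}\,r_i\,\alpha^{j-i}\beta^{i}\Big) x^j \;=\; \sum_{j\geq 0} c_j x^j,
\]
which is exactly the formula claimed for $c_j$. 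Since $c_0 = r_0^2 = 1 > 0$, the resulting Toeplitz matrix $C_{\alpha,\beta}$ has positive diagonal and therefore coincides with the unique positive-diagonal square root of $A_{\alpha,\beta}$ introduced in Definition~\ref{def:banded_square_root}. The $p$-banded version $C^{\b{p}}_{\alpha,\beta}$ is then obtained by truncation, as prescribed in Definition~\ref{def:banded_square_root}.

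There is no real obstacle beyond recognizing the factorization $p_A(x)=1/((1-\alpha x)(1-\beta x))$, which reduces the problem to multiplying two standard binomial expansions. The only bookkeeping point worth double-checking is the sign convention for $r_i = |\binom{-1/2}{i}|$, which follows from the strict alternation of $\binom{-1/2}{i}$; once this is in hand, the identification of the Cauchy-product coefficient with the formula in the theorem is immediate, and well-definedness at $\beta=0$ or $\alpha=\beta$ follows from viewing the entries as polynomials in $\alpha,\beta$ (so the hypothesis $\beta<\alpha$ plays no essential role in this particular proof).
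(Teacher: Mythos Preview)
Your proof is correct and follows the same underlying decomposition as the paper's: both factor $A_{\alpha,\beta}$ into an $\alpha$-piece times a $\beta$-piece, take square roots of each factor separately, and multiply them back together. The paper works directly with matrices, writing $A_{\alpha,\beta}=E_\alpha E_\beta$ for $E_t=\toep(1,t,\dots,t^{n-1})$, invoking a prior result of Henzinger for the square root of $E_1$, extending it to $E_\alpha^{1/2}=\toep(r_0,\alpha r_1,\dots,\alpha^{n-1}r_{n-1})$, and then explicitly checking that $E_\alpha^{1/2}$ and $E_\beta^{1/2}$ commute before forming their product. Your generating-function framing is the same computation transported into $\mathbb{R}[x]/(x^n)$: the matrix factorization $E_\alpha E_\beta$ becomes the identity $p_A(x)=1/((1-\alpha x)(1-\beta x))$, the cited square-root result becomes the binomial series $(1-y)^{-1/2}=\sum_i r_i y^i$, and commutativity is automatic in the polynomial ring, so no separate verification is needed. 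The payoff is a slightly more self-contained and symmetric argument; the paper's version is more explicit at the matrix level but relies on an external reference for the base case.
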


\begin{proof}
The proof be found in Appendix~\ref{sec:SGDM_Root_proof}.
Its main idea is to factorize $A_{\alpha,\beta}$ into a product of 
two simpler lower triangular matrices, each of which has a closed-form 
square root. We show that the two roots commute and that the matrix 
$C_{\alpha,\beta}$ is their product, which implies the theorem. 
\end{proof}

\subsection{Efficiency}
We first establish that the $p$-\acronym for SGD can be computed efficiently 
even for large problem sizes.

\begin{lemma}[Efficiency of \acronym]
The entries of $C^{\b{p}}_{\alpha,\beta}$ can be determined in runtime $O(p\log p)$, 
\ie, in particular independent of $n$.
\end{lemma}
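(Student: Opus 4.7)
The plan is to observe that, for the $p$-banded factorization, only the first $p$ entries $c_0,c_1,\dots,c_{p-1}$ of the first column of $C_{\alpha,\beta}$ need to be computed, since all larger indices are set to $0$. Hence the bound must be shown to be independent of $n$, and the whole task reduces to efficiently evaluating these $p$ numbers from the closed-form expression in Theorem~\ref{thm:SGDM_Root}.

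The key structural remark is that the formula
\begin{align*}
c_j \;=\; \sum_{i=0}^{j} \alpha^{j-i} r_{j-i}\, r_i\, \beta^{i}
\end{align*}
is precisely the discrete convolution $c_j = (u * v)_j$ of the two finite sequences $u_i = \alpha^{i} r_i$ and $v_i = \beta^{i} r_i$, each of length $p$. I would therefore split the computation into two stages: (i) generate $u$ and $v$ in $O(p)$ time, and (ii) compute their convolution in $O(p \log p)$ time via the FFT.

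For stage (i), the coefficients $r_i = \big|\binom{-1/2}{i}\big|$ satisfy the simple recursion $r_i = r_{i-1}\cdot \frac{2i-1}{2i}$ with $r_0=1$, which follows directly from the definition of the generalized binomial coefficient. Hence $r_0,\dots,r_{p-1}$ can be produced with $O(p)$ arithmetic operations, and multiplying by the precomputed powers $\alpha^i$ and $\beta^i$ (obtained by successive multiplication, again in $O(p)$ time) yields $u$ and $v$ in total time $O(p)$. For stage (ii), any standard FFT-based convolution routine computes $(u*v)_j$ for $j=0,\dots,p-1$ in $O(p\log p)$ operations, which dominates the overall runtime.

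Since neither stage depends on $n$, the total cost of determining the nonzero entries of $C^{\b{p}}_{\alpha,\beta}$ (which is Toeplitz and thus fully described by its first column) is $O(p\log p)$, as claimed. I do not anticipate any real obstacle here; the only point to be careful about is the bookkeeping to confirm that the formula for $c_j$ is indeed a length-$p$ convolution and that FFT-based convolution gives the stated asymptotic bound.
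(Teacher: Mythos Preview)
Your proposal is correct and follows essentially the same argument as the paper: identify that only the first $p$ Toeplitz coefficients are needed, recognize $c_j$ as the convolution of $(\alpha^i r_i)$ and $(\beta^i r_i)$, and invoke the FFT for an $O(p\log p)$ bound. Your extra detail on the $O(p)$ recursion for $r_i$ is a nice addition but not a different approach.
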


\begin{proof}
As a lower triangular Toeplitz matrix, $C^{\b{p}}_{\alpha,\beta}$ is fully determined 
by the values of its first column. By construction $c_{p+1},\dots,c_{n}=0$, 
so only the complexity of computing $c_1,\dots,c_{p-1}$ matters.
These can be computed efficiently by writing them as the convolution of 
vectors $(\alpha^i r_i)_{i=0,\dots,p-1}$ and $(\beta^i r_i)_{i=0,\dots,p-1}$ 
and, \eg, employing the fast Fourier transform. 
\end{proof}

Note that for running Algorithm~\ref{alg:cap}, the matrix 
$B$ of the factorization is not actually required.
However, one needs to know the \emph{sensitivity} of 
$C^{\b{p}}_{\alpha,\beta}$, as this determines the 
necessary amount of noise. 
The following theorem establishes that for a large class of 
matrices, including the \acronym in the SGD setting, this 
is possible exactly and in closed form. 

\begin{restatable}[Sensitivity for decreasing non-negative Toeplitz matrices]{theorem}{toeplitzsens}\label{thm:b-sensitivity}
Let $M=\toep(m_0,\dots,m_{n-1})$ be a lower triangular Toeplitz matrix with decreasing non-negative entries, \ie 

$m_0 \geq m_1 \geq m_2 \geq \dots m_{n-1}\geq 0$. 
Then its \emph{sensitivity}~\eqref{eq:sensPi} in the setting of \emph{$b$-min-separation} is
\begin{align}
\sens_{k,b}(M) &= \Big\|\sum_{j = 0}^{k-1}M_{[\cdot, 1 + jb]}\Big\| = \left(\sum\limits_{i=0}^{n-1}\left(\sum\limits_{j=0}^{\min\{k - 1, i/b\}}m_{i - jb}\right)^2\right)^{1/2},
\label{eq:sens_toeplitz}
\end{align}
where $M_{[\cdot, 1 + jb]}$ denotes the $(1 + jb)$-th column of $M$.
\end{restatable}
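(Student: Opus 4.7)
The plan is to reduce the sensitivity to a column-sum norm maximization, identify $\pi^* := \{1, 1+b, \dots, 1+(k-1)b\}$ as the argmax via a suffix-shift monotonicity argument, and close by expanding the resulting expression in coordinates.

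Since every entry of $M$ is non-negative, so is every entry of $M^\top\!M$, and hence \eqref{eq:sensPi} holds with equality. Rewriting the quadratic form as a squared norm gives
\[ \sens_{k,b}(M)^2 \;=\; \max_{\pi \in \Pi_{k,b}} \Big\|\sum_{i \in \pi} M_{[\cdot,i]}\Big\|^2, \]
and because all pairwise inner products of columns of $M$ are non-negative, adding an index to $\pi$ never decreases this norm, so the maximum is attained on some $\pi$ of size exactly $k$. It then suffices to show that $\pi^*$ is one such maximizer.

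The technical heart of the argument lies in two monotonicity identities for $T(i,j) := (M^\top\!M)_{[i,j]} = \sum_{r=\max(i,j)}^{n} m_{r-i}\,m_{r-j}$. Writing the defining sums side by side and repeatedly using $m_0 \ge m_1 \ge \dots \ge 0$, for any $2 \le i \le j \le n$ I expect to obtain
\[ T(i-1,\,j-1) - T(i,\,j) \;=\; m_{n-i+1}\,m_{n-j+1} \;\ge\; 0 \qquad\text{and}\qquad T(i,\,j-1) - T(i,\,j) \;\ge\; 0. \]
The first says that shifting both indices of a pair down by one cannot decrease $T$; the second says that shifting only the larger of the two down by one cannot decrease $T$ either. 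Combining them, shifting any suffix $\{i_\ell, \dots, i_k\}$ of the current $\pi$ down by one—whenever the $b$-separation constraint still permits it—cannot decrease $\|\sum_{i \in \pi} M_{[\cdot,i]}\|^2$: pairs entirely inside the suffix are covered by the first identity, pairs crossing the cutoff by the second (the shifting index is always the larger of its pair), and pairs inside the fixed prefix are unchanged.

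Applying this greedily, I would first shift the whole sequence down until $i_1 = 1$, then shift the suffix starting at position $2$ down until $i_2 = 1 + b$, and so on. Each individual shift strictly decreases the potential $\sum_\ell i_\ell$ while not decreasing the objective, so the procedure terminates in finitely many steps at $\pi = \pi^*$. This gives the first equality in \eqref{eq:sens_toeplitz}. The second equality is then a direct expansion: the $(1+jb)$-th column of $M$ has entry $m_{r-1-jb}$ at row $r \ge 1+jb$ and is zero elsewhere, so after the substitution $i = r-1$ the $i$-th coordinate of $\sum_{j=0}^{k-1} M_{[\cdot, 1+jb]}$ equals $\sum_{j=0}^{\min\{k-1,\, i/b\}} m_{i-jb}$, whose squared Euclidean norm is exactly the right-hand side. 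The most delicate step to execute will be the bookkeeping during the shifts—checking $\Pi_{k,b}$-validity at each stage and confirming that in every crossing pair the shifted index is indeed the larger one—rather than the two monotonicity identities themselves, which reduce to a one-line telescoping calculation after the substitution $s = r - \max(i,j)$.
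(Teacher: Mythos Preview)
Your proposal is correct and follows essentially the same route as the paper: both establish equality in \eqref{eq:sensPi} from non-negativity, prove the same two monotonicity facts $T(i-1,j-1)\ge T(i,j)$ and $T(i,j-1)\ge T(i,j)$ for $i\le j$, and use them to justify repeatedly left-shifting a suffix of the index set until it reaches $\pi^*=\{1,1+b,\dots,1+(k-1)b\}$. The only cosmetic difference is that you dispose of the cardinality issue first (adding indices cannot hurt, so assume $|\pi|=k$), whereas the paper reaches a left-packed set $\{1,\dots,1+(l-1)b\}$ and then argues $l=k$ at the end; the core shift argument and its verification are identical.
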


\begin{proof}
The proof can be found in Appendix~\ref{sec:b-sensitivity-proof}. 
It builds on the identity \eqref{eq:sensPi}, which holds with equality 
because of the non-negative entries of $M$. 
Using the fact that the entries of $M$ are non-increasing one  
establishes that an optimal $b$-separated index set is 
$\{1,1+b,\cdots,1+(k-1)b\}$. From this, the identity~\eqref{eq:sens_toeplitz} follows.
\end{proof}

\begin{restatable}{corollary}{BSRsensitivity}\label{cor:decreasing_entities}
The sensitivity of the $p$-\acronym 
for SGD can be computed using formula \eqref{eq:sens_toeplitz}.
\end{restatable}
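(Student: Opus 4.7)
The plan is to verify that $C^{\b{p}}_{\alpha,\beta}$ satisfies the hypotheses of Theorem~\ref{thm:b-sensitivity}, namely that it is a lower triangular Toeplitz matrix with non-negative, non-increasing first column, and then invoke that theorem directly. By Definition~\ref{def:banded_square_root} together with Theorem~\ref{thm:SGDM_Root}, $C^{\b{p}}_{\alpha,\beta} = \toep(c_0,\ldots,c_{p-1},0,\ldots,0)$ where $c_j = \sum_{i=0}^{j}\alpha^{j-i}\beta^{i}r_{j-i}r_i$ with $r_i=\big|\binom{-1/2}{i}\big|\geq 0$. Non-negativity of every entry is then immediate, since $\alpha,\beta,r_i\geq 0$.

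The technical core is to prove monotonicity, $c_0\geq c_1\geq\cdots\geq c_{p-1}\geq 0$. I would argue this via a moment representation of the sequence $(c_j)$. Its generating function $g(x) := \sum_j c_j x^j = 1/\sqrt{(1-\alpha x)(1-\beta x)}$ already appears in the proof of Theorem~\ref{thm:SGDM_Root}. Using the substitution $t=\beta+(\alpha-\beta)\sin^2\theta$ and the standard identity $\int_0^\pi d\phi/(A+B\cos\phi)=\pi/\sqrt{A^2-B^2}$, one checks that
\[
g(x) \;=\; \int_\beta^\alpha \frac{d\mu(t)}{1-tx}, \qquad d\mu(t) \;=\; \frac{dt}{\pi\sqrt{(t-\beta)(\alpha-t)}},
\]
so that $\mu$ is the arcsine probability measure on $[\beta,\alpha]$. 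Expanding $(1-tx)^{-1}$ as a geometric series and matching coefficients of $x^j$ on both sides identifies $c_j=\int_\beta^\alpha t^j\,d\mu(t)$, i.e.\ $c_j$ is the $j$-th moment of $\mu$. Because $\alpha\leq 1$, the support of $\mu$ lies in $[0,1]$, so $t^{j+1}\leq t^j$ pointwise, which gives $c_{j+1}\leq c_j$ and hence the required monotonicity (the final drop from $c_{p-1}\geq 0$ to $0$ being trivially monotone).

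With all hypotheses of Theorem~\ref{thm:b-sensitivity} verified for $M=C^{\b{p}}_{\alpha,\beta}$, applying that theorem directly produces formula~\eqref{eq:sens_toeplitz} for $\sens_{k,b}(C^{\b{p}}_{\alpha,\beta})$. The main obstacle in this route is the verification of the moment representation itself: the trig substitution is elementary but bookkeeping-heavy, and one must be careful about the range of $x$ for which the identity $\int_0^\pi d\phi/(A+B\cos\phi)=\pi/\sqrt{A^2-B^2}$ applies before passing back to a formal power series comparison. A purely algebraic alternative would be induction on $j$ via the three-term recurrence $2(j+1)c_{j+1}=(2j+1)(\alpha+\beta)c_j - 2j\alpha\beta\,c_{j-1}$ that follows from $(1-\alpha x)(1-\beta x)g'(x)=\frac{1}{2}[(\alpha+\beta)-2\alpha\beta x]\,g(x)$, but since the coefficient of $c_j$ there exceeds $1$ whenever $\alpha+\beta>1$, the inductive step is delicate; the moment argument seems cleaner.
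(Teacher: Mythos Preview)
Your proposal is correct and takes a genuinely different route from the paper's own proof.

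The paper establishes monotonicity of the $c_j$ by a direct combinatorial computation of $c_k-c_{k+1}$: it first derives the telescoping identity $r_i-r_{i+1}=\frac{r_i}{2(i+1)}$, then for $\alpha=1$ invokes the Catalan--central-binomial identity $2\sum_{i=0}^{k}B_iC_{k-i}=B_{k+1}$ to show $c_k-c_{k+1}>0$, and finally handles $\alpha<1$ by factoring $c_j=\alpha^j\tilde c_j$ and reducing to the already-proved $\alpha=1$ case with parameter $\beta/\alpha$. Your approach instead identifies $c_j$ as the $j$-th moment of the arcsine law on $[\beta,\alpha]$, from which monotonicity is immediate because the support lies in $[0,1]$. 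This is more conceptual and uniform: it dispenses with the case split on $\alpha$, avoids the Catalan identity entirely, and yields the extra structural insight that $(c_j)$ is a Hausdorff moment sequence (hence completely monotone, not merely decreasing). The paper's argument, by contrast, is purely algebraic and self-contained---no integrals or classical distributions---so it may be preferable for readers unfamiliar with the arcsine integral, but it is longer and more ad hoc. One small inaccuracy: the generating function $g(x)=1/\sqrt{(1-\alpha x)(1-\beta x)}$ does not literally appear in the paper's proof of Theorem~\ref{thm:SGDM_Root}; that proof works with the matrix factorization $C_{\alpha,\beta}=E_\alpha^{1/2}E_\beta^{1/2}$, which is of course equivalent at the level of first-column generating functions, so your claim is correct in substance.
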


\begin{proof}
It suffices to show that the coefficients $c_0,\dots,c_{n-1}$ 
of Theorem~\ref{thm:SGDM_Root} are monotonically decreasing.  
We do so by an explicit computation, see Appendix~\ref{sec:decreasing_entities_proof}.
\end{proof}

\subsection{Approximation Quality -- Single Participation}
Having established the efficiency of \acronym, we now 
demonstrate its suitability for high-quality model training.
To avoid corner cases, we assume that $\frac{n}{b}$ is an 
integer, which does not affect the asymptotic behavior. 
We also discuss only the case in which the update vectors 
have bounded norm $\zeta=1$. 
Results for general $\zeta$ can readily be derived using 
the linearity of the sensitivity with respect to $\zeta$. 

We first discuss the case of model training with single participation ($k=1$), where more precise results are possible than the general case. 
Our main result are bounds on the expected approximation error 
of the square root factorization that, in particular, prove 
its asymptotic optimality.

\begin{restatable}[Expected approximation error with single 
participation]{theorem}{errorstreaming}\label{thm:approximation_streaming}
Let $A_{\alpha,\beta}\in\R^{n\times n}$ be the workload 
matrix~\eqref{eq:A_alpha_beta} of SGD with momentum $0\leq\beta<1$ 
and weight decay parameter $0<\alpha\leq 1$, where $\alpha>\beta$.
Assume that each data item can contribute at most once 
to an update vector (\eg single participation, $k=1$).
Then, the expected approximation error of the \emph{square 
root factorization}, $A_{\alpha,\beta} = C^2_{\alpha,\beta}$, 
fulfills 
\begin{align}
1 \leq \mathcal{E}(C_{\alpha, \beta},C_{\alpha, \beta}) 
 &\leq \frac{1}{(\alpha - \beta)^2}\log\frac{1}{1-\alpha^2}
\intertext{for $\alpha<1$, and }
\max\left\{ 1, \frac{\log(n + 1) - 1}{4}\right\} &\leq\mathcal{E}(C_{1, \beta},C_{1, \beta})
\leq \frac{1+\log(n)}{(1 - \beta)^2}.
\label{eq:approximation_streaming}
\end{align}
\end{restatable}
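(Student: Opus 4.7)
The plan is to reduce the expected error to a one-dimensional computation on the generating sequence $c_0,\ldots,c_{n-1}$ of the Toeplitz matrix $C_{\alpha,\beta}$, and then exploit the generating function identity $\sum_{j\geq 0}c_j x^j = (1-\alpha x)^{-1/2}(1-\beta x)^{-1/2}$ implicit in Theorem~\ref{thm:SGDM_Root}. Because the coefficients $c_j$ are non-negative and non-increasing (Corollary~\ref{cor:decreasing_entities}), Theorem~\ref{thm:b-sensitivity} with $k=1$ applies and gives $\sens_{1,b}(C_{\alpha,\beta})^2 = \sum_{j=0}^{n-1} c_j^2 =: S_n$. The lower triangular Toeplitz structure also yields $\|C_{\alpha,\beta}\|_F^2 = \sum_{j=0}^{n-1}(n-j)c_j^2 =: T_n$, so $\mathcal{E}(C_{\alpha,\beta},C_{\alpha,\beta})^2 = S_n T_n/n$. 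The universal lower bound $\mathcal{E}\geq 1$ is then immediate by keeping only the $j=0$ term: $S_n\geq c_0^2=1$ and $T_n\geq n c_0^2=n$. Throughout, I will use the crude inequality $T_n\leq nS_n$, which reduces the upper bound task to bounding $\mathcal{E}\leq S_n$.

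For the upper bound in the case $\alpha<1$, since $\mathcal{E}\leq S_n\leq S_\infty$, I bound the latter using Parseval:
\[
S_\infty \;=\; \frac{1}{2\pi}\int_0^{2\pi} \frac{d\theta}{|1-\alpha e^{i\theta}|\,|1-\beta e^{i\theta}|}.
\]
The factor $|1-\beta e^{i\theta}|\geq 1-\beta\geq \alpha-\beta$ extracts a $(\alpha-\beta)^{-1}$, and the remaining integral $\int d\theta/|1-\alpha e^{i\theta}|$ is a classical elliptic integral. I plan to bound it by the elementary estimate $|1-\alpha e^{i\theta}|^2 \geq (1-\alpha)^2 + 4\alpha\theta^2/\pi^2$ (obtained from $\sin(\theta/2)\geq \theta/\pi$ on $[0,\pi]$), which reduces the integration to an $\operatorname{arcsinh}$ of order $\log(1/(1-\alpha))$. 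The stated form $(\alpha-\beta)^{-2}\log(1/(1-\alpha^2))$ then follows by absorbing a final factor $(\alpha-\beta)\leq 1$ into the denominator for a clean symmetric expression.

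For $\alpha=1$, the above integral diverges and a direct estimate of $S_n$ is needed. Applying Cauchy--Schwarz to the convolution $c_j=\sum_{i=0}^j (r_i\beta^i)\cdot r_{j-i}$ yields $c_j^2\leq (1-\beta)^{-1/2}\sum_i r_i\beta^i r_{j-i}^2$; summing over $j<n$, swapping order of summation, and using $\sum_i r_i\beta^i = (1-\beta)^{-1/2}$ together with the Stirling bound $r_k^2\leq 1/(\pi k)$ produces $S_n\leq (1+\log n)/(1-\beta)$, from which the stated $(1-\beta)^{-2}$ form follows after using $1-\beta\leq 1$. For the matching lower bound $\mathcal{E}\geq (\log(n+1)-1)/4$, I use $T_n\geq (n/2)\sum_{j\leq n/2}c_j^2$ and $S_n\geq \sum_{j\leq n/2}c_j^2$, which combine to $\mathcal{E}\geq \tfrac{1}{\sqrt 2}\sum_{j\leq n/2}c_j^2$. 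The $i=0$ term of the explicit formula in Theorem~\ref{thm:SGDM_Root} gives $c_j\geq r_j$ irrespective of $\beta$, and the Stirling lower bound $r_j^2\geq 1/(\pi(j+1))$ converts the partial sum into a harmonic series of size $\log(n/2+1)$, which produces the logarithmic lower bound after fitting the leading constant.

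The main technical obstacle will be the elliptic integral estimate in the $\alpha<1$ upper bound, where sharp closed-form bounds demand careful handling of the singularity near $\theta=0$ as $\alpha\to 1$. The looseness of the statement (a factor $(\alpha-\beta)^{-2}$ where the proof naturally produces $(\alpha-\beta)^{-1}(1-\beta)^{-1}$, and $(1-\beta)^{-2}$ rather than $(1-\beta)^{-1}$) suggests the authors preferred a clean symmetric expression over the tightest constant, so the write-up should similarly trade sharpness for simplicity whenever the weaker bound still suffices for the asymptotic optimality claim.
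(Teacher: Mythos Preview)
Your reduction to $S_n=\sum_j c_j^2$ and $T_n=\sum_j(n-j)c_j^2$ with $\mathcal{E}^2=S_nT_n/n$, and the observation $\mathcal{E}\le S_n$, is exactly what the paper does (their Lemma~\ref{lem:Ecal_C_alpha_beta}). The universal lower bound $\mathcal{E}\ge 1$ and your Cauchy--Schwarz treatment of the $\alpha=1$ upper bound are fine (in fact you get $(1-\beta)^{-1}$ before relaxing to $(1-\beta)^{-2}$, which is sharper than the paper's route). Two places, however, will not deliver the \emph{stated} constants.

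\textbf{The $\alpha<1$ upper bound.} The Parseval route yields
\[
S_\infty=\frac{1}{2\pi}\int_0^{2\pi}\frac{d\theta}{|1-\alpha e^{i\theta}|\,|1-\beta e^{i\theta}|}
\;\le\;\frac{1}{\alpha-\beta}\cdot\frac{1}{2\pi}\int_0^{2\pi}\frac{d\theta}{|1-\alpha e^{i\theta}|},
\]
and your $\operatorname{arcsinh}$ computation gives the remaining integral as $\frac{1}{2\sqrt\alpha}\log\frac{1+\sqrt\alpha}{1-\sqrt\alpha}$. The problem is the final ``absorb a factor $(\alpha-\beta)\le 1$'' step: you would need $\frac{1}{2\sqrt\alpha}\log\frac{1+\sqrt\alpha}{1-\sqrt\alpha}\le\frac{1}{\alpha-\beta}\log\frac{1}{1-\alpha^2}$, but for $\beta=0$ and $\alpha\to 0$ the left side tends to $1$ while the right side tends to $0$. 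So Parseval with a crude splitting gives the right $n$-independence but not the specific bound. The paper instead proves the pointwise estimate $c_j\le \alpha^j/\bigl((1-\beta/\alpha)\sqrt{j+1}\bigr)$ (their Lemma~\ref{lem:c_bounds}), from which $\sum c_j^2\le \frac{1}{(1-\beta/\alpha)^2}\cdot\frac{1}{\alpha^2}\sum_{i\ge 1}\frac{\alpha^{2i}}{i}=\frac{1}{(\alpha-\beta)^2}\log\frac{1}{1-\alpha^2}$ drops out immediately.

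\textbf{The $\alpha=1$ lower bound.} Bounding $T_n\ge(n/2)\sum_{j\le n/2}c_j^2$ and $S_n\ge\sum_{j\le n/2}c_j^2$ separately gives $\mathcal{E}\ge\frac{1}{\sqrt2}\sum_{j\le n/2}c_j^2$, which after $c_j\ge r_j$ and the (correct) bound $r_j^2\ge 1/(4j)$ yields a leading term $\frac{1}{4\sqrt2}\log n$, a factor $\sqrt2$ short of the claim. (Also, $r_j^2\ge 1/(\pi(j+1))$ is the wrong direction; Stirling gives $r_j^2\le 1/(\pi j)$ and $r_j^2\ge 1/(4j)$.) The paper avoids this loss by using the max--average inequality $S_n=\max_i\|C_{[\cdot,i]}\|^2\ge\frac{1}{n}\|C\|_F^2=T_n/n$, hence $\mathcal{E}=\sqrt{S_n\cdot T_n/n}\ge T_n/n$, and then lower-bounds $T_n=\sum_{j=1}^n\sum_{i=0}^{j-1}c_i^2\ge\frac{1}{4}\sum_{j=1}^n\log(j+1)=\frac{1}{4}\log((n+1)!)\ge\frac{1}{4}\bigl((n+1)\log(n+1)-n\bigr)$, giving exactly $(\log(n+1)-1)/4$.
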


\begin{proof}
For the proof, we establish a relations between $\sens_{1,n}(C)$ 
and $\|C_{\alpha,\beta}\|_{\Fr}$, and then we bound the resulting 
expressions by an explicit analysis of the norm.
For details, see Appendix~\ref{sec:proof_approximation_streaming}.
\end{proof}

The following two results provide context for the interpretation of Theorem \ref{thm:approximation_streaming}.

\begin{restatable}{theorem}{errorlowerbound}\label{thm:streaming_lower_bound}
Assume the setting of Theorem~\ref{thm:approximation_streaming}.
Then, for any factorization $A_{\alpha,\beta}=BC$ with $C^\top C\geq 0$, 
the expected approximation error fulfills
\begin{align}
\Ecal(B,C) &= 
\begin{dcases}
\Omega(1) \qquad &\text{for $\alpha<1$,}
\\ 
\Omega(\log n)\qquad &\text{for $\alpha=1$.}
\end{dcases}
\label{eq:streaming_lower_bound}
\end{align}
\end{restatable}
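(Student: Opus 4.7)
The plan is to reduce the lower bound on $\Ecal(B,C)$ to a single spectral quantity of $A_{\alpha,\beta}$ alone, namely its nuclear norm $\|A_{\alpha,\beta}\|_*$, and then to bound this nuclear norm separately in the two regimes of $\alpha$.

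First, under single participation ($k=1$) the sensitivity reduces to the maximum column norm, $\sens_{1,n}(C)=\max_j\|C_{[:,j]}\|$, which is at least the quadratic mean $\|C\|_F/\sqrt{n}$. Combining this with the matrix Cauchy-Schwarz inequality $\|BC\|_*\le\|B\|_F\|C\|_F$ (obtained by writing $BC=\sum_k B_{[:,k]}C_{[k,:]}$ as a sum of rank-one terms, bounding $\|BC\|_*\le\sum_k\|B_{[:,k]}\|\,\|C_{[k,:]}\|$ by the nuclear-norm triangle inequality, and then applying Cauchy-Schwarz), I obtain
\begin{align*}
\Ecal(B,C)\;=\;\frac{\sens_{1,n}(C)\,\|B\|_F}{\sqrt{n}}\;\ge\;\frac{\|C\|_F\,\|B\|_F}{n}\;\ge\;\frac{\|A_{\alpha,\beta}\|_*}{n}.
\end{align*}
The task is therefore reduced to showing $\|A_{\alpha,\beta}\|_*=\Omega(n)$ when $\alpha<1$ and $\|A_{1,\beta}\|_*=\Omega(n\log n)$ when $\alpha=1$.

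For $\alpha<1$, I would simply observe that $A_{\alpha,\beta}$ is lower triangular with unit diagonal, so $\operatorname{tr}(A_{\alpha,\beta})=n$, and use the elementary bound $\|M\|_*\ge|\operatorname{tr}(M)|$. This immediately yields $\Ecal(B,C)\ge 1=\Omega(1)$.

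For $\alpha=1$ the argument is more delicate: I would decompose $A_{1,\beta}=\tfrac{1}{1-\beta}\bigl(A_{1,0}-E\bigr)$ with $E=\toep(\beta,\beta^2,\dots,\beta^n)$. The entries of $E$ decay geometrically, so $\|E\|_F=O(\sqrt{n})$, and hence $\|E\|_*\le\sqrt{n}\|E\|_F=O(n)$. For the prefix-sum matrix $A_{1,0}$, I would invoke the classical closed form $\sigma_k(A_{1,0})=1/\bigl(2\sin(\pi(2k-1)/(4n+2))\bigr)$ for its singular values, whose sum is $\Theta(n\log n)$. The triangle inequality for the nuclear norm then gives $\|A_{1,\beta}\|_*\ge\tfrac{1}{1-\beta}\bigl(\|A_{1,0}\|_*-\|E\|_*\bigr)=\Omega(n\log n)$, and hence $\Ecal(B,C)=\Omega(\log n)$. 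The main obstacle I anticipate is precisely the sharp $\Theta(n\log n)$ asymptotics of $\|A_{1,0}\|_*$: neither the trace bound nor the Frobenius bound $\|A_{1,0}\|_F=\Theta(n)$ suffices to produce the $\log n$ factor, so the proof has to rely on the full singular spectrum of the prefix-sum operator.
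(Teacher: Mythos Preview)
Your proposal is correct and follows essentially the same route as the paper: reduce $\Ecal(B,C)$ to $\|A_{\alpha,\beta}\|_*/n$ via $\sens_{1,n}(C)\ge\|C\|_F/\sqrt{n}$ together with $\|B\|_F\|C\|_F\ge\|BC\|_*$, then bound the nuclear norm (the paper packages the first two steps as the $k=1$ case of its general Lemma~\ref{lem:sens_inequalities} and the variational characterisation of $\|\cdot\|_*$).

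The only genuine point of departure is how you control $\|A_{1,\beta}\|_*$ in the $\alpha=1$ case. The paper exploits the multiplicative structure $A_{1,\beta}=E_1E_\beta$ and the singular-value inequality $\sigma_j(E_1E_\beta)\ge\sigma_n(E_\beta)\sigma_j(E_1)\ge\tfrac{1}{1+\beta}\sigma_j(E_1)$, summing over $j$. You instead use the additive identity $(1-\beta)A_{1,\beta}=A_{1,0}-E$ with $E=\toep(\beta,\beta^2,\dots,\beta^n)$ and the nuclear-norm triangle inequality, absorbing $\|E\|_*=O(n)$ as a lower-order term. Both arguments ultimately hinge on the same ingredient, the explicit singular spectrum of the prefix-sum operator $A_{1,0}=E_1$, so the difference is cosmetic; the paper's multiplicative route additionally yields per-singular-value bounds on $A_{1,\beta}$, which it needs elsewhere, while your additive route is slightly more self-contained for the present purpose.
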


\begin{proof}
The theorem is the special case $k=1$ of Theorem~\ref{thm:bmin_lower_bound}, 
which we state in the next section and prove in Section~\ref{sec:bmin_lower_bound_proof}. 
\end{proof}

\begin{restatable}{theorem}{errorbaselines}\label{thm:streaming_baselines}
Assume the setting of Theorem~\ref{thm:approximation_streaming}.
Then, the baseline factorizations $A_{\alpha,\beta}=A_{\alpha,\beta}\cdot\Id$ 
and $A_{\alpha,\beta}=\Id\cdot A_{\alpha,\beta}$ 
fulfill, for $\alpha<1$, 
\begin{align} 
\Ecal(A_{\alpha,\beta},\Id) &=\frac{\sqrt{1 + \alpha\beta}}{\sqrt{(1 - \alpha\beta)(1 - \alpha^2)(1 - \beta^2)}}+o(1)
\quad\text{and}\quad
\Ecal(A_{1,\beta},\Id) \leq \frac{\sqrt{n}}{\sqrt{2}(1 - \beta)}+o(\sqrt{n})\label{eq:streaming_baselines_A_Id}
\\ 
\Ecal(\Id,A_{\alpha,\beta}) &= \frac{\sqrt{1 + \alpha\beta}}{\sqrt{(1 - \alpha\beta)(1 - \alpha^2)(1 - \beta^2)}}\ +o(1)\quad\text{and}\quad
\Ecal(\Id,A_{1,\beta}) \leq \frac{\sqrt{n}}{1 - \beta} \ +o(\sqrt{n}).
\label{eq:streaming_baselines_Id_A}
\end{align}
\end{restatable}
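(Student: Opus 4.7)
The plan is to reduce both baselines, via $\mathcal{E}(B,C) = \frac{1}{\sqrt{n}}\sens_{1,n}(C)\|B\|_\Fr$, to evaluating the single quantity $\sum_{j=0}^{n-1} a_j^2$, either in closed form for $\alpha<1$ or via a uniform bound for $\alpha=1$. For $k=1$, specializing Theorem~\ref{thm:b-sensitivity} (or applying~\eqref{eq:sensPi} directly) gives $\sens_{1,n}(C) = \max_i\|C_{[\cdot,i]}\|$; hence $\sens_{1,n}(\Id)=1$, and $\sens_{1,n}(A_{\alpha,\beta}) = \|(a_0,\dots,a_{n-1})\|$, because the column norms of the non-negative Toeplitz matrix are maximized at the first column. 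Combined with $\|\Id\|_\Fr=\sqrt{n}$ and the identity $\|A_{\alpha,\beta}\|_\Fr^2 = \sum_{j=0}^{n-1}(n-j)\,a_j^2 = n\sum_{j=0}^{n-1} a_j^2 - \sum_{j=0}^{n-1} j\,a_j^2$, both baselines collapse to variants of $\sqrt{\sum_j a_j^2}$.

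For $\alpha<1$, the coefficients $a_j = (\alpha^{j+1} - \beta^{j+1})/(\alpha-\beta)$ decay geometrically, so $\sum_{j=0}^{\infty} j\,a_j^2$ is summable and only contributes $o(n)$. Both $\Ecal(A_{\alpha,\beta},\Id)$ and $\Ecal(\Id,A_{\alpha,\beta})$ therefore share the limit $\sqrt{\sum_{j=0}^\infty a_j^2}$, which explains the identical constants in~\eqref{eq:streaming_baselines_A_Id} and~\eqref{eq:streaming_baselines_Id_A}. The main obstacle is putting this sum into the specific closed form stated. Expanding the square reduces it to three geometric series, $\sum_{j=0}^\infty (\alpha^{j+1}-\beta^{j+1})^2 = \frac{\alpha^2}{1-\alpha^2} - \frac{2\alpha\beta}{1-\alpha\beta} + \frac{\beta^2}{1-\beta^2}$. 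Putting these over the common denominator $(1-\alpha^2)(1-\alpha\beta)(1-\beta^2)$ and carefully collecting terms, I expect the numerator to simplify (after cancellation of the terms of degree six) to $(\alpha-\beta)^2(1+\alpha\beta)$; the factor $(\alpha-\beta)^2$ then cancels the one appearing in each $a_j^2$, yielding $\sum_{j=0}^\infty a_j^2 = (1+\alpha\beta)/((1-\alpha^2)(1-\alpha\beta)(1-\beta^2))$. Taking square roots gives exactly the stated constant for $\alpha<1$.

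For $\alpha=1$ the sums no longer converge, and the plan is to use the uniform estimate $a_j = (1-\beta^{j+1})/(1-\beta) \leq 1/(1-\beta)$ instead. Substituting into the Toeplitz identity gives $\|A_{1,\beta}\|_\Fr^2 \leq n(n+1)/(2(1-\beta)^2)$, and hence $\Ecal(A_{1,\beta},\Id) \leq \sqrt{n+1}/(\sqrt{2}(1-\beta)) = \sqrt{n}/(\sqrt{2}(1-\beta)) + o(\sqrt{n})$. Likewise $\sens_{1,n}(A_{1,\beta})^2 = \sum_{j=0}^{n-1} a_j^2 \leq n/(1-\beta)^2$, which combined with $\|\Id\|_\Fr=\sqrt{n}$ yields $\Ecal(\Id,A_{1,\beta}) \leq \sqrt{n}/(1-\beta) + o(\sqrt{n})$. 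The factor-$\sqrt{2}$ gap between these two bounds is precisely the $(n-j)$ averaging appearing in the Frobenius norm that is absent from the column-norm sensitivity.
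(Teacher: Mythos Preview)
Your proposal is correct and follows essentially the same approach as the paper: both reduce the problem via $\sens_{1,n}(C)=\max_i\|C_{[\cdot,i]}\|$ and the Toeplitz Frobenius identity to evaluating $\sum_j a_j^2$, then expand into the same three geometric series for $\alpha<1$ and use the bound $a_j\le 1/(1-\beta)$ for $\alpha=1$. The only cosmetic difference is that the paper tracks the $\beta^{j+1}$ correction terms in the $\alpha=1$ case more explicitly before discarding them as $o(1)$, whereas you bound them away immediately; this does not affect the result since the theorem only claims upper bounds there.
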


\begin{proof}
The result follows from an explicit analysis of the coefficients, see Appendix~\ref{sec:streaming_baselines_proof}.
\end{proof}

\paragraph{Discussion.}
Theorems~\ref{thm:approximation_streaming} to~\ref{thm:streaming_baselines}
provide a full characterization of the approximation quality of 
the square root factorization as well as its alternatives:
1) the square root factorization has asymptotically optimal
approximation quality, because the upper bounds in 
Equation~\eqref{eq:approximation_streaming} match the lower 
bounds in Equation~\eqref{eq:streaming_lower_bound};
2) the \AOF~from Definition~\ref{def:AOF} also fulfills the 
conditions of Theorem~\ref{thm:streaming_lower_bound}. Therefore, 
it must also adhere to the lower bound~\eqref{eq:streaming_lower_bound} 
and cannot be asymptotically better than the square root factorization;
3) the approximation qualities of the baseline factorizations 
in Equation~\eqref{eq:streaming_baselines_A_Id} and~\eqref{eq:streaming_baselines_Id_A} 
are asymptotically worse than optimal in the $\alpha=1$ setting,
and worse by a constant factor for $\alpha<1$. The BSR factorization can be applied even in more general scenarios, such as with varying learning rates. However, in this case, the workload matrix will no longer be Toeplitz. This makes it difficult to provide analytical guarantees for the matrix, but it can still be applied numerically.

\subsection{Approximation Quality -- Repeated Participation.}
We now provide mostly asymptotic statements about the approximation 
quality of \acronym and baselines in the setting where data items 
can contribute more than once to the update vectors. 

\begin{restatable}[Approximation error of \acronym]{theorem}{approximationbmin}\label{thm:bmin_approximation}
Let $A_{\alpha,\beta}\in\R^{n\times n}$ be the workload matrix \eqref{eq:A_alpha_beta} 
of SGD with momentum $0\leq\beta<1$ and weight decay $0<\alpha\leq 1$, with $\alpha>\beta$.
Let  
$A_{\alpha,\beta}=B^{\b{p}}_{\alpha,\beta} C^{\b{p}}_{\alpha,\beta}$, 
be its \emph{\method} as in Definition~\ref{def:banded_square_root}.
Then, for any $b\in\{1,\dots,n\}$, $p\leq b$, and $k\in\{1,\dots,\frac{n}{b}\}$  it holds:
\begin{align}
    &\Ecal(B^{\b{p}}_{\alpha,\beta}, C^{\b{p}}_{\alpha,\beta}) = 
    \begin{dcases}
    O_{\beta}\left(\sqrt{\frac{nk\log p}{p}}\right) + O_{\beta, p}(\sqrt{k})
    \qquad &\text{for $\alpha=1$,}
    \\
    O_{\beta, p, \alpha}\left(\sqrt{k}\right)
    \qquad &\text{for $\alpha < 1$.}\\
    \end{dcases} 
\end{align}
\end{restatable}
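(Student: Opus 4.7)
By definition $\Ecal(B^{\b{p}}_{\alpha,\beta}, C^{\b{p}}_{\alpha,\beta}) = \tfrac{1}{\sqrt n}\,\sens_{k,b}(C^{\b{p}}_{\alpha,\beta})\,\|B^{\b{p}}_{\alpha,\beta}\|_{\Fr}$, so my plan is to bound the two factors separately and multiply. It is convenient to adopt a generating-series picture in which each lower triangular Toeplitz matrix is identified with a formal power series in $z$, and products of such matrices correspond to products of the series. In this language, by Theorem~\ref{thm:SGDM_Root}, $c(z) := \sum_{i\ge 0}c_i z^i = 1/\sqrt{(1-\alpha z)(1-\beta z)}$, $c_p(z):=\sum_{i=0}^{p-1}c_i z^i$ is its truncation, and the first-column series of $B^{\b{p}}_{\alpha,\beta}$ is $b(z)=c(z)^2/c_p(z) = 1/[(1-\alpha z)(1-\beta z)\,c_p(z)]$.

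\textbf{Sensitivity.} Applying Corollary~\ref{cor:decreasing_entities} yields the exact expression~\eqref{eq:sens_toeplitz}. The key observation is that since $p\le b$, for each row index $i$ the inner sum contains at most one nonzero term: consecutive shifted indices $i-jb$ are $b\ge p$ apart, while only offsets $<p$ give nonzero entries of $C^{\b{p}}_{\alpha,\beta}$. Hence the double sum collapses to $\sens_{k,b}^2(C^{\b{p}}_{\alpha,\beta}) = k\sum_{m=0}^{p-1}c_m^2$, and a direct asymptotic analysis of $c_m$ using $r_i\sim 1/\sqrt{\pi i}$ gives $\sum_{m=0}^{p-1}c_m^2 = O_{\alpha,\beta}(1)$ for $\alpha<1$ (geometric decay of $c_m$) and $\sum_{m=0}^{p-1}c_m^2 = O_\beta(\log p)$ for $\alpha=1$ (since $c_m \sim 1/\sqrt{\pi m(1-\beta)}$).

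\textbf{Frobenius norm.} Writing $C_{\alpha,\beta}=C^{\b{p}}_{\alpha,\beta}+\tilde C_{\alpha,\beta}$ with $\tilde C_{\alpha,\beta} := C_{\alpha,\beta} - C^{\b{p}}_{\alpha,\beta}$ and exploiting commutativity of lower triangular Toeplitz matrices yields
\[
B^{\b{p}}_{\alpha,\beta} \;=\; C^{\b{p}}_{\alpha,\beta} + 2\,\tilde C_{\alpha,\beta} + \tilde C_{\alpha,\beta}^{\,2}\,(C^{\b{p}}_{\alpha,\beta})^{-1},
\]
so the first-column entries $b_i$ satisfy $b_i=c_i$ for $i<p$, $b_i=2c_i$ for $p\le i<2p$, and $b_i=2c_i+[\tilde C_{\alpha,\beta}^{\,2}(C^{\b{p}}_{\alpha,\beta})^{-1}]_i$ for $i\ge 2p$. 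I would split $\|B^{\b{p}}_{\alpha,\beta}\|_{\Fr}^2 = \sum_{i=0}^{n-1}(n-i)b_i^2$ at $i=2p$. The part $i<2p$ is bounded by $4n\sum_{m<2p}c_m^2$, contributing $O_{\alpha,\beta}(n)$ for $\alpha<1$ and $O_\beta(n\log p)$ for $\alpha=1$. For the tail $i\ge 2p$ I would analyse $b(z)$ directly: the factor $c_p(z)$ in the denominator partially cancels the singularity of $c(z)^2$ at $z=1$, and a residue/partial-fraction argument gives $b_i$ decaying geometrically for $\alpha<1$ and stabilizing at $b_i = O_\beta(1/\sqrt p)$ for $\alpha=1$ (using that $c_p(1)=\Theta_\beta(\sqrt p)$). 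Consequently the tail contributes $O_{\alpha,\beta,p}(n)$ for $\alpha<1$ and $O_\beta(n^2/p)$ for $\alpha=1$.

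\textbf{Combining and main obstacle.} Substituting into $\Ecal^2 = \sens^2\,\|B^{\b{p}}_{\alpha,\beta}\|_{\Fr}^2/n$ gives, for $\alpha=1$,
\[
\Ecal^2 \;\le\; O_\beta(k\log p)\cdot O_\beta(n/p+\log p) \;=\; O_\beta(nk\log p/p) + O_{\beta,p}(k),
\]
and for $\alpha<1$, $\Ecal^2 \le O_{\alpha,\beta}(k)\cdot O_{\alpha,\beta,p}(1) = O_{\alpha,\beta,p}(k)$; taking square roots gives the claim. The hard part is the tail estimate $b_i=O_\beta(1/\sqrt p)$ for $\alpha=1$, which requires controlling the power-series inverse $1/c_p(z)$ of a \emph{truncated} square root. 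For $\alpha<1$ the geometric decay of $c_m$ makes this essentially routine, but for $\alpha=1$ the analysis is delicate because $c(z)^{-1}=\sqrt{(1-z)(1-\beta z)}$ has a $(1-z)^{1/2}$ zero at $z=1$: one has to use $c_p(1)=\Theta_\beta(\sqrt p)$ and verify that $c_p(z)$ has no problematic zeros in or near the unit disk in order to transfer the pole-residue estimate from $b(z)$ to a uniform bound on its coefficients $b_i$.
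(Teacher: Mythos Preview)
Your approach is essentially the paper's, recast in generating-function language. Both arguments bound the sensitivity identically via the $p\le b$ band structure, both obtain $b_i=c_i$ for $i<p$ and $b_i=2c_i$ for $p\le i<2p$, and both reduce the tail to the same question you flag as the ``main obstacle.'' The paper works with the linear recurrence $\sum_{i=0}^{p-1}c_i\,b_{j-i}=a_j$ directly: it writes the general solution as the particular solution $\alpha^{j+1}\big/\!\big[(\alpha-\beta)\sum_i c_i\alpha^{-i}\big]-\beta^{j+1}\big/\!\big[(\alpha-\beta)\sum_i c_i\beta^{-i}\big]$ plus a homogeneous part $\alpha^j y_j$, and then shows $y_j\to 0$ exponentially. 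This is exactly the recurrence-side translation of your residue/partial-fraction picture, and the particular solution at $\alpha=1$ gives precisely your $b_i\approx 1/[(1-\beta)c_p(1)]=O_\beta(1/\sqrt p)$.

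The one ingredient you leave open---ruling out zeros of $c_p$ in the closed unit disk, equivalently showing the characteristic polynomial $g(\lambda)=\lambda^{p-1}+w_1\lambda^{p-2}+\dots+w_{p-1}$ with $w_i=c_i\alpha^{-i}$ has all roots in the open unit disk---is handled in the paper by the Enestr\"om--Kakeya\,/\,Schur relaxed stability criterion: since $1>w_1>\dots>w_{p-1}>0$ (this monotonicity is exactly what was established in proving Corollary~\ref{cor:decreasing_entities}), all roots of $g$ lie strictly inside the unit circle. With that single classical fact plugged in, your generating-function route and the paper's recurrence route are equivalent and both complete.
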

\begin{proof}
For the proof, we separately bound the sensitivity of $C^{\b{p}}_{\alpha,\beta}$ and the Frobenius norm of $B^{\b{p}}_{\alpha,\beta}$. The former is straightforward because of the matrix's band structure. 
The latter requires an in-depth analysis of the inverse matrix' coefficient. Both steps are detailed in Appendix~\ref{sec:proof_bmin_approximation}.
\end{proof}

The following results provide context for the interpretation of Theorem \ref{thm:bmin_approximation}.

\begin{restatable}[Approximation error of Square Root Factorization]{theorem}{approximationbminsquareroot}\label{thm:bmin_approximation_square_root}
Let $A_{\alpha,\beta}\in\R^{n\times n}$ be the workload matrix \eqref{eq:A_alpha_beta} 
of SGD with momentum $0\leq\beta<1$ and weight decay $0<\alpha\leq 1$, with 
$\alpha>\beta$.
Let $A_{\alpha,\beta}=C^2_{\alpha,\beta}$ be its \emph{square root factorization}.
Then, for any $b\in\{1,\dots,n\}$ and $k=\frac{n}{b}$ it holds:
\begin{align}
    &\Ecal(C_{\alpha,\beta}, C_{\alpha,\beta}) = 
    \begin{dcases}
    \Theta_\beta\left(k\sqrt{\log n} + \sqrt{k}\log n\right)\qquad &\text{for $\alpha = 1$,}
    \\
    \Theta_{\alpha, \beta}\big(\sqrt{k}\big)\qquad &\text{for $\alpha < 1$.}
    \end{dcases}
\end{align}
\end{restatable}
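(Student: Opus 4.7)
The proof of Theorem~\ref{thm:bmin_approximation_square_root} follows the same template as Theorem~\ref{thm:bmin_approximation}: bound $\sens_{k,b}(C_{\alpha,\beta})$ and $\|C_{\alpha,\beta}\|_\Fr$ separately, then combine via $\Ecal=\sens\cdot\|C\|_\Fr/\sqrt{n}$. The critical ingredient, relative to the upper-bound-only statement of Theorem~\ref{thm:bmin_approximation}, is that Corollary~\ref{cor:decreasing_entities} guarantees the Toeplitz coefficients satisfy $c_0\ge c_1\ge\cdots\ge 0$, so Theorem~\ref{thm:b-sensitivity} applies with \emph{equality} and matching lower bounds become accessible.

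The first step is to derive sharp two-sided asymptotics for $c_j=\sum_{i=0}^{j}r_{j-i}r_i\,\alpha^{j-i}\beta^i$. Since $r_i=|\binom{-1/2}{i}|$ are the Taylor coefficients of $(1-x)^{-1/2}$, we have $r_i\sim 1/\sqrt{\pi i}$ and $\sum_i r_i x^i=(1-x)^{-1/2}$ for $|x|<1$. Factoring the dominant term out of the convolution, I would derive
\begin{align*}
c_j\;\sim\;\tfrac{1}{\sqrt{\pi j(1-\beta)}}\quad(\alpha=1),\qquad c_j\;\sim\;\tfrac{\alpha^j}{\sqrt{\pi j(1-\beta/\alpha)}}\quad(\alpha<1).
\end{align*}
The Frobenius norm follows immediately from $\|C\|_\Fr^2=\sum_{j=0}^{n-1}(n-j)c_j^2$: the $1/j$ behaviour gives $\Theta_\beta(n\log n)$ for $\alpha=1$, while geometric summability gives $\Theta_{\alpha,\beta}(n)$ for $\alpha<1$.

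For the sensitivity, I would evaluate Theorem~\ref{thm:b-sensitivity} row-by-row. Indexing rows as $i=1+mb+s$ with $0\le m\le k-1$ and $0\le s<b$, the $i$-th entry of the summed column is $T_{m,s}:=\sum_{l=0}^{m}c_{lb+s}$, so
\begin{align*}
\sens_{k,b}^2(C_{\alpha,\beta})=\sum_{m=0}^{k-1}\sum_{s=0}^{b-1}T_{m,s}^2.
\end{align*}
For $\alpha<1$, geometric decay yields $T_{m,s}\le \text{const}\cdot\alpha^s/(1-\alpha^b)$, while $T_{m,0}\ge c_0=1$; summing gives $\sens_{k,b}^2=\Theta_{\alpha,\beta}(k)$, and combined with $\|C\|_\Fr=\Theta(\sqrt n)$ this yields $\Ecal=\Theta_{\alpha,\beta}(\sqrt k)$.

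For $\alpha=1$, I would split $T_{m,s}=c_s+\sum_{l=1}^{m}c_{lb+s}$ and approximate the tail by the integral $\int_0^m dl/\sqrt{lb+s}\sim 2\sqrt{m/b}$, giving $T_{m,s}^2\asymp 1/s+m/b$ for $s\ge 1$ and $T_{m,0}^2\asymp 1+m/b$ up to $\beta$-dependent constants. Summing over $s=0,\dots,b-1$ produces $\Theta(\log b+m)$ per $m$, and summing over $m=0,\dots,k-1$ yields $\sens_{k,b}^2=\Theta(k^2+k\log b)$. Since $b=n/k$ and a case split on whether $k\lessgtr\log n$ shows $k^2+k\log b\asymp k^2+k\log n$ in every regime, we get $\sens_{k,b}=\Theta(k+\sqrt{k\log n})$, and multiplying by $\|C\|_\Fr/\sqrt n=\Theta(\sqrt{\log n})$ gives $\Ecal=\Theta(k\sqrt{\log n}+\sqrt k\log n)$. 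The main obstacle will be carrying the integral approximation of $\sum_{l=1}^{m}1/\sqrt{lb+s}$ with sharp two-sided constants while isolating the small-$s$ singularity $\sim 1/\sqrt s$ from the bulk $\sim\sqrt{m/b}$; a naive Cauchy--Schwarz bound on the cross-terms is loose and would miss the exact $\Theta$ rate.
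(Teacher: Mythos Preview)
Your approach is correct and follows a genuinely different route from the paper's. Both proofs start from the exact sensitivity identity of Theorem~\ref{thm:b-sensitivity} (applicable by Corollary~\ref{cor:decreasing_entities}), namely $\sens_{k,b}^2=\|\sum_{j} C_{[\cdot,1+jb]}\|^2$, but they expand this squared norm differently. The paper rewrites it as the double sum of column inner products $\sum_{i,j}\langle C_{[\cdot,1+ib]},C_{[\cdot,1+jb]}\rangle$, bounds each inner product via the integral $\int dx/\sqrt{x(d+x)}$, and is led to the auxiliary function $F(a)=2\log(\sqrt{1/a+1}+\sqrt{1/a})$; the remaining double sum over $(i,j)$ is then controlled through a careful sequence of logarithmic and factorial identities, yielding explicit constants such as $\tfrac{6}{(1-\beta)^2}k^2$ in the sensitivity bound. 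Your row-by-row expansion via $T_{m,s}=\sum_{l\le m}c_{lb+s}$ is considerably more direct for the bare $\Theta$ statement and sidesteps all of that calculus: you only need the coarse two-sided estimate $c_j\asymp_\beta \alpha^j/\sqrt{j+1}$ (already Lemma~\ref{lem:c_bounds}), not the sharper asymptotic you derive. The trade-off is that your constants stay implicit, whereas the paper's route tracks them. Finally, your stated worry about cross-terms is unfounded: for nonnegative $a,b$ one has $a^2+b^2\le(a+b)^2\le 2(a^2+b^2)$, so $T_{m,s}^2\asymp c_s^2+(\sum_{l\ge1}c_{lb+s})^2$ holds immediately without any Cauchy--Schwarz loss.
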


\begin{proof}
We bound $\sens_{k,b}(C_{\alpha,\beta})$ and $\|C_{\alpha,\beta}\|_{\Fr}$ using
the explicit entries for $C_{\alpha,\beta}$ from Theorem~\ref{thm:SGDM_Root}. 
Details are provided in Appendix~\ref{sec:proof_bmin_approximation_squareroot}.
\end{proof}

\begin{restatable}{theorem}{lowerboundbmin}\label{thm:bmin_lower_bound}
Assume the setting of Theorem~\ref{thm:bmin_approximation}.
Then, for any factorization $A_{\alpha,\beta}=BC$ with $C^\top C\geq 0$, the
approximation error fulfills
\begin{align}
\Ecal(B,C) &\geq
\begin{dcases}
\frac{\sqrt{k} \log (n + 1)}{\pi} \qquad &\text{for $\alpha=1$,}
\\
 \sqrt{k} \qquad &\text{for $\alpha<1$,}
\end{dcases}\label{eq:bmin_lower_bound_1}
\end{align}
\end{restatable}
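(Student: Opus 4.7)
The plan is to reduce the bound to a nuclear-norm estimate for the workload matrix. Writing $S = C^\top\!C$ and noting that $\|B\|_\Fr^2 = \trace(A^\top\!A\, S^{-1})$ for any $B$ with $BC = A$, we have
\begin{equation*}
    \Ecal(B,C)^2 \;=\; \frac{1}{n}\,\sens_{k,b}(C)^2\cdot \trace(A^\top\!A\, S^{-1}),
\end{equation*}
and the hypothesis $C^\top\!C\geq 0$ makes \eqref{eq:sensPi} an equality.

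First I would lower-bound the sensitivity by a partition argument. Consider the $b$ residue cosets $\pi_j = \{j,\, j+b,\, j+2b,\,\ldots,\, j+(k-1)b\}$ for $j=1,\ldots,b$. Each $\pi_j$ is $b$-separated with exactly $k$ elements, hence lies in $\Pi_{k,b}$, so $\sens_{k,b}(C)^2 \geq \sum_{i_1,i_2\in\pi_j} S_{[i_1,i_2]}$. Averaging over $j$, using that the $\pi_j$'s partition $\{1,\ldots,n\}$, and dropping the non-negative off-diagonal terms of $S$ gives the clean inequality $\sens_{k,b}(C)^2 \geq \frac{1}{b}\trace(S)$.

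Next I would couple $\trace(S)$ with $\|B\|_\Fr^2$ by a trace Cauchy--Schwarz. By cyclicity of trace, $\|A\|_* = \trace((A^\top\!A)^{1/2}) = \trace(S^{1/2}\cdot (A^\top\!A)^{1/2}S^{-1/2})$, so Cauchy--Schwarz in the Hilbert--Schmidt inner product with $X=S^{1/2}$, $Y=(A^\top\!A)^{1/2}S^{-1/2}$ yields $\|A\|_*^2 \leq \trace(S)\cdot\trace(A^\top\!A\,S^{-1}) = \trace(S)\cdot\|B\|_\Fr^2$. Multiplying these two inequalities and using $n=bk$ collapses the bound to the workload matrix alone:
\begin{equation*}
    \Ecal(B,C)^2 \;\geq\; \frac{\|A_{\alpha,\beta}\|_*^2}{nb} \;=\; \frac{k}{n^2}\,\|A_{\alpha,\beta}\|_*^2.
\end{equation*}

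It remains to lower-bound $\|A_{\alpha,\beta}\|_*$. For $\alpha<1$ the matrix is lower triangular with unit diagonal, so $\prod_i \sigma_i(A_{\alpha,\beta})=|\det A_{\alpha,\beta}|=1$, and AM--GM yields $\|A_{\alpha,\beta}\|_* \geq n$, giving $\Ecal(B,C)\geq\sqrt{k}$. For $\alpha=1$ the required bound $\|A_{1,\beta}\|_*\geq \frac{n\log(n+1)}{\pi}$ is the main obstacle. The idea is to treat $\beta=0$ first: there $A_{1,0}$ is the all-ones lower triangular matrix whose singular values are explicit, $\sigma_j = \bigl(2\sin\tfrac{(2j-1)\pi}{4n+2}\bigr)^{-1}$, and the elementary inequality $\sin x\leq x$ combined with the harmonic-type lower bound $\sum_{j=1}^n \frac{1}{2j-1} \geq \tfrac{1}{2}\log(2n+1)$ yields the bound with the characteristic $1/\pi$ constant. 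The general $\beta\in[0,1)$ case is reduced to $\beta=0$ via the factorization $A_{1,\beta} = L\,L_\beta$ with $L_\beta=(I-\beta P)^{-1}$, and a singular-value comparison showing that the $\beta=0$ bound is the minimum over $\beta$ (consistent with the symbol analysis, where the Fourier symbol has a $1/(1-e^{-i\theta})$ singularity that only grows in strength with $\beta$). Making this singular-value comparison rigorous---in particular, ensuring that the $1/\pi$ constant is preserved uniformly in $\beta$---is the technical heart of the proof and is carried out in the appendix.
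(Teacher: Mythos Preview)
Your proof follows essentially the same route as the paper's: lower-bound $\sens_{k,b}(C)$ by $\sqrt{k/n}\,\|C\|_\Fr$, use $\|B\|_\Fr\|C\|_\Fr\geq\|A\|_*$ (your trace Cauchy--Schwarz is exactly the variational characterization of the nuclear norm the paper invokes), and then lower-bound $\|A_{\alpha,\beta}\|_*$ case by case. The paper handles $\alpha<1$ via $\|A\|_*\geq\trace A=n$, equivalent to your AM--GM argument since the diagonal is all ones, and $\alpha=1$ via the explicit singular values of $E_1=A_{1,0}$ combined with the bound $\sigma_j(E_1 E_\beta)\geq\sigma_n(E_\beta)\,\sigma_j(E_1)=\frac{1}{1+\beta}\sigma_j(E_1)$, which is precisely the singular-value comparison you defer to an appendix.

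There is one genuine gap in your sensitivity step. Your partition argument for $\sens_{k,b}(C)^2\geq\frac{1}{b}\trace(S)$ only works when the $b$ cosets $\pi_j=\{j,j+b,\dots,j+(k-1)b\}$ actually partition $\{1,\dots,n\}$, i.e., when $k=n/b$. For general $k\in\{1,\dots,n/b\}$ (as in the setting of Theorem~\ref{thm:bmin_approximation}) these sets cover only $\{1,\dots,kb\}$, so the averaging no longer yields the full trace, and your later use of ``$n=bk$'' is unjustified. The paper's Lemma~\ref{lem:sens_inequalities} closes this via a short backward induction on $k$: if the maximizing $k$-element $b$-separated set achieves value at least $\frac{k}{n}\|C\|_\Fr^2$, removing its least-contributing index yields a $(k-1)$-element set with value at least $\frac{k-1}{n}\|C\|_\Fr^2$. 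With that fix, your argument is complete and matches the paper.
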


\begin{proof}
The proof is based on the observation that $\|X\|_{\Fr}\|Y\|_{\Fr}\geq \|XY\|_*$ 
for any matrices $X,Y$, where $\|\cdot\|_*$ denotes the nuclear norm. 
To derive~\eqref{eq:bmin_lower_bound_1}, we show that $\sens_{k,b}(C)$ is 
lower bounded by $\frac{\sqrt{k}}{n}\|C\|_{\Fr}$, and derive 
explicit bounds on the singular values of $A_{\alpha,\beta}$. 
\end{proof}

\begin{restatable}{theorem}{baselinesbmin}\label{thm:bmin_baselines}
Assume the setting of Theorem~\ref{thm:bmin_approximation}.
Then, the baseline factorizations $A_{\alpha,\beta}=A_{\alpha,\beta}\cdot\Id$ 
and $A_{\alpha,\beta}=\Id\cdot A_{\alpha,\beta}$ 
fulfill
\begin{align}
\Ecal(A_{\alpha,\beta},\Id) &  
\geq \begin{dcases}   
  \sqrt{\frac{nk}{2}}  \qquad&\text{for $\alpha = 1$,}
\\ 
\sqrt{k} \qquad&\text{for $\alpha < 1$.}
\end{dcases}
\qquad\quad
\Ecal(\Id,A_{\alpha,\beta}) 
\geq \begin{dcases} 
\frac{k\sqrt{n}}{\sqrt{3}} &\text{for $\alpha = 1$,}
\\\medskip 
\sqrt{k}  &\text{for $\alpha < 1$.}
\end{dcases}\label{eq:bmin_baselines}
\end{align}
\end{restatable}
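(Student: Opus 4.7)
The plan is to use $\Ecal(B,C) = \frac{1}{\sqrt{n}}\sens_{k,b}(C)\|B\|_{\Fr}$ from \eqref{eq:approxerror} and to bound each factor separately in the four cases. The two baselines split into two pairs, each of which admits a clean elementary argument exploiting only the non-negativity of $A_{\alpha,\beta}$ and the fact that its diagonal equals $a_0=1$.

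First I would treat $(B,C)=(A_{\alpha,\beta},\Id)$. Since $\Id^\top\Id=\Id$ has non-negative entries, \eqref{eq:sensPi} holds with equality and gives $\sens_{k,b}(\Id)^2=\max_{\pi\in\Pi_{k,b}}|\pi|=k$. For the Frobenius norm I use that every diagonal entry of $A_{\alpha,\beta}$ equals $a_0=1$, which already yields $\|A_{\alpha,\beta}\|_{\Fr}^2\geq n$ and hence $\Ecal(A_{\alpha,\beta},\Id)\geq\sqrt{k}$ in the regime $\alpha<1$. For $\alpha=1$, I sharpen this using $a_j=\sum_{m=0}^{j}\beta^m\geq 1$, so each of the $n(n+1)/2$ entries at or below the diagonal contributes at least $1$, giving $\|A_{\alpha,\beta}\|_{\Fr}^2\geq n(n+1)/2\geq n^2/2$ and therefore $\Ecal(A_{\alpha,\beta},\Id)\geq\sqrt{nk/2}$.

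Next I would handle $(B,C)=(\Id,A_{\alpha,\beta})$, where $\|\Id\|_{\Fr}=\sqrt{n}$ so only a lower bound on $\sens_{k,b}(A_{\alpha,\beta})$ is required. Because $A_{\alpha,\beta}$ has non-negative entries, so does $A_{\alpha,\beta}^\top A_{\alpha,\beta}$, and \eqref{eq:sensPi} again holds with equality; restricting to the test set $\pi^\ast=\{1,1+b,\dots,1+(k-1)b\}\in\Pi_{k,b}$ yields $\sens_{k,b}(A_{\alpha,\beta})^2\geq\|A_{\alpha,\beta}\mathbf{1}_{\pi^\ast}\|^2$. For $\alpha<1$, evaluating the coordinate $l=1+jb$ for each $j=0,\dots,k-1$ picks up the diagonal contribution $a_0=1$ from the $j$-th element of $\pi^\ast$, so $\sens^2\geq k$ and hence $\Ecal(\Id,A_{\alpha,\beta})\geq\sqrt{k}$. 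For $\alpha=1$, the bound $a_m\geq 1$ gives $(A_{\alpha,\beta}\mathbf{1}_{\pi^\ast})_l\geq\min(k,\lceil l/b\rceil)$. Summing over $l=1,\dots,n=bk$, where each value $j\in\{1,\dots,k\}$ of $\lceil l/b\rceil$ is attained for exactly $b$ indices, yields $\sens^2\geq b\sum_{j=1}^{k}j^2=bk(k+1)(2k+1)/6\geq nk^2/3$, hence $\Ecal(\Id,A_{\alpha,\beta})\geq k\sqrt{n}/\sqrt{3}$.

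The main obstacle is essentially bookkeeping: for $\alpha=1$ one has to pick the test set $\pi^\ast$ and perform the coordinate-wise lower bound carefully so that the constant $1/\sqrt{3}$ drops out of the sum $\sum_{j=1}^{k}j^2$ rather than something weaker from a lossier aggregation. Everything else reduces to the two elementary facts that $a_0=1$ and, when $\alpha=1$, $a_j\geq 1$ for all $j$, which is why the $\sqrt{k}$ rate appears uniformly across both baselines in the regime $\alpha<1$ with no further analysis of the Toeplitz structure.
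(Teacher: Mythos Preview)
Your argument mirrors the paper's proof closely: both compute $\sens_{k,b}(\Id)=\sqrt{k}$ and lower-bound $\|A_{\alpha,\beta}\|_{\Fr}$ using $a_0=1$ (and $a_j\geq 1$ when $\alpha=1$); both reduce $\Ecal(\Id,A_{\alpha,\beta})$ to $\sens_{k,b}(A_{\alpha,\beta})$ and lower-bound the latter via the test set $\pi^\ast=\{1,1+b,\dots,1+(k-1)b\}$ by analyzing the block structure of $A_{\alpha,\beta}\mathbf{1}_{\pi^\ast}$. The paper routes this through the componentwise inequalities $A_{\alpha,\beta}\geq A_{\alpha,0}\geq\Id$, but the content is the same.

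There is one genuine slip in the $\alpha=1$ case of the second baseline. You write ``summing over $l=1,\dots,n=bk$'' and then assert $bk(k+1)(2k+1)/6\geq nk^2/3$. The setting of Theorem~\ref{thm:bmin_approximation} allows any $k\in\{1,\dots,n/b\}$, and when $k<n/b$ that inequality is false (take $k=1$ with $n/b$ large). The fix, which is exactly what the paper does, is to retain the remaining coordinates $l=bk+1,\dots,n$, where $\min(k,\lceil l/b\rceil)=k$, contributing an additional $(n-bk)k^2$. Then
\[
\|A_{1,\beta}\mathbf{1}_{\pi^\ast}\|^2 \;\geq\; b\sum_{j=1}^{k}j^2 + (n-bk)k^2 \;\geq\; \tfrac{bk^3}{3} + (n-bk)k^2 \;=\; nk^2 - \tfrac{2bk^3}{3} \;\geq\; \tfrac{nk^2}{3},
\]
the last step using $bk\leq n$. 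With this correction your proof is complete and equivalent to the paper's.
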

\begin{proof}
The proof relies on the fact that the workload matrices can be 
lower bounded componentwise by simpler matrices: $A_{\alpha,\beta}\geq A_{\alpha,0}$ 
and $A_{\alpha,0}\geq\Id$. For the simpler matrices, the 
bounds~\eqref{eq:bmin_baselines} can then be derived analytically,
and the general case follows by monotonicity.
\end{proof}

\paragraph{Discussion.}
Analogously to the case of single participation, Theorems~\ref{thm:bmin_approximation} 
to~\ref{thm:bmin_baselines} again establish that the proposed \acronym 
is asymptotically superior to the baseline factorizations if $\alpha=1$.
A comparison of Theorems~\ref{thm:bmin_approximation} and~\ref{thm:bmin_approximation_square_root} 
suggests that, at least for maximal participation, $k=\frac{n}{b}$ and $p=b$, 
the bandedness of the $p$-\acronym improves the approximation quality, 
specifically in the practically relevant regime where $b\ll n$. 
While none of the methods match the lower bound of Theorem~\ref{thm:bmin_approximation},
we conjecture that this is not because any asymptotically better methods would exist, 
but rather a sign of Equation~\eqref{eq:bmin_lower_bound_1} is not tight. 
Both theoretical consideration and experiments suggest that a term linear in 
$k$ should appear there. 
For $\alpha<1$, all studied methods are asymptotically identical and, 
in fact, optimal. 

\section{Experiments}\label{sec:experiments}
To demonstrate that \acronym can achieve high accuracy not only in theory but also in practice, we compare it to \AOF and baselines in numerical experiments. \textbf{Our results show that \acronym achieves quality comparable to the \AOF, but without the computational overhead, and it clearly outperforms the baseline factorizations.}
The privacy guarantees are identical for all methods, so we do 
not discuss them explicitly. 

\paragraph{Implementation and computational cost.}
We implement \acronym by the closed-form expressions of 
Theorem~\eqref{thm:SGDM_Root}. 
For single data participation, we use the square root 
decomposition directly. For repeated data participation 
we use $p$-\acronym with $p=b$. 
Using standard \emph{python/numpy} code, computing the \acronym 
as dense matrices are memory-bound rather than compute-bound. 
Even sizes of $n=10,000$ or more take at most a few 
seconds.
Computing only the Toeplitz coefficients is even faster, 
of course. 

To compute \AOF, we solve the optimization problem~\eqref{eq:AOF} 
using the \texttt{cvxpy} package with \texttt{SCS} backend, 
see Algorithm~\ref{alg:aof} for the source code\footnote{Additional 
experiments with gradient-based optimizers can be found in Appendix~\ref{sec:other_optimizers}.}.
With the default numerical tolerance, $10^{-4}$, each factorization
took a few minutes ($n\leq 100$) to hours ($n\leq 500$) to several 
days ($n\geq 700$) of CPU time. 
Note that this overhead reappears for any change in the number of 
update steps, $n$, weight decay, $\alpha$, or momentum, $\beta$, 
as these induce different workload matrices. 
In our experiments, when the optimization for \AOF did not terminate 
within 10 days, we reran the optimization problem with the tolerance
increased by a factor of 10.
The runtime depends not only on the matrix size but also on the
entries. 
In particular, we observe matrices with momentum to be harder 
to factorize than without. 
For large matrix sizes we frequently encountered numerical problems: 
the intermediate matrices, $S$, in \eqref{eq:AOF}, often did not 
fulfill the positive definiteness condition required to solve 
the subsequent Cholesky decomposition for $C$. 
Unfortunately, simply projecting the intermediates back to the 
cone of positive semi-definite matrices is not enough, because 
the resulting $C$ matrices also have to be invertible and not 
too badly conditioned. 
Ultimately, we adopted a postprocessing step for $S$ that ensures 
that all its eigenvalues were at least of value $\sqrt{1/n}$, which we find to be a reasonable modification to ensure the stability of the convergence.
Enforcing this empirically found value leads to generally good 
results, as our experiments below show, but it does add an 
undesirable extra level of complexity to the process.
In contrast, due to its analytic expressions, \acronym does 
not suffer from numerical problems. It also does not possess 
additional hyperparameters, such as a numeric tolerance or 
the number of optimization steps. 

Apart from the factorization itself, the computational cost of 
\acronym and \AOF are nearly identical. 
Both methods produce (banded) lower triangular matrices, so 
computing the inverse matrices or solving linear systems can 
be done within milliseconds to seconds using forward substitution. 
Note that, in principle, one could even exploit the Toeplitz 
structure of $p$-\acronym, but we found this 
not to yield any practical benefit in our experiments. 
Computing the sensitivity is trivial for $p$-BSR using
Corollary~\ref{cor:decreasing_entities}, and it is still 
efficient for \AOF by the dynamic program proposed in~\citet{BandedMatrix}. 

\begin{figure}[t]
    \centering
    \includegraphics[width=.48\textwidth]{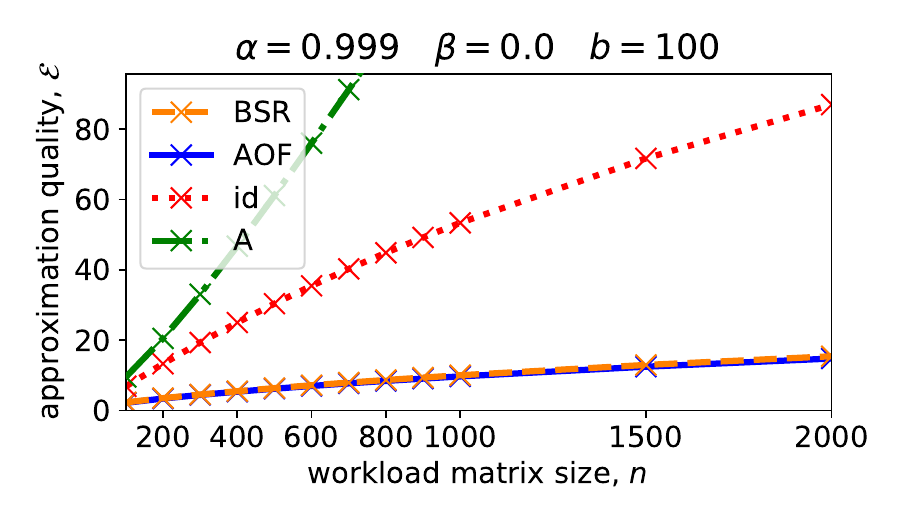}
    \quad
    \includegraphics[width=.48\textwidth]{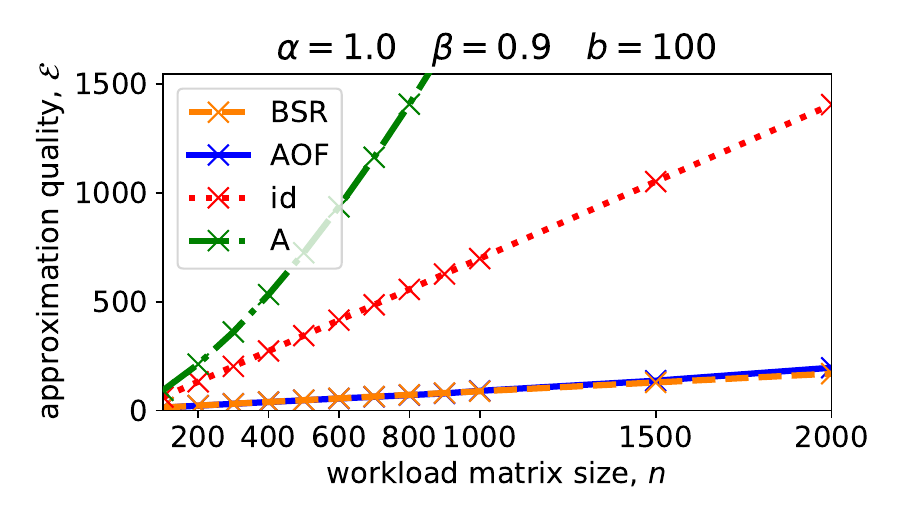}
\caption{Expected approximation error of \acronym, \AOF and baseline factorizations for two different hyperparameter settings (left: $\alpha=0.999, \beta=0$, right: $\alpha=1, \beta=0.9$) with repeated participation $(b=100, k=n/100)$. See Section~\ref{sec:experiments} for details.}\label{fig:accuracy}
\end{figure}

\paragraph{Expected Approximation Error.}
As a first numeric experiment, we evaluate the expected 
approximation error for workload matrices that reflect 
different SGD settings. 
Specifically, we use workload matrices \eqref{eq:A_alpha_beta} 
for $n\in\{100,200,\dots,1000,1500,2000\}$, with $\alpha=\{0.99,0.999,0.9999,1\}$, 
and $\beta\in\{0,0.9\}$, either with single participation, $k=1$, 
or repeated participation, $b=100$, $k=n/100$.
Figure~\ref{fig:accuracy} shows the expected approximate error, 
$\Ecal(B,C)$, of the proposed \acronym, \AOF, as well as the 
baseline factorizations, $A=A\cdot\Id$ and $A=\Id\cdot A$ in 
two exemplary cases.
Additional results for other privacy levels can be 
found in Appendix~\ref{sec:appendix_experiments}.

The results confirm our expectations from the theoretical analysis:
in particular, \acronym's expected approximation error is quite 
close to \AOF's, typically within a few percent (left plot).
Both methods are clearly superior to the naive factorizations. 
For large matrix sizes, \acronym sometimes even yields slightly 
better values than \AOF (right plot). 
However, we believe this to be a numeric artifact of us having 
to solve \AOF with less-than-perfect precision. 

\begin{figure}[t]
    \centering
    \includegraphics[width=.48\textwidth]{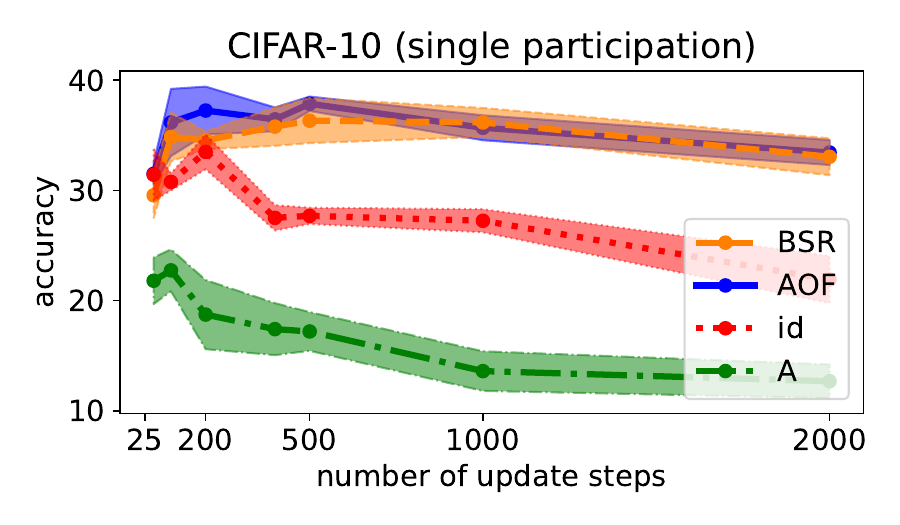}
    \quad
    \includegraphics[width=.48\textwidth]{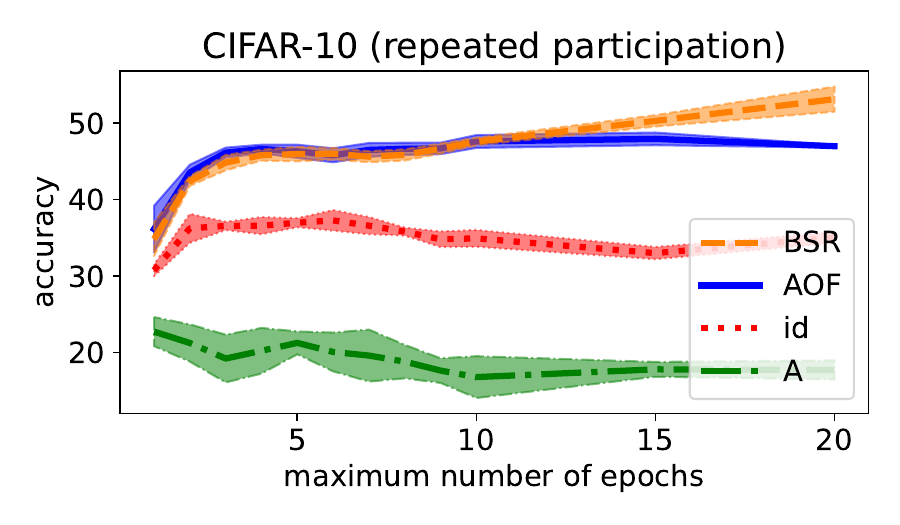}
\caption{Classification accuracy (mean and standard deviation over 5 runs with different random seeds) on CIFAR-10 for \acronym, \AOF, and baselines for $(\epsilon,\delta)=(4, 10^{-5})$ for independent training runs. Left: one epoch, different batch sizes. Right: different number of epochs, constant batch size.}\label{fig:cifar10}
\end{figure}

\paragraph{Private Model Training on CIFAR-10.}
To demonstrate the usefulness of \acronym in practical settings, 
we follow the setup of \citet{Kairouz} and report results for 
training a simple ConvNet on the CIFAR-10 dataset (see Table~\ref{tab:cifar_convnet} 
in Appendix~\ref{sec:appendix_experiments} for the architecture). 
We provide a PyTorch implementation with fast gradient clipping from Opacus \citep{yousefpour2021opacus} in our GitHub repository \footnote{https://github.com/npkalinin/Matrix-Factorization-DP-Training/tree/main}.
To reflect the setting of single-participation training, we 
split the 50,000 training examples into batches of size 
$m\in\{1000,500,250,200,100,50,25\}$, resulting 
in $n\in\{100,200,400,500,1000,2000\}$ update steps. 
For repeated participation, we fix the batch size 
to $500$ and run $k\in\{1,2,\dots,10,15,20\}$ epoch 
of training, \ie $n=100k$ and $b=100$. 
In both cases, 20\% of the training examples are 
used as validation sets to determine the learning rate 
$\eta\in\{0.01,0.05,0.1,0.5,1\}$, weight decay 
parameters $\alpha\in\{0.99, 0.999, 0.9999, 1\}$, 
and momentum $\beta\in\{0,0.9\}$.
Figure~\ref{fig:cifar10} shows the test set accuracy
of the model trained with hyperparameters that achieved
the highest validation accuracy.\footnote{Such a 
setting would not optimal for real-world private training, 
because the many repeated experiments reduce the privacy 
guarantees~\citep{papernot2021hyperparameter,kurakin2022toward,ponomareva2023dp}.  
We nevertheless adopt it here to allow for a simpler and fair 
comparison between methods.}
One can see the expected effect that in DP model training, 
more update steps/epochs do not necessary lead to higher 
accuracy due to the need to add more noise. 
The quality of models trained with \acronym is mostly 
identical to \AOF.
When training for a large number of epochs it achieves even 
better slightly results, but this could also be an artifact 
of us having to solve \AOF with reduced precision in this 
regime. Both methods are clearly superior to the baselines. 

\section{Conclusion and Discussion}
We introduce an efficient and effective approach to the matrix 
factorization mechanism for SGD-based model training with differential 
privacy. The proposed \method (\acronym) factorization achieves 
results on par with the previous state-of-the-art, and clearly 
superior to baseline methods. At the same time, it does not suffer from the previous method's computational overhead, thereby making differentially private model training practical even for large scale problems.

Despite the promising results, some open questions remain. On the theoretical side, the asymptotic optimality of \acronym without weight decay is still unresolved because the current upper bounds on the expected approximation error do not match the provided lower bounds. Based on the experimental results, we believe this discrepancy lies with the lower bounds, which we suspect should be linear in the number of participations.  We observe that BSR achieves results comparable to AOF, although we cannot currently prove this due to the insufficient understanding of AOF's theoretical properties; nonetheless, we consider it a promising research direction. On the practical side, it would be interesting to extend the 
guarantees to even more learning scenarios, such as variable 
learning rates.

\begin{ack}
This research was supported by the Scientific Service Units (SSU) of ISTA through resources provided by Scientific Computing (SciComp).
We thank Monika Henzinger and Jalaj Upadhyay for their valuable comments on the earlier versions of this manuscript.
\end{ack}

\bibliographystyle{abbrvnat}
\bibliography{lit}

\begin{thebibliography}{35}
\providecommand{\natexlab}[1]{#1}
\providecommand{\url}[1]{\texttt{#1}}
\expandafter\ifx\csname urlstyle\endcsname\relax
  \providecommand{\doi}[1]{doi: #1}\else
  \providecommand{\doi}{doi: \begingroup \urlstyle{rm}\Url}\fi

\bibitem[Abadi et~al.(2016)Abadi, Chu, Goodfellow, McMahan, Mironov, Talwar, and Zhang]{Abadi}
M.~Abadi, A.~Chu, I.~Goodfellow, H.~B. McMahan, I.~Mironov, K.~Talwar, and L.~Zhang.
\newblock Deep learning with differential privacy.
\newblock In \emph{ACM Special Interest Group on Security, Audit and Control (SIGSAC)}, 2016.

\bibitem[Batir et~al.(2021)Batir, Kücük, and Sorgun]{Catalan}
N.~Batir, H.~Kücük, and S.~Sorgun.
\newblock Convolution identities involving the central binomial coefficients and {C}atalan numbers.
\newblock \emph{Transactions on Combinatorics}, 2021.

\bibitem[Bj{\"o}rck and Hammarling(1983)]{Bjorck1983ASM}
{\AA}.~Bj{\"o}rck and S.~Hammarling.
\newblock A {Schur} method for the square root of a matrix.
\newblock \emph{Linear Algebra and its Applications}, 1983.

\bibitem[Choquette-Choo et~al.(2023{\natexlab{a}})Choquette-Choo, Ganesh, McKenna, McMahan, Rush, Thakurta, and Zheng]{BandedMatrix}
C.~A. Choquette-Choo, A.~Ganesh, R.~McKenna, H.~B. McMahan, J.~K. Rush, A.~G. Thakurta, and X.~Zheng.
\newblock {(Amplified)} banded matrix factorization: A unified approach to private training.
\newblock In \emph{Conference on Neural Information Processing Systems (NeurIPS)}, 2023{\natexlab{a}}.

\bibitem[Choquette-Choo et~al.(2023{\natexlab{b}})Choquette-Choo, Ganesh, Steinke, and Thakurta]{PAMM}
C.~A. Choquette-Choo, A.~Ganesh, T.~Steinke, and A.~Thakurta.
\newblock Privacy amplification for matrix mechanisms.
\newblock In \emph{International Conference on Learning Representations (ICLR)}, 2023{\natexlab{b}}.

\bibitem[Choquette-Choo et~al.(2023{\natexlab{c}})Choquette-Choo, McMahan, Rush, and Thakurta]{MultiEpoch}
C.~A. Choquette-Choo, H.~B. McMahan, K.~Rush, and A.~Thakurta.
\newblock Multi-epoch matrix factorization mechanisms for private machine learning.
\newblock \emph{International Conference on Machine Learing (ICML)}, 2023{\natexlab{c}}.

\bibitem[Choquette-Choo et~al.(2024)Choquette-Choo, Dvijotham, Pillutla, Ganesh, Steinke, and Thakurta]{CNPBINDPL}
C.~A. Choquette-Choo, K.~Dvijotham, K.~Pillutla, A.~Ganesh, T.~Steinke, and A.~Thakurta.
\newblock Correlated noise provably beats independent noise for differentially private learning.
\newblock In \emph{International Conference on Learning Representations (ICLR)}, 2024.

\bibitem[Deadman et~al.(2012)Deadman, Higham, and Ralha]{deadman2012blocked}
E.~Deadman, N.~J. Higham, and R.~Ralha.
\newblock Blocked schur algorithms for computing the matrix square root.
\newblock In \emph{International Workshop on Applied Parallel Computing (PARA)}, 2012.

\bibitem[Denisov et~al.(2022)Denisov, McMahan, Rush, Smith, and Thakurta]{Denisov}
S.~Denisov, H.~B. McMahan, J.~Rush, A.~Smith, and G.~A. Thakurta.
\newblock Improved {D}ifferential {P}rivacy for {SGD} via optimal private linear operators on adaptive streams.
\newblock In \emph{Conference on Neural Information Processing Systems (NeurIPS)}, 2022.

\bibitem[Dvijotham et~al.(2024)Dvijotham, McMahan, Pillutla, Steinke, and Thakurta]{dvijotham2024efficient}
K.~Dvijotham, H.~B. McMahan, K.~Pillutla, T.~Steinke, and A.~Thakurta.
\newblock Efficient and near-optimal noise generation for streaming differential privacy, 2024.
\newblock arXiv:2404.16706 [cs.DS].

\bibitem[Dwork(2006)]{Dwork_DP_original}
C.~Dwork.
\newblock Differential privacy.
\newblock In \emph{International colloquium on automata, languages, and programming (ICALP)}, 2006.

\bibitem[Dwork and Roth(2014)]{DP_book_Dwork}
C.~Dwork and A.~Roth.
\newblock \emph{The algorithmic foundations of differential privacy}.
\newblock Foundations and Trends in Theoretical Computer Science, 2014.

\bibitem[Dwork et~al.(2006)Dwork, McSherry, Nissim, and Smith]{dwork2006calibrating}
C.~Dwork, F.~McSherry, K.~Nissim, and A.~Smith.
\newblock Calibrating noise to sensitivity in private data analysis.
\newblock In \emph{Theory of Cryptography Conference (TCC)}, 2006.

\bibitem[Elliott(1953)]{elliottsingular1953}
J.~F. Elliott.
\newblock The characteristic roots of certain real symmetric matrices, 1953.
\newblock M. S. thesis, University of Tennessee,.

\bibitem[Gershgorin(1931)]{gershgorin1931uber}
S.~A. Gershgorin.
\newblock {\"Uber} die {Abgrenzung} der {Eigenwerte} einer {Matrix}.
\newblock \emph{Proceedings of the USSR Academy of Sciences. Mathematics Series}, \penalty0 (6):\penalty0 749--754, 1931.

\bibitem[Granqvist et~al.(2024)Granqvist, Song, Áine Cahill, van Dalen, Pelikan, Chan, Feng, Krishnaswami, Jina, and Chitnis]{pfl_research}
F.~Granqvist, C.~Song, Áine Cahill, R.~van Dalen, M.~Pelikan, Y.~S. Chan, X.~Feng, N.~Krishnaswami, V.~Jina, and M.~Chitnis.
\newblock pfl-research: simulation framework for accelerating research in private federated learning, 2024.
\newblock URL \url{https://arxiv.org/abs/2404.06430}.

\bibitem[Greene and Knuth(1990)]{greene1990mathematics}
D.~H. Greene and D.~E. Knuth.
\newblock \emph{Mathematics for the Analysis of Algorithms}.
\newblock Springer, 1990.

\bibitem[Henzinger et~al.(2024)Henzinger, Upadhyay, and Upadhyay]{Henzinger}
M.~Henzinger, J.~Upadhyay, and S.~Upadhyay.
\newblock A unifying framework for differentially private sums under continual observation.
\newblock In \emph{Symposium on Discrete Algorithms (SODA)}, 2024.

\bibitem[Kairouz et~al.(2021)Kairouz, McMahan, Song, Thakkar, Thakurta, and Xu]{Kairouz}
P.~Kairouz, B.~McMahan, S.~Song, O.~Thakkar, A.~Thakurta, and Z.~Xu.
\newblock Practical and private (deep) learning without sampling or shuffling.
\newblock In \emph{International Conference on Machine Learing (ICML)}, 2021.

\bibitem[Koloskova et~al.(2023)Koloskova, McKenna, Charles, Rush, and McMahan]{koloskova2023gradient}
A.~Koloskova, R.~McKenna, Z.~Charles, J.~Rush, and H.~B. McMahan.
\newblock Gradient descent with linearly correlated noise: Theory and applications to differential privacy.
\newblock In \emph{Conference on Neural Information Processing Systems (NeurIPS)}, 2023.

\bibitem[Kurakin et~al.(2022)Kurakin, Song, Chien, Geambasu, Terzis, and Thakurta]{kurakin2022toward}
A.~Kurakin, S.~Song, S.~Chien, R.~Geambasu, A.~Terzis, and A.~Thakurta.
\newblock Toward training at imagenet scale with differential privacy, 2022.
\newblock arXiv:2201.12328.

\bibitem[Li et~al.(2015)Li, Miklau, Hay, McGregor, and Rastogi]{Li_MF}
C.~Li, G.~Miklau, M.~Hay, A.~McGregor, and V.~Rastogi.
\newblock The matrix mechanism: Optimizing linear counting queries under {D}ifferential {P}rivacy.
\newblock \emph{International Conference on Very Large Data Bases (VLDB)}, 2015.

\bibitem[Li et~al.(2021)Li, Ding, Zhang, Li, and Zhou]{li2021federated}
Z.~Li, B.~Ding, C.~Zhang, N.~Li, and J.~Zhou.
\newblock Federated matrix factorization with privacy guarantee.
\newblock \emph{International Conference on Very Large Data Bases (VLDB)}, 2021.

\bibitem[MacWilliams and Sloane(1977)]{Binomials}
F.~J. MacWilliams and N.~J.~A. Sloane.
\newblock \emph{The theory of error-correcting codes}.
\newblock Elsevier, 1977.

\bibitem[McKenna(2024)]{scalingmckenna2024}
R.~McKenna.
\newblock Scaling up the banded matrix factorization mechanism for differentially private ml, 2024.
\newblock arXiv:2405.15913 [cs.DS].

\bibitem[McMahan et~al.(2024)McMahan, Xu, and Zhang]{hasslefree2024}
H.~B. McMahan, Z.~Xu, and Y.~Zhang.
\newblock A hassle-free algorithm for private learning in practice: Don't use tree aggregation, use blts, 2024.
\newblock arXiv:2408.08868 [cs.DS].

\bibitem[Nguyen et~al.(2007)Nguyen, Mori, and Mori]{nguyen2007relaxed}
T.~V. Nguyen, Y.~Mori, and T.~Mori.
\newblock Relaxed monotonic conditions for {Schur} stability of real polynomials.
\newblock \emph{IEICE Transactions on Fundamentals of Electronics, Communications and Computer Sciences}, 90\penalty0 (10):\penalty0 2326--2328, 2007.

\bibitem[Papernot and Steinke(2021)]{papernot2021hyperparameter}
N.~Papernot and T.~Steinke.
\newblock Hyperparameter tuning with renyi differential privacy.
\newblock In \emph{International Conference on Learning Representations (ICLR)}, 2021.

\bibitem[Ponomareva et~al.(2023)Ponomareva, Hazimeh, Kurakin, Xu, Denison, McMahan, Vassilvitskii, Chien, and Thakurta]{ponomareva2023dp}
N.~Ponomareva, H.~Hazimeh, A.~Kurakin, Z.~Xu, C.~Denison, H.~B. McMahan, S.~Vassilvitskii, S.~Chien, and A.~G. Thakurta.
\newblock How to {DP}-fy {ML}: A practical guide to machine learning with differential privacy.
\newblock \emph{Journal of Artificial Intelligence Research (JAIR)}, 77:\penalty0 1113--1201, 2023.

\bibitem[SebastienB(2017)]{SebastienB}
SebastienB.
\newblock Trace norm of a triangular matrix with only ones above the diagonal, 2017.
\newblock URL \url{https://math.stackexchange.com/questions/1857078/trace-norm-of-a-triangular-matrix-with-only-ones-above-the-diagonal}.

\bibitem[Shin et~al.(2018)Shin, Kim, Shin, and Xiao]{shin2018privacy}
H.~Shin, S.~Kim, J.~Shin, and X.~Xiao.
\newblock Privacy enhanced matrix factorization for recommendation with local differential privacy.
\newblock \emph{IEEE Transactions on Knowledge and Data Engineering}, 30\penalty0 (9), 2018.

\bibitem[Vadhan(2017)]{DP_book_Salil}
S.~Vadhan.
\newblock \emph{The complexity of differential privacy}.
\newblock Tutorials on the Foundations of Cryptography, 2017.

\bibitem[Yousefpour et~al.(2021)Yousefpour, Shilov, Sablayrolles, Testuggine, Prasad, Malek, Nguyen, Ghosh, Bharadwaj, Zhao, et~al.]{yousefpour2021opacus}
A.~Yousefpour, I.~Shilov, A.~Sablayrolles, D.~Testuggine, K.~Prasad, M.~Malek, J.~Nguyen, S.~Ghosh, A.~Bharadwaj, J.~Zhao, et~al.
\newblock Opacus: User-friendly differential privacy library in pytorch, 2021.
\newblock arXiv preprint arXiv:2109.12298.

\bibitem[Yu et~al.(2020)Yu, Zhang, Chen, Yin, and Liu]{yugradient}
D.~Yu, H.~Zhang, W.~Chen, J.~Yin, and T.-Y. Liu.
\newblock Gradient perturbation is underrated for differentially private convex optimization.
\newblock In \emph{International Joint Conference on Artificial Intelligence (IJCAI)}, 2020.

\bibitem[Zhang et~al.(2021)Zhang, Ding, Wu, Wong, Van~Nguyen, and Pan]{zhang2021adaptive}
X.~Zhang, J.~Ding, M.~Wu, S.~T. Wong, H.~Van~Nguyen, and M.~Pan.
\newblock Adaptive privacy preserving deep learning algorithms for medical data.
\newblock In \emph{Winter Conference on Applications of Computer Vision (WACV)}, 2021.

\end{thebibliography}


\clearpage
\appendix
\renewcommand{\proofname}{Proof}

\section*{Appendix}

\section{ General introduction to differential privacy.}

Differential privacy \citep{Dwork_DP_original} is a robust framework designed to provide strong privacy guarantees for statistical analyses and data sharing. It aims to protect individual data points in a dataset while still allowing meaningful aggregate information to be extracted. Unlike traditional data anonymization techniques, which might involve removing identifiers or aggregating data, differential privacy offers a mathematical definition of privacy that quantifies the amount of privacy loss and ensures that the risk of identifying any individual's data remains low, even when combined with other data sources.To formalize this concept, a randomized mechanism $M$ is said to provide $(\varepsilon, \delta)$-differential privacy if, for all data sets $D$ and $D'$ that differ in one element, and for all subsets of the mechanism's output space $S$:
\begin{equation*}
\Pr[M(D) \in S] \leq e^\varepsilon \cdot \Pr[M(D') \in S] + \delta
\end{equation*}

For 
a detailed introduction to 
differential privacy, we recommend the books "The Algorithmic Foundations
of Differential Privacy" by \citet{DP_book_Dwork} and  "The Complexity of Differential Privacy" by  \citet{DP_book_Salil}.

\section{Source code for computing \AOF}
\begin{algorithm}\label{alg:aof}
\lstinputlisting[language=Octave]{OptimalFactorization.m}
\caption{Source code for computing \AOF using \texttt{cvxpy}.}
\end{algorithm}

\section{Network architecture for CIFAR-10 experiments}\label{sec:appendix_experiments}

\begin{table}[H]
\caption{ConvNet architecture for CIFAR-10 experiments}\label{tab:cifar_convnet}
{\small\sffamily\begin{tabular}{|l|}
\hline
Conv2D(channels=32, kernel=(3, 3), strides=(1, 1), padding='SAME', activation='relu')
\\\hline
Conv2D(channels=32, kernel=(3, 3), strides=(1, 1), padding='SAME', activation='relu')
\\\hline
MaxPool(kernel=(2, 2), strides=(2, 2))
\\\hline
Conv2D(channels=64, kernel=(3, 3), strides=(1, 1), padding='SAME'), activation='relu')
\\\hline
Conv2D(channels=64, kernel=(3, 3), strides=(1, 1), padding='SAME'), activation='relu')
\\\hline
MaxPool(kernel=(2, 2), strides=(2, 2))
\\\hline
Conv2D(channels=128, kernel=(3, 3), strides=(1, 1), padding='SAME'), activation='relu')
\\\hline
Conv2D(channels=128, kernel=(3, 3), strides=(1, 1), padding='SAME'), activation='relu')
\\\hline
MaxPool(kernel=(2, 2), strides=(2, 2))
\\\hline
Flatten()
\\\hline
Dense(outputs=10)
\\\hline
\end{tabular}}
\end{table}

\clearpage
\section{Additional Experimental Results}\label{sec:additional_experiments}

In this section we provide additional experiments comparing 
\acronym, \AOF and baselines: Figures~\ref{fig:accuracy_extra} 
and \ref{fig:accuracy_extra_single} and following tables 
show their \emph{expected approximation error} 
(lower is better) for workload matrices stemming from SGD with 
different hyperparameter settings.
Figure~\ref{fig:cifar10appendix}) and following tables show 
the accuracy of resulting 
classifiers on CIFAR-10 (higher is better) for different privacy 
levels. 

The results show the same trends as the one in Section~\ref{sec:experiments}.
\acronym achieves almost identical expected approximation error
as \AOF, and results in equally good classifiers.
In some cases, results for \acronym even improve over \AOF's. 
Presumably this is because of numerical issues in solving 
the optimization problem for \AOF.

\begin{figure}[H]
\includegraphics[width=.48\textwidth]{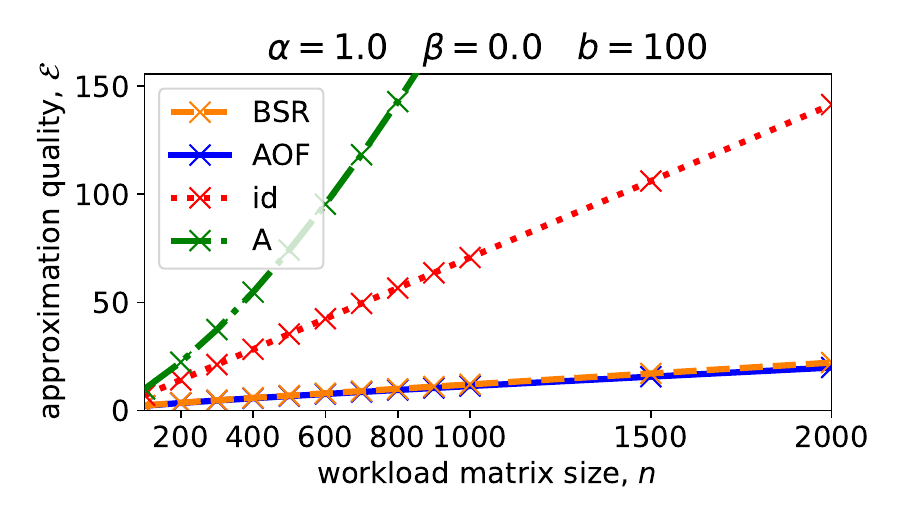}
\quad
\includegraphics[width=.48\textwidth]{experiments/chl/quality-alpha1-beta0.9-b100-p100.pdf}
\\
\includegraphics[width=.48\textwidth]{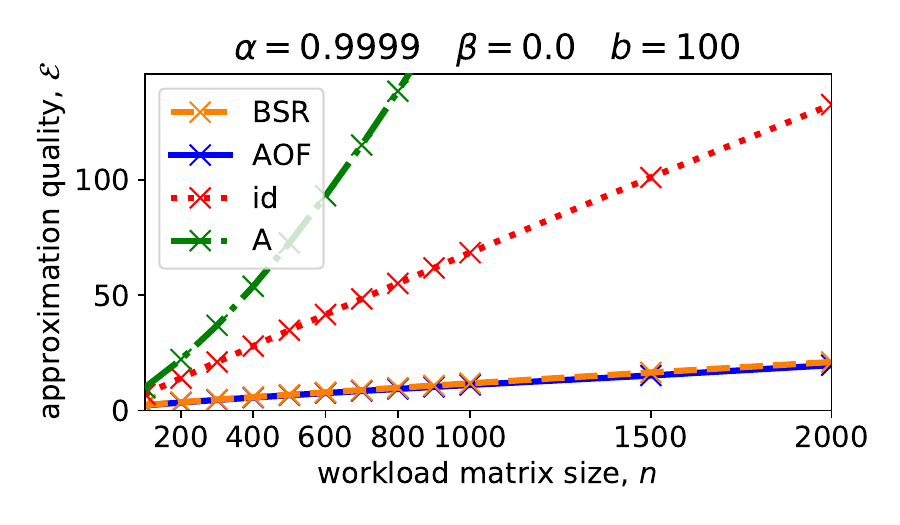}
\quad
\includegraphics[width=.48\textwidth]{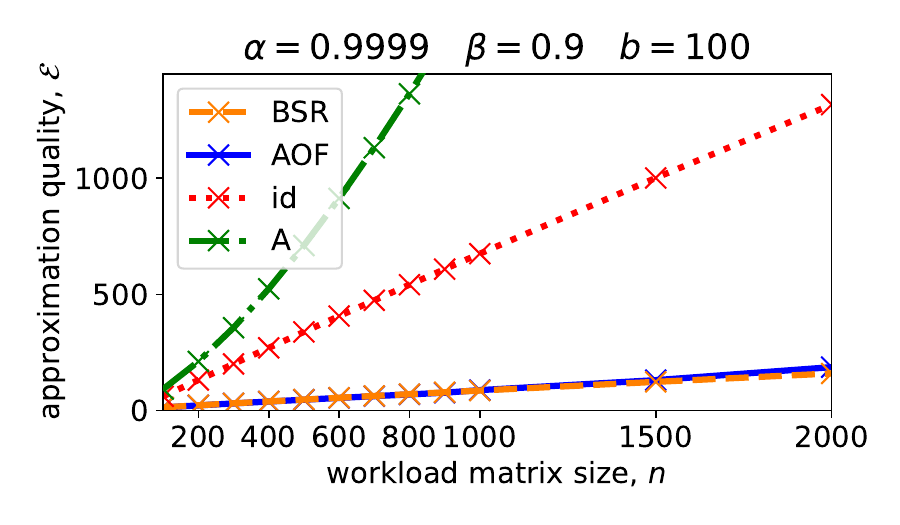}
\\
\includegraphics[width=.48\textwidth]{experiments/chl/quality-alpha0.999-beta0-b100-p100.pdf}
\quad
\includegraphics[width=.48\textwidth]{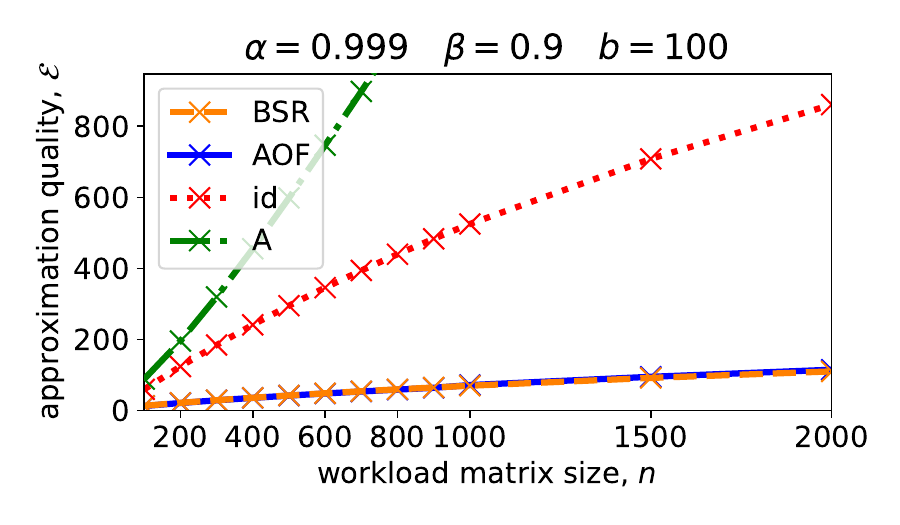}
\\
\includegraphics[width=.48\textwidth]{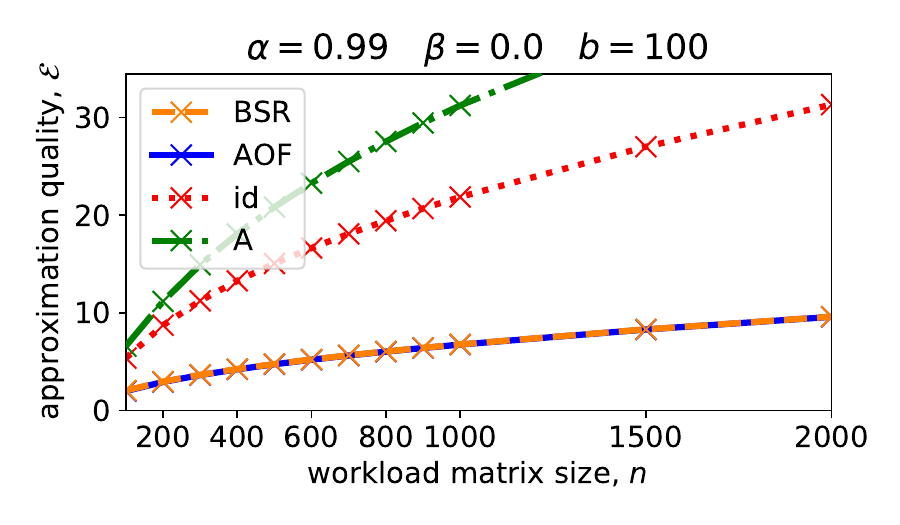}
\quad
\includegraphics[width=.48\textwidth]{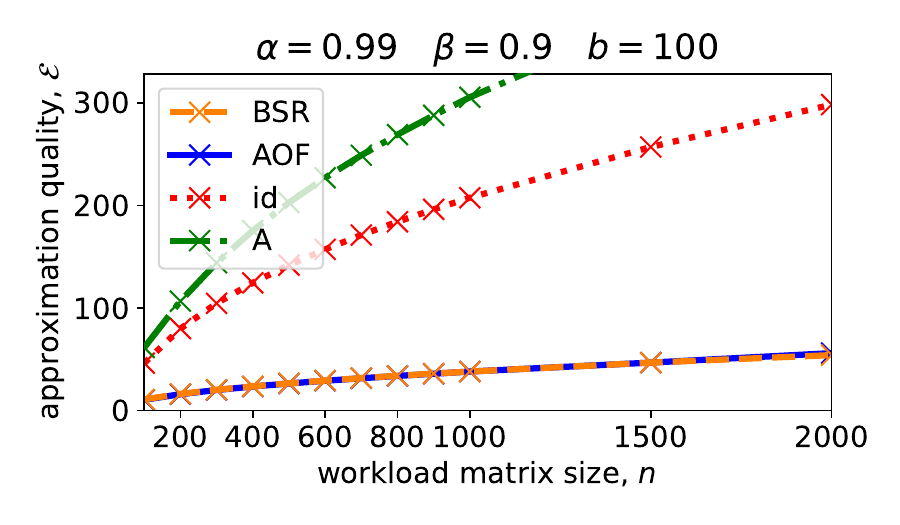}
\caption{Expected approximation error of $p$-\acronym, \AOF and baseline factorizations with 
lmultiple participations and $p=b=100$. }\label{fig:accuracy_extra}
\end{figure}

\begin{figure}[H]
\includegraphics[width=.48\textwidth]{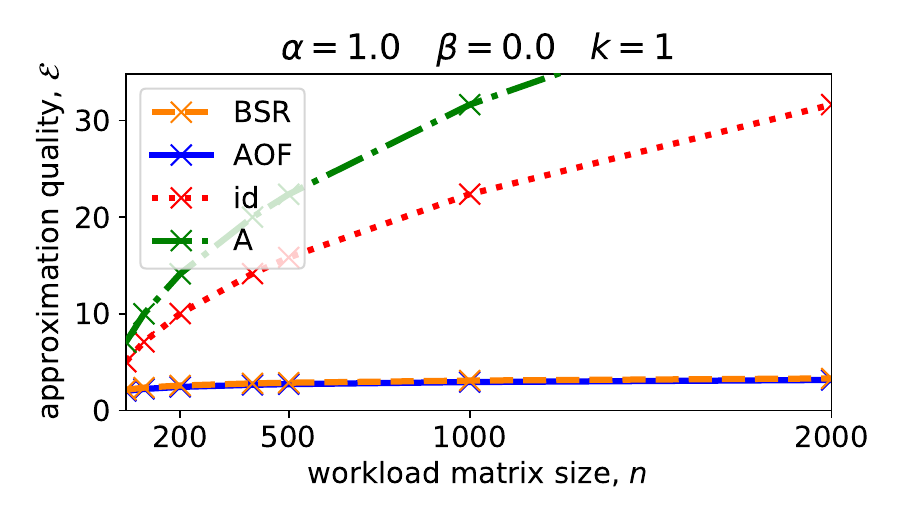}
\quad
\includegraphics[width=.48\textwidth]{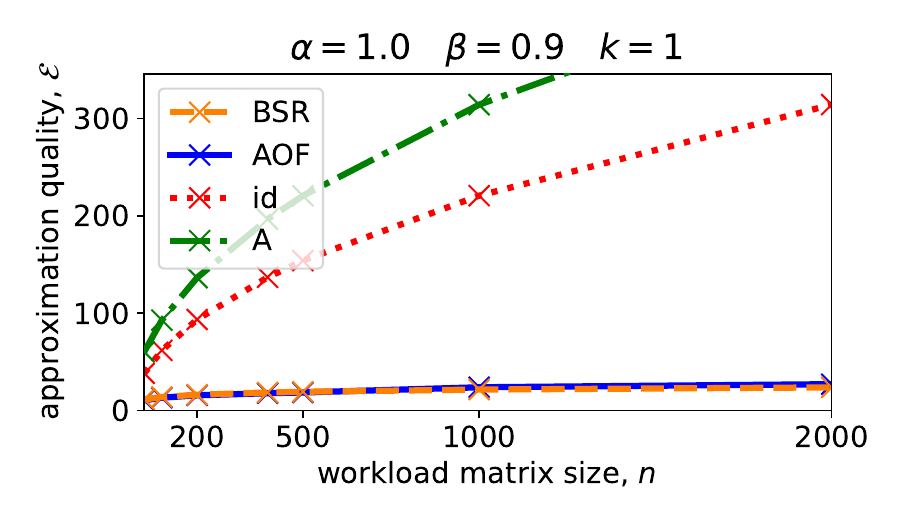}
\\
\includegraphics[width=.48\textwidth]{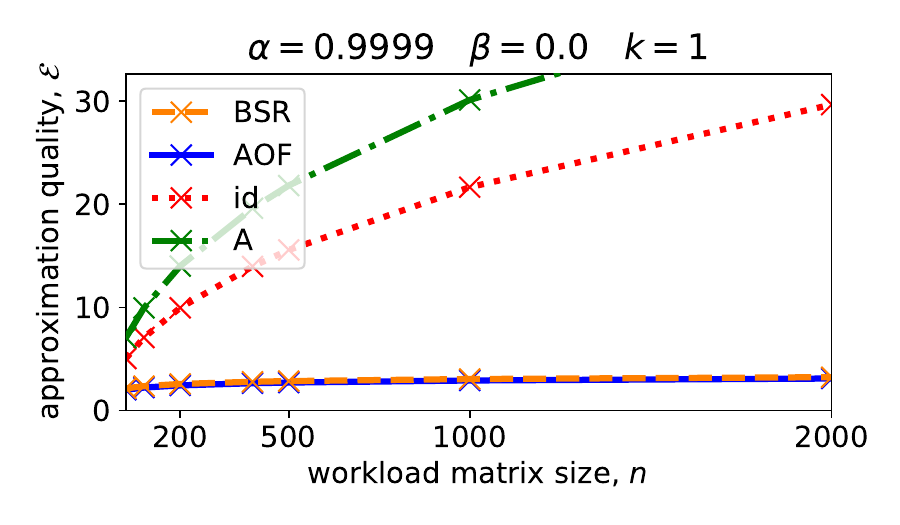}
\quad
\includegraphics[width=.48\textwidth]{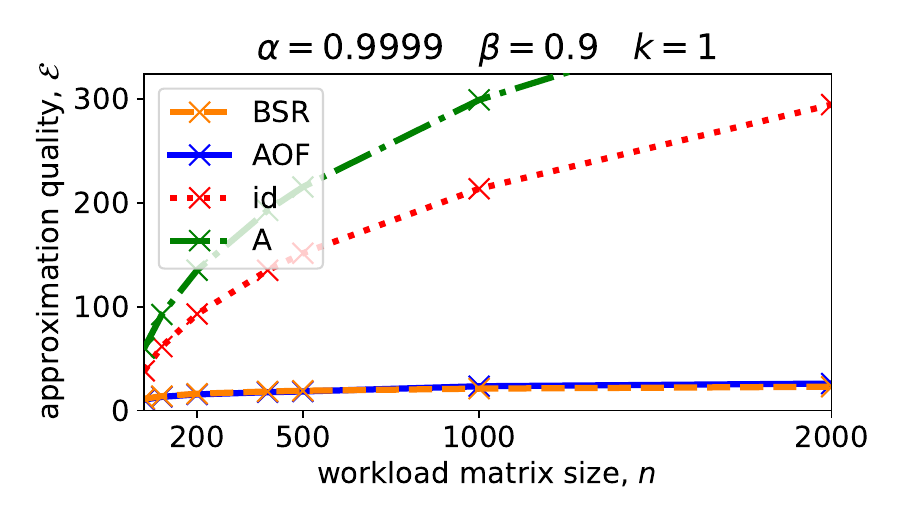}
\\
\includegraphics[width=.48\textwidth]{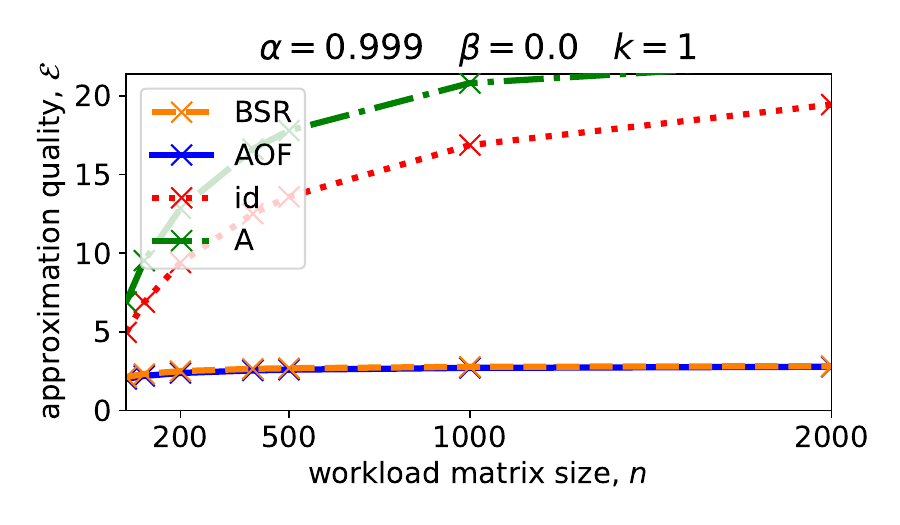}
\quad
\includegraphics[width=.48\textwidth]{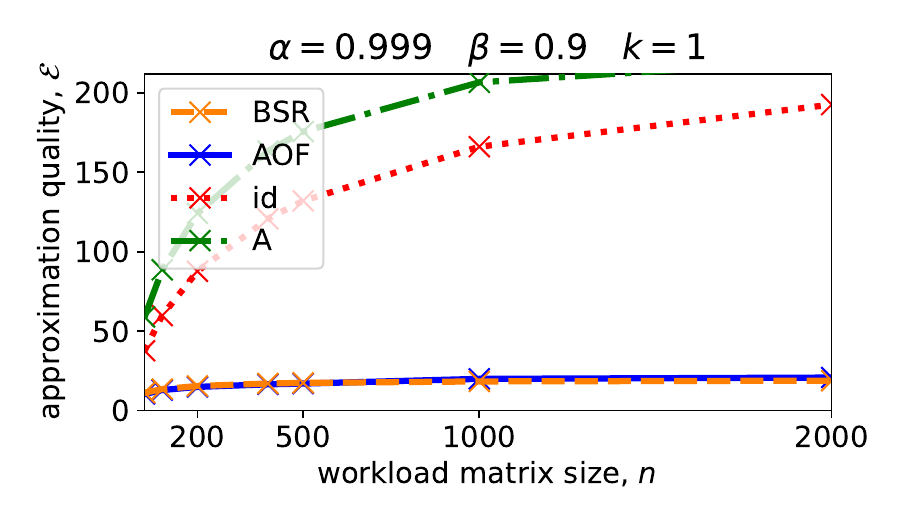}
\\
\includegraphics[width=.48\textwidth]{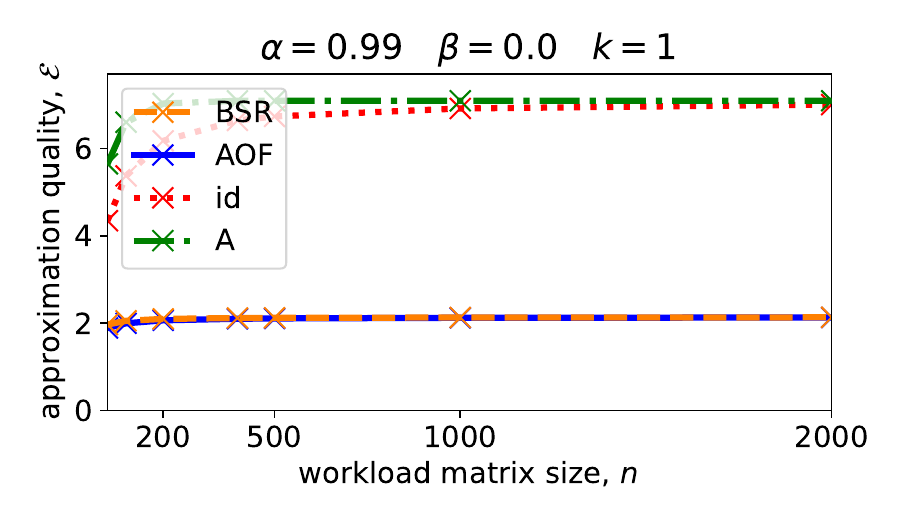}
\quad
\includegraphics[width=.48\textwidth]{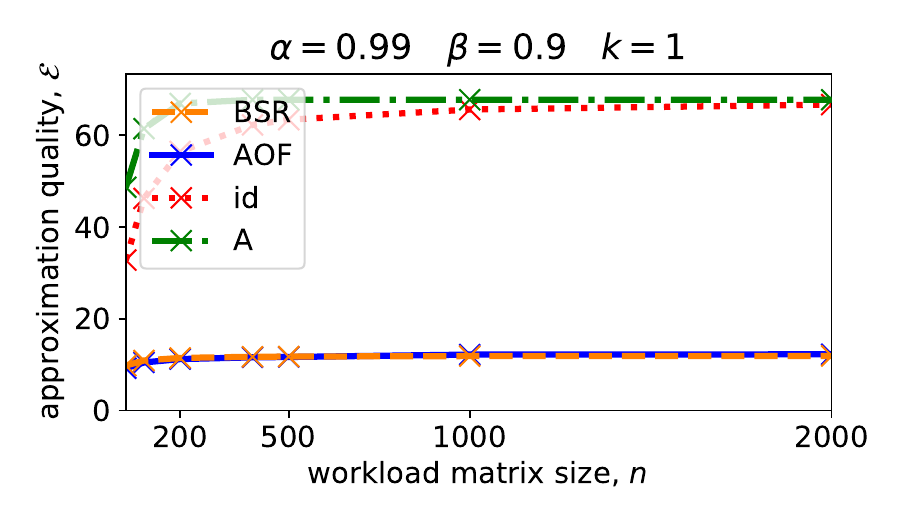}
\caption{Expected approximation error of \acronym, \AOF and baseline factorizations with 
single participation ($k=1$, $p=b=n$). }\label{fig:accuracy_extra_single}
\end{figure}

\begin{table}[H] \centering
\begin{tabular}{rccccc}
\toprule
    & \multicolumn{5}{c}{expected factorization error}\\
$n$ & \acronym & sqrt & \AOF & $\Id$ & $A$\\
\midrule
100& $2.4$& $2.4$& $2.2$& $7.1$& $10.0$\\
200& $3.6$& $4.0$& $3.5$& $14.2$& $22.4$\\
300& $4.8$& $5.5$& $4.6$& $21.2$& $37.4$\\
400& $5.9$& $6.9$& $5.7$& $28.3$& $54.8$\\
500& $6.9$& $8.4$& $6.7$& $35.4$& $74.2$\\
600& $8.0$& $9.9$& $7.7$& $42.5$& $95.4$\\
700& $9.0$& $11.3$& $8.6$& $49.5$& $118.3$\\
800& $10.1$& $12.8$& $9.5$& $56.6$& $142.8$\\
900& $11.1$& $14.2$& $10.4$& $63.7$& $168.8$\\
1000& $12.1$& $15.7$& $11.3$& $70.7$& $196.2$\\
1500& $17.2$& $23.1$& $15.7$& $106.1$& $352.1$\\
2000& $22.2$& $30.6$& $19.9$& $141.5$& $535.7$\\
\bottomrule
\end{tabular}
\caption{Numeric results for Figure~\ref{fig:accuracy_extra} as well as a plain square root decomposition: $\alpha=1$, $\beta=0$, $k=1$, $b=k/n$}
\end{table}

\begin{table}[H] \centering
\begin{tabular}{rccccc}
\toprule
    & \multicolumn{5}{c}{expected factorization error}\\
$n$ & \acronym & sqrt & \AOF & $\Id$ & $A$\\
\midrule
100& $13.9$& $13.9$& $13.3$& $61.8$& $92.9$\\
200& $22.9$& $25.7$& $22.4$& $132.3$& $213.2$\\
300& $31.4$& $37.4$& $31.0$& $202.9$& $361.2$\\
400& $39.7$& $49.1$& $39.3$& $273.6$& $532.6$\\
500& $48.0$& $61.0$& $47.4$& $344.3$& $724.7$\\
600& $56.2$& $73.0$& $55.4$& $415.0$& $935.3$\\
700& $64.3$& $85.1$& $63.2$& $485.7$& $1163.0$\\
800& $72.5$& $97.2$& $71.1$& $556.4$& $1406.6$\\
900& $80.6$& $109.5$& $78.8$& $627.1$& $1665.2$\\
1000& $88.7$& $121.8$& $90.6$& $697.8$& $1937.9$\\
1500& $129.2$& $184.2$& $137.2$& $1051.3$& $3491.5$\\
2000& $169.7$& $247.8$& $196.9$& $1404.9$& $5322.6$\\
\bottomrule
\end{tabular}
\caption{Numeric results for Figure~\ref{fig:accuracy_extra} as well as a plain square root decomposition: $\alpha=1$, $\beta=0.9$, $k=1$, $b=k/n$}
\end{table}

\begin{table}[H] \centering
\begin{tabular}{rccccc}
\toprule
    & \multicolumn{5}{c}{expected factorization error}\\
$n$ & \acronym & sqrt & \AOF & $\Id$ & $A$\\
\midrule
100& $2.4$& $2.4$& $2.2$& $7.1$& $10.0$\\
200& $3.6$& $3.9$& $3.5$& $14.1$& $22.2$\\
300& $4.8$& $5.4$& $4.6$& $21.0$& $36.9$\\
400& $5.8$& $6.9$& $5.7$& $27.9$& $53.9$\\
500& $6.9$& $8.3$& $6.6$& $34.8$& $72.7$\\
600& $7.9$& $9.7$& $7.6$& $41.6$& $93.1$\\
700& $8.9$& $11.1$& $8.5$& $48.4$& $115.1$\\
800& $9.9$& $12.5$& $9.4$& $55.1$& $138.4$\\
900& $10.9$& $13.9$& $10.3$& $61.8$& $163.0$\\
1000& $11.8$& $15.3$& $11.1$& $68.5$& $188.7$\\
1500& $16.5$& $22.3$& $15.2$& $101.1$& $332.6$\\
2000& $21.0$& $29.1$& $19.7$& $132.6$& $496.8$\\
\bottomrule
\end{tabular}
\caption{Numeric results for Figure~\ref{fig:accuracy_extra} as well as a plain square root decomposition: $\alpha=0.9999$, $\beta=0$, $k=1$, $b=k/n$}
\end{table}

\begin{table}[H] \centering
\begin{tabular}{rccccc}
\toprule
    & \multicolumn{5}{c}{expected factorization error}\\
$n$ & \acronym & sqrt & \AOF & $\Id$ & $A$\\
\midrule
100& $13.9$& $13.9$& $13.2$& $61.6$& $92.4$\\
200& $22.8$& $25.5$& $22.3$& $131.4$& $211.4$\\
300& $31.2$& $37.0$& $30.8$& $200.9$& $356.7$\\
400& $39.3$& $48.6$& $38.9$& $270.0$& $524.0$\\
500& $47.3$& $60.1$& $46.8$& $338.6$& $710.3$\\
600& $55.3$& $71.7$& $54.5$& $406.8$& $913.3$\\
700& $63.1$& $83.3$& $62.1$& $474.6$& $1131.5$\\
800& $70.9$& $95.0$& $69.6$& $541.9$& $1363.5$\\
900& $78.6$& $106.6$& $77.0$& $608.8$& $1608.1$\\
1000& $86.2$& $118.3$& $88.1$& $675.3$& $1864.6$\\
1500& $123.7$& $176.5$& $131.0$& $1001.3$& $3298.4$\\
2000& $159.9$& $234.4$& $187.0$& $1317.1$& $4937.9$\\
\bottomrule
\end{tabular}
\caption{Numeric results for Figure~\ref{fig:accuracy_extra} as well as a plain square root decomposition: $\alpha=0.9999$, $\beta=0.9$, $k=1$, $b=k/n$}
\end{table}

\begin{table}[H] \centering
\begin{tabular}{rccccc}
\toprule
    & \multicolumn{5}{c}{expected factorization error}\\
$n$ & \acronym & sqrt & \AOF & $\Id$ & $A$\\
\midrule
100& $2.3$& $2.3$& $2.2$& $6.9$& $9.5$\\
200& $3.5$& $3.8$& $3.4$& $13.3$& $20.5$\\
300& $4.6$& $5.1$& $4.4$& $19.3$& $33.1$\\
400& $5.5$& $6.4$& $5.4$& $25.0$& $46.7$\\
500& $6.4$& $7.6$& $6.2$& $30.4$& $61.1$\\
600& $7.2$& $8.7$& $7.0$& $35.4$& $76.0$\\
700& $8.0$& $9.8$& $7.7$& $40.2$& $91.2$\\
800& $8.7$& $10.8$& $8.4$& $44.8$& $106.5$\\
900& $9.4$& $11.8$& $9.1$& $49.2$& $122.0$\\
1000& $10.0$& $12.8$& $9.7$& $53.3$& $137.4$\\
1500& $13.0$& $17.2$& $12.5$& $71.6$& $212.7$\\
2000& $15.4$& $21.1$& $14.8$& $86.9$& $282.9$\\
\bottomrule
\end{tabular}
\caption{Numeric results for Figure~\ref{fig:accuracy_extra} as well as a plain square root decomposition: $\alpha=0.999$, $\beta=0$, $k=1$, $b=k/n$}
\end{table}

\begin{table}[H] \centering
\begin{tabular}{rccccc}
\toprule
    & \multicolumn{5}{c}{expected factorization error}\\
$n$ & \acronym & sqrt & \AOF & $\Id$ & $A$\\
\midrule
100& $13.5$& $13.5$& $12.9$& $59.8$& $88.6$\\
200& $21.8$& $24.2$& $21.4$& $124.0$& $195.8$\\
300& $29.2$& $34.3$& $28.9$& $184.5$& $319.9$\\
400& $36.1$& $44.0$& $35.8$& $241.4$& $455.3$\\
500& $42.5$& $53.3$& $42.2$& $295.2$& $598.5$\\
600& $48.6$& $62.3$& $48.1$& $346.0$& $746.7$\\
700& $54.3$& $70.9$& $53.7$& $394.2$& $898.2$\\
800& $59.8$& $79.2$& $59.0$& $440.0$& $1051.7$\\
900& $65.0$& $87.2$& $64.1$& $483.6$& $1205.9$\\
1000& $70.0$& $95.0$& $71.8$& $525.1$& $1360.2$\\
1500& $91.9$& $130.3$& $95.1$& $708.4$& $2113.9$\\
2000& $110.3$& $161.0$& $115.0$& $861.2$& $2816.2$\\
\bottomrule
\end{tabular}
\caption{Numeric results for Figure~\ref{fig:accuracy_extra} as well as a plain square root decomposition: $\alpha=0.999$, $\beta=0.9$, $k=1$, $b=k/n$}
\end{table}

\begin{table}[H] \centering
\begin{tabular}{rccccc}
\toprule
    & \multicolumn{5}{c}{expected factorization error}\\
$n$ & \acronym & sqrt & \AOF & $\Id$ & $A$\\
\midrule
100& $2.1$& $2.1$& $2.0$& $5.4$& $6.6$\\
200& $3.0$& $3.1$& $2.9$& $8.7$& $11.2$\\
300& $3.7$& $3.8$& $3.6$& $11.2$& $14.9$\\
400& $4.3$& $4.5$& $4.2$& $13.3$& $18.1$\\
500& $4.8$& $5.0$& $4.7$& $15.1$& $20.8$\\
600& $5.2$& $5.5$& $5.2$& $16.6$& $23.3$\\
700& $5.7$& $6.0$& $5.6$& $18.1$& $25.5$\\
800& $6.1$& $6.4$& $6.0$& $19.4$& $27.5$\\
900& $6.4$& $6.8$& $6.4$& $20.7$& $29.4$\\
1000& $6.8$& $7.2$& $6.8$& $21.9$& $31.2$\\
1500& $8.3$& $8.9$& $8.3$& $27.0$& $38.9$\\
2000& $9.6$& $10.3$& $9.6$& $31.3$& $45.4$\\
\bottomrule
\end{tabular}
\caption{Numeric results for Figure~\ref{fig:accuracy_extra} as well as a plain square root decomposition: $\alpha=0.99$, $\beta=0$, $k=1$, $b=k/n$}
\end{table}

\begin{table}[H] \centering
\begin{tabular}{rccccc}
\toprule
    & \multicolumn{5}{c}{expected factorization error}\\
$n$ & \acronym & sqrt & \AOF & $\Id$ & $A$\\
\midrule
100& $10.9$& $10.9$& $10.5$& $46.2$& $61.4$\\
200& $16.2$& $17.0$& $15.9$& $79.9$& $106.7$\\
300& $20.3$& $21.7$& $20.0$& $104.5$& $144.0$\\
400& $23.6$& $25.7$& $23.4$& $124.5$& $175.4$\\
500& $26.6$& $29.1$& $26.4$& $141.7$& $202.7$\\
600& $29.2$& $32.1$& $29.1$& $157.1$& $226.9$\\
700& $31.7$& $34.9$& $31.5$& $171.1$& $248.8$\\
800& $33.9$& $37.5$& $33.8$& $184.0$& $268.9$\\
900& $36.0$& $40.0$& $35.9$& $196.1$& $287.7$\\
1000& $38.0$& $42.3$& $38.1$& $207.4$& $305.3$\\
1500& $46.7$& $52.2$& $46.8$& $256.9$& $381.4$\\
2000& $54.1$& $60.6$& $56.0$& $298.2$& $444.6$\\
\bottomrule
\end{tabular}
\caption{Numeric results for Figure~\ref{fig:accuracy_extra} as well as a plain square root decomposition: $\alpha=0.99$, $\beta=0.9$, $k=1$, $b=k/n$}
\end{table}

\begin{table}[H] \centering
\begin{tabular}{rccccc}
\toprule
    & \multicolumn{5}{c}{expected factorization error}\\
$n$ & \acronym & sqrt & \AOF & $\Id$ & $A$\\
\midrule
50& $2.2$& $2.2$& $2.0$& $5.0$& $7.1$\\
100& $2.4$& $2.4$& $2.2$& $7.1$& $10.0$\\
200& $2.6$& $2.6$& $2.4$& $10.0$& $14.1$\\
400& $2.8$& $2.8$& $2.7$& $14.2$& $20.0$\\
500& $2.9$& $2.9$& $2.7$& $15.8$& $22.4$\\
1000& $3.1$& $3.1$& $2.9$& $22.4$& $31.6$\\
2000& $3.3$& $3.3$& $3.2$& $31.6$& $44.7$\\
\bottomrule
\end{tabular}
\caption{Numeric results for Figure~\ref{fig:accuracy_extra_single} as well as a plain square root decomposition: $\alpha=1$, $\beta=0$, $b=0$}
\end{table}

\begin{table}[H] \centering
\begin{tabular}{rccccc}
\toprule
    & \multicolumn{5}{c}{expected factorization error}\\
$n$ & \acronym & sqrt & \AOF & $\Id$ & $A$\\
\midrule
50& $11.4$& $11.4$& $10.6$& $38.2$& $60.3$\\
100& $13.9$& $13.9$& $13.3$& $61.8$& $92.9$\\
200& $16.3$& $16.3$& $15.7$& $93.5$& $136.5$\\
400& $18.6$& $18.6$& $17.8$& $136.8$& $196.5$\\
500& $19.3$& $19.3$& $18.6$& $154.0$& $220.5$\\
1000& $21.6$& $21.6$& $23.9$& $220.7$& $314.0$\\
2000& $23.8$& $23.8$& $27.1$& $314.1$& $445.7$\\
\bottomrule
\end{tabular}
\caption{Numeric results for Figure~\ref{fig:accuracy_extra_single} as well as a plain square root decomposition: $\alpha=1$, $\beta=0.9$, $b=0$}
\end{table}

\begin{table}[H] \centering
\begin{tabular}{rccccc}
\toprule
    & \multicolumn{5}{c}{expected factorization error}\\
$n$ & \acronym & sqrt & \AOF & $\Id$ & $A$\\
\midrule
50& $2.1$& $2.1$& $2.0$& $5.0$& $7.1$\\
100& $2.4$& $2.4$& $2.2$& $7.1$& $10.0$\\
200& $2.6$& $2.6$& $2.4$& $10.0$& $14.0$\\
400& $2.8$& $2.8$& $2.7$& $14.0$& $19.6$\\
500& $2.9$& $2.9$& $2.7$& $15.6$& $21.8$\\
1000& $3.1$& $3.1$& $2.9$& $21.7$& $30.1$\\
2000& $3.2$& $3.2$& $3.1$& $29.7$& $40.6$\\
\bottomrule
\end{tabular}
\caption{Numeric results for Figure~\ref{fig:accuracy_extra_single} as well as a plain square root decomposition: $\alpha=0.9999$, $\beta=0$, $b=0$}
\end{table}

\begin{table}[H] \centering
\begin{tabular}{rccccc}
\toprule
    & \multicolumn{5}{c}{expected factorization error}\\
$n$ & \acronym & sqrt & \AOF & $\Id$ & $A$\\
\midrule
50& $11.4$& $11.4$& $10.6$& $38.2$& $60.2$\\
100& $13.9$& $13.9$& $13.2$& $61.6$& $92.4$\\
200& $16.2$& $16.2$& $15.6$& $92.9$& $135.2$\\
400& $18.4$& $18.4$& $17.7$& $135.0$& $192.7$\\
500& $19.1$& $19.1$& $18.4$& $151.4$& $215.2$\\
1000& $21.1$& $21.1$& $23.4$& $213.5$& $299.1$\\
2000& $23.0$& $23.0$& $26.0$& $294.5$& $404.7$\\
\bottomrule
\end{tabular}
\caption{Numeric results for Figure~\ref{fig:accuracy_extra_single} as well as a plain square root decomposition: $\alpha=0.9999$, $\beta=0.9$, $b=0$}
\end{table}

\begin{table}[H] \centering
\begin{tabular}{rccccc}
\toprule
    & \multicolumn{5}{c}{expected factorization error}\\
$n$ & \acronym & sqrt & \AOF & $\Id$ & $A$\\
\midrule
50& $2.1$& $2.1$& $2.0$& $5.0$& $6.9$\\
100& $2.3$& $2.3$& $2.2$& $6.9$& $9.5$\\
200& $2.5$& $2.5$& $2.4$& $9.4$& $12.8$\\
400& $2.7$& $2.7$& $2.6$& $12.5$& $16.6$\\
500& $2.7$& $2.7$& $2.6$& $13.6$& $17.8$\\
1000& $2.8$& $2.8$& $2.7$& $16.9$& $20.8$\\
2000& $2.8$& $2.8$& $2.8$& $19.4$& $22.2$\\
\bottomrule
\end{tabular}
\caption{Numeric results for Figure~\ref{fig:accuracy_extra_single} as well as a plain square root decomposition: $\alpha=0.999$, $\beta=0$, $b=0$}
\end{table}

\begin{table}[H] \centering
\begin{tabular}{rccccc}
\toprule
    & \multicolumn{5}{c}{expected factorization error}\\
$n$ & \acronym & sqrt & \AOF & $\Id$ & $A$\\
\midrule
50& $11.2$& $11.2$& $10.4$& $37.6$& $59.0$\\
100& $13.5$& $13.5$& $12.9$& $59.8$& $88.6$\\
200& $15.5$& $15.5$& $14.9$& $87.7$& $124.2$\\
400& $17.0$& $17.0$& $16.5$& $120.7$& $163.3$\\
500& $17.4$& $17.4$& $17.0$& $132.0$& $175.6$\\
1000& $18.4$& $18.4$& $20.0$& $166.1$& $206.6$\\
2000& $18.8$& $18.8$& $20.8$& $192.6$& $220.5$\\
\bottomrule
\end{tabular}
\caption{Numeric results for Figure~\ref{fig:accuracy_extra_single} as well as a plain square root decomposition: $\alpha=0.999$, $\beta=0.9$, $b=0$}
\end{table}

\begin{table}[H] \centering
\begin{tabular}{rccccc}
\toprule
    & \multicolumn{5}{c}{expected factorization error}\\
$n$ & \acronym & sqrt & \AOF & $\Id$ & $A$\\
\midrule
50& $2.0$& $2.0$& $1.9$& $4.3$& $5.6$\\
100& $2.1$& $2.1$& $2.0$& $5.4$& $6.6$\\
200& $2.1$& $2.1$& $2.1$& $6.2$& $7.0$\\
400& $2.1$& $2.1$& $2.1$& $6.6$& $7.1$\\
500& $2.1$& $2.1$& $2.1$& $6.7$& $7.1$\\
1000& $2.1$& $2.1$& $2.1$& $6.9$& $7.1$\\
2000& $2.1$& $2.1$& $2.1$& $7.0$& $7.1$\\
\bottomrule
\end{tabular}
\caption{Numeric results for Figure~\ref{fig:accuracy_extra_single} as well as a plain square root decomposition: $\alpha=0.99$, $\beta=0$, $b=0$}
\end{table}

\begin{table}[H] \centering
\begin{tabular}{rccccc}
\toprule
    & \multicolumn{5}{c}{expected factorization error}\\
$n$ & \acronym & sqrt & \AOF & $\Id$ & $A$\\
\midrule
50& $9.8$& $9.8$& $9.2$& $32.8$& $48.7$\\
100& $10.9$& $10.9$& $10.5$& $46.2$& $61.4$\\
200& $11.5$& $11.5$& $11.2$& $56.5$& $66.9$\\
400& $11.7$& $11.7$& $11.6$& $62.3$& $67.7$\\
500& $11.8$& $11.8$& $11.7$& $63.4$& $67.7$\\
1000& $11.9$& $11.9$& $12.2$& $65.6$& $67.7$\\
2000& $11.9$& $11.9$& $12.3$& $66.7$& $67.7$\\
\bottomrule
\end{tabular}
\caption{Numeric results for Figure~\ref{fig:accuracy_extra_single} as well as a plain square root decomposition: $\alpha=0.99$, $\beta=0.9$, $b=0$}
\end{table}

\begin{figure}[H]
\includegraphics[width=.48\textwidth]{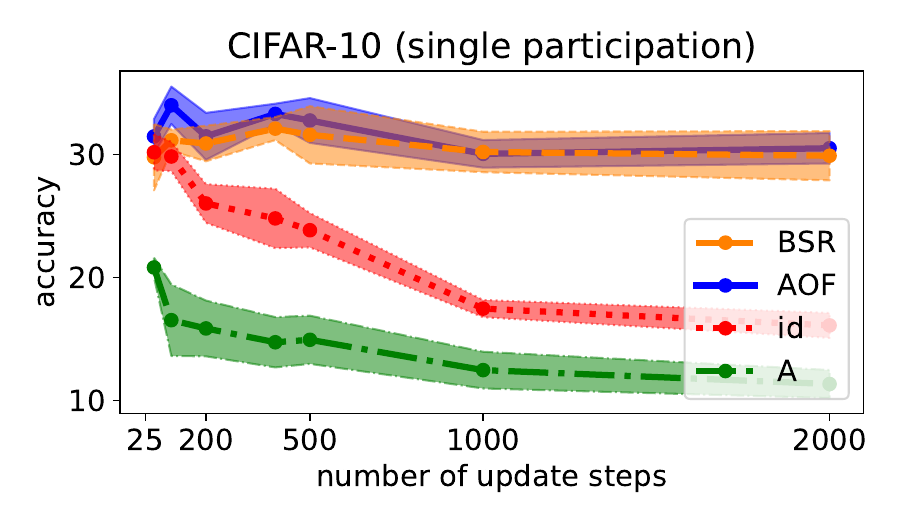}
\quad
\includegraphics[width=.48\textwidth]{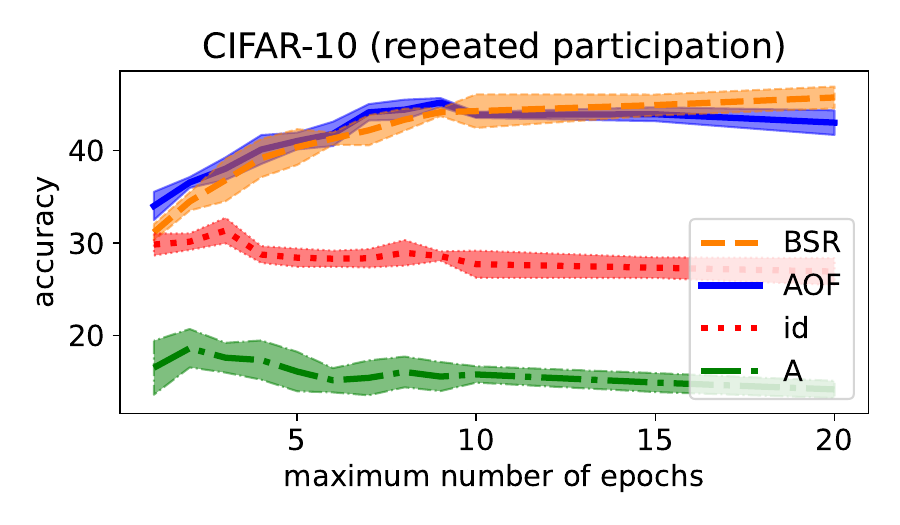}
\\
\includegraphics[width=.48\textwidth]{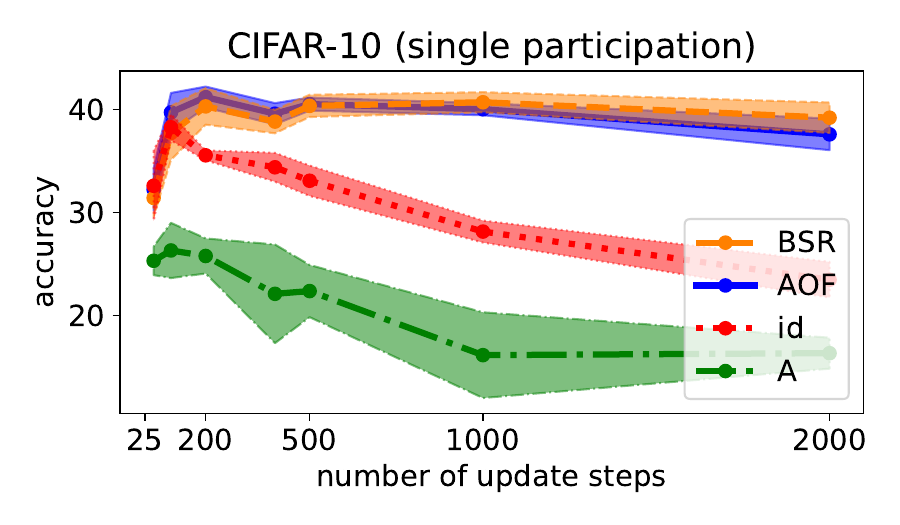}
\quad
\includegraphics[width=.48\textwidth]{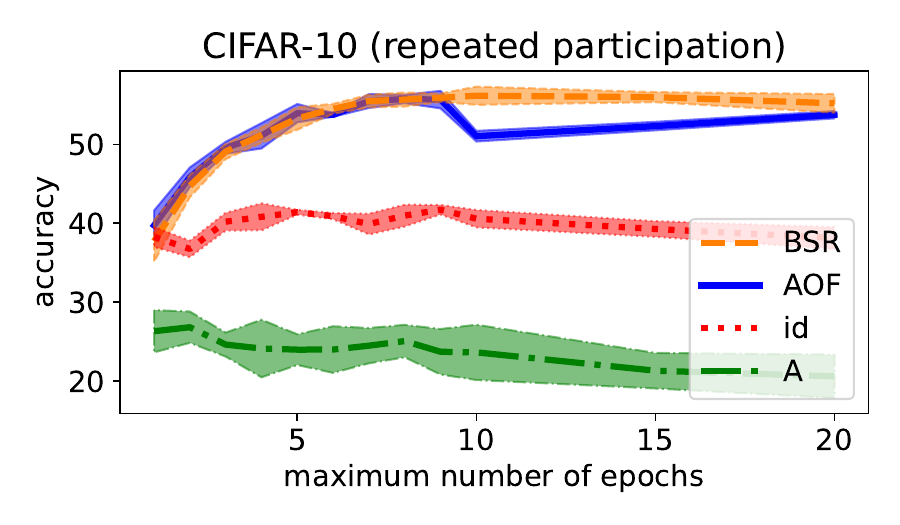}
\caption{Classification accuracy (mean and standard deviation over 5 runs with different random seeds) on CIFAR-10 for \acronym, \AOF, and baselines for independent training runs. Top row: classification accuracy on CIFAR-10 with $(\epsilon,\delta)=(2,10^{-5})$. Bottom row: classification accuracy on CIFAR-10 with $(\epsilon,\delta)=(8,10^{-5})$. Left plots: one epoch, different batch sizes. Right plots: different number of epochs, constant batch size. }\label{fig:cifar10appendix}
\end{figure}

\begin{table}[H]\centering
\begin{tabular}{rcccc}
\toprule
    & \multicolumn{4}{c}{accuracy}\\
number of updates & \acronym & \AOF & $\Id$ & $A$\\
\midrule
50 & $\res{29.548000000000002}{2.0599320377138644}$& $\res{31.5}{0.9751153777886997}$& $\res{31.416000000000004}{2.3430386253751787}$& $\res{21.798000000000002}{2.118707152959087}$\\
100 & $\res{34.838}{2.2033542611209858}$& $\res{36.156}{3.0465111192969587}$& $\res{30.764}{0.7720945537950649}$& $\res{22.740000000000002}{1.9033785750606742}$\\
200 & $\res{34.534}{0.808721212779782}$& $\res{37.238}{2.175102296444927}$& $\res{33.488}{1.5780272494478698}$& $\res{18.728}{3.138832585532398}$\\
400 & $\res{35.782}{1.7480474821926313}$& $\res{36.44}{1.079212675981894}$& $\res{27.504}{1.1469219677031233}$& $\res{17.407999999999998}{2.3597287979765817}$\\
500 & $\res{36.322}{2.0398946051205677}$& $\res{37.852}{0.657282283345597}$& $\res{27.684000000000005}{0.741673782737398}$& $\res{17.198}{1.7579448227973482}$\\
1000 & $\res{36.158}{1.317789816321251}$& $\res{35.674}{1.1322234761742058}$& $\res{27.24}{1.0490948479522715}$& $\res{13.6}{1.799652744281518}$\\
2000 & $\res{33.046}{1.675493957016855}$& $\res{33.43}{1.133953261823436}$& $\res{21.906}{2.1442317971711935}$& $\res{12.684000000000001}{1.5240341203529533}$\\
\bottomrule
\end{tabular}
\caption{Numeric values for results in Table~\ref{fig:cifar10} left plot (CIFAR-10, single participation, $(\epsilon,\delta)=(4,10^{-5})$.}
\end{table}

\begin{table}[H]\centering
\begin{tabular}{rcccc}
\toprule
    & \multicolumn{4}{c}{accuracy}\\
number of epochs & \acronym & \AOF & $\Id$ & $A$\\
\midrule
1 & $\res{34.838}{2.2033542611209858}$& $\res{36.156}{3.0465111192969587}$& $\res{30.764}{0.7720945537950649}$& $\res{22.740000000000002}{1.9033785750606742}$\\
2 & $\res{42.438}{0.6371185133081598}$& $\res{43.554}{1.0171676361347708}$& $\res{36.254}{1.9013232234420292}$& $\res{21.252000000000002}{2.4390510449763045}$\\
3 & $\res{44.842}{1.0103316287239559}$& $\res{46.148}{0.6575484773003463}$& $\res{36.56}{0.5424020648928245}$& $\res{19.22}{3.128114448034151}$\\
4 & $\res{45.812}{0.712509649057469}$& $\res{46.624}{0.5632317462643572}$& $\res{36.592}{1.129101412628645}$& $\res{20.236}{2.9883908713553513}$\\
5 & $\res{45.958}{0.9468474005878682}$& $\res{46.32000000000001}{0.8681301745706104}$& $\res{36.980000000000004}{0.5969924622639721}$& $\res{21.268}{1.5006398635248908}$\\
6 & $\res{46.004}{0.761859567111945}$& $\res{45.8}{0.9149863386958345}$& $\res{37.284}{1.350455478718198}$& $\res{20.078}{2.5377490025611276}$\\
7 & $\res{45.62}{0.728319984622142}$& $\res{46.504000000000005}{0.9050027624267247}$& $\res{36.598}{1.1371763275763358}$& $\res{19.607999999999997}{3.403435029495937}$\\
8 & $\res{45.854}{0.7364984725035066}$& $\res{46.648}{0.7956255400626621}$& $\res{35.82}{0.5239751902523622}$& $\res{18.82}{2.194846236072131}$\\
9 & $\res{46.676}{0.6660555532386166}$& $\res{46.702}{0.7533724178651612}$& $\res{34.815999999999995}{1.0238310407484235}$& $\res{17.644}{1.615775355672936}$\\
10 & $\res{47.598}{0.48684699855293356}$& $\res{47.606}{0.8415640201434451}$& $\res{34.938}{1.1037979887642486}$& $\res{16.794}{2.735320822134033}$\\
15 & $\res{50.302}{0.7488457785151768}$& $\res{47.948}{0.8296505288372922}$& $\res{33.010000000000005}{0.8083625424275908}$& $\res{17.806}{0.9713032482186001}$\\
20 & $\res{53.141999999999996}{1.6447553009490499}$& $\res{46.988}{0.19070920271449723}$& $\res{35.04}{0.7007139216541932}$& $\res{17.728}{1.2466635472331742}$\\
\bottomrule
\end{tabular}
\caption{Numeric values for results in Table~\ref{fig:cifar10} right plot (CIFAR-10, repeated participation, $(\epsilon,\delta)=(4,10^{-5})$.}
\end{table}

\begin{table}[H]\centering
\begin{tabular}{rcccc}
\toprule
    & \multicolumn{4}{c}{accuracy}\\
number of updates & \acronym & \AOF & $\Id$ & $A$\\
\midrule
50 & $\res{29.782}{2.698827523203364}$& $\res{31.474}{1.4139589810174826}$& $\res{30.188}{1.474337139191711}$& $\res{20.816}{0.8005185819204956}$\\
100 & $\res{31.157999999999998}{0.8728802896159356}$& $\res{34.006}{1.5220808125720524}$& $\res{29.839999999999996}{1.1917424218345172}$& $\res{16.532}{2.9139269036816957}$\\
200 & $\res{30.9}{1.4493446795017408}$& $\res{31.479999999999997}{1.908821626030047}$& $\res{26.022000000000002}{1.574585659784821}$& $\res{15.857999999999999}{2.2682416097056324}$\\
400 & $\res{32.108000000000004}{0.9357456919484044}$& $\res{33.302}{0.8339484396531958}$& $\res{24.806}{2.4160256621153695}$& $\res{14.736}{2.0467730699811346}$\\
500 & $\res{31.604000000000003}{2.327923967830564}$& $\res{32.769999999999996}{1.8203708413397535}$& $\res{23.846}{1.3990282341682752}$& $\res{14.936000000000002}{1.9603902672682294}$\\
1000 & $\res{30.21}{1.6466936569987765}$& $\res{30.056}{1.1214410372373564}$& $\res{17.472}{0.7007995433788459}$& $\res{12.47}{1.4935695497699462}$\\
2000 & $\res{29.904000000000003}{2.00160685450465}$& $\res{30.514}{1.2386403836465227}$& $\res{16.106}{1.0151009801985225}$& $\res{11.34}{1.1385736691141246}$\\
\bottomrule
\end{tabular}
\caption{Numeric values for results in Table~\ref{fig:cifar10appendix} top left plot (CIFAR-10, single participation, $(\epsilon,\delta)=(2,10^{-5})$.}
\end{table}

\begin{table}[H]\centering
\begin{tabular}{rcccc}
\toprule
    & \multicolumn{4}{c}{accuracy}\\
number of epochs & \acronym & \AOF & $\Id$ & $A$\\
\midrule
1 & $\res{31.157999999999998}{0.8728802896159356}$& $\res{34.006}{1.5220808125720524}$& $\res{29.839999999999996}{1.1917424218345172}$& $\res{16.532}{2.9139269036816957}$\\
2 & $\res{34.486000000000004}{0.9779468288204648}$& $\res{36.544}{0.5837636508039851}$& $\res{30.144}{0.8959520076432673}$& $\res{18.637999999999998}{2.076275511583183}$\\
3 & $\res{36.8}{2.2863398697481543}$& $\res{38.036}{1.213766863940519}$& $\res{31.348000000000003}{1.386062769141427}$& $\res{17.616}{1.6083314335049217}$\\
4 & $\res{39.178}{2.0624427264775136}$& $\res{40.104}{1.5627955720438935}$& $\res{28.758}{0.9088839309834889}$& $\res{17.358}{2.122927224376757}$\\
5 & $\res{40.368}{1.929292616478899}$& $\res{41.025999999999996}{0.911334186783311}$& $\res{28.424}{0.9913021739106596}$& $\res{16.124000000000002}{2.1316378679316044}$\\
6 & $\res{41.288}{0.66096142096192}$& $\res{41.794}{1.3128328149463653}$& $\res{28.312}{0.8729662078225023}$& $\res{15.175999999999998}{1.3109080822086652}$\\
7 & $\res{42.182}{1.6229510158966605}$& $\res{44.124}{0.8846920368128086}$& $\res{28.351999999999997}{0.9948969795913556}$& $\res{15.440000000000001}{1.885590093312966}$\\
8 & $\res{43.33}{1.1867181636766144}$& $\res{44.410000000000004}{1.078378412246834}$& $\res{28.942}{1.385846311825377}$& $\res{16.078}{1.661466219939485}$\\
9 & $\res{44.146}{0.4281121348431966}$& $\res{45.176}{0.48562331080787446}$& $\res{28.592000000000002}{0.50948012718849}$& $\res{15.578}{1.5708819179047158}$\\
10 & $\res{44.251999999999995}{1.8163066921640776}$& $\res{43.846}{0.32928710876680334}$& $\res{27.718}{1.467913485189097}$& $\res{15.807999999999998}{0.885618427992552}$\\
15 & $\res{44.904}{1.1374225248341112}$& $\res{43.916000000000004}{0.7507529553721377}$& $\res{27.338}{1.11886996563497}$& $\res{14.930000000000001}{1.0253048327204934}$\\
20 & $\res{45.726}{1.1776799225596062}$& $\res{43.008}{1.3292178151078182}$& $\res{26.922000000000004}{1.4554964788689806}$& $\res{14.190000000000001}{0.9275235846058046}$\\
\bottomrule
\end{tabular}
\caption{Numeric values for results in Table~\ref{fig:cifar10appendix} top right plot (CIFAR-10, repeated participation, $(\epsilon,\delta)=(2,10^{-5})$.}
\end{table}

\begin{table}[H]\centering
\begin{tabular}{rcccc}
\toprule
    & \multicolumn{4}{c}{accuracy}\\
number of updates & \acronym & \AOF & $\Id$ & $A$\\
\midrule
50 & $\res{31.419999999999998}{1.8227314667827514}$& $\res{32.236000000000004}{2.0489460705445626}$& $\res{32.6}{3.4077338511098545}$& $\res{25.31}{1.3935027807650777}$\\
100 & $\res{37.736000000000004}{2.6228953467494667}$& $\res{39.738}{1.8849456225578491}$& $\res{38.282}{1.2795975929955485}$& $\res{26.314}{2.682858550128948}$\\
200 & $\res{40.314}{1.7756350976481619}$& $\res{41.218}{1.0243144048581954}$& $\res{35.56}{0.48213068767710315}$& $\res{25.778}{1.7065081306574532}$\\
400 & $\res{38.839999999999996}{1.140876855756134}$& $\res{39.604}{1.0289217657334293}$& $\res{34.398}{1.406029871659918}$& $\res{22.1}{4.796092159247985}$\\
500 & $\res{40.364}{1.083457428789888}$& $\res{40.54}{0.6328901958475868}$& $\res{33.089999999999996}{1.480388462532723}$& $\res{22.368000000000002}{2.536270884586265}$\\
1000 & $\res{40.717999999999996}{0.9779928425095968}$& $\res{40.04}{0.5858754133772824}$& $\res{28.15}{1.0713776178360268}$& $\res{16.14}{4.16523108602632}$\\
2000 & $\res{39.215999999999994}{1.4804323692759498}$& $\res{37.598}{1.5373093377716824}$& $\res{23.468}{1.7349841497835075}$& $\res{16.332}{1.5060776872392745}$\\
\bottomrule
\end{tabular}
\caption{Numeric values for results in Table~\ref{fig:cifar10appendix} bottom left plot (CIFAR-10, repeated participation, $(\epsilon,\delta)=(8,10^{-5})$.}
\end{table}

\begin{table}[H]\centering
\begin{tabular}{rcccc}
\toprule
    & \multicolumn{4}{c}{accuracy}\\
number of epochs & \acronym & \AOF & $\Id$ & $A$\\
\midrule
1 & $\res{37.736000000000004}{2.6228953467494667}$& $\res{39.738}{1.8849456225578491}$& $\res{38.282}{1.2795975929955485}$& $\res{26.314}{2.682858550128948}$\\
2 & $\res{44.736000000000004}{1.4331364205824937}$& $\res{45.846}{1.2442186303057854}$& $\res{36.75}{1.0476879306358367}$& $\res{26.824}{1.987266967470652}$\\
3 & $\res{49.104}{0.9628239714506479}$& $\res{49.58}{0.7579907651152493}$& $\res{40.208000000000006}{1.129898225505288}$& $\res{24.618000000000002}{1.5227508003609778}$\\
4 & $\res{51.19}{0.976012295004524}$& $\res{51.1}{1.611955334368791}$& $\res{40.822}{1.7191480448175513}$& $\res{24.122}{3.6566131870899334}$\\
5 & $\res{53.306}{1.4363251720971815}$& $\res{53.970000000000006}{1.1671118198356132}$& $\res{41.408}{0.3127618902615861}$& $\res{23.98}{1.9525368114327575}$\\
6 & $\res{54.522000000000006}{0.6304125633265903}$& $\res{53.758}{0.24427443582986463}$& $\res{40.910000000000004}{0.39547439866570255}$& $\res{23.988}{2.957172636150956}$\\
7 & $\res{55.504}{0.8072050544935917}$& $\res{55.508}{0.8813739274564436}$& $\res{39.906}{1.323359361624802}$& $\res{24.484}{2.241613258347657}$\\
8 & $\res{55.702}{0.8904605549938741}$& $\res{55.76800000000001}{0.5293108727392631}$& $\res{40.976}{1.3888592441280714}$& $\res{25.066}{2.062796645333708}$\\
9 & $\res{55.92}{0.537075413699044}$& $\res{55.688}{1.1165661646315461}$& $\res{41.730000000000004}{0.6083995397762884}$& $\res{23.71}{2.8877586464245932}$\\
10 & $\res{56.186}{1.1581580203063815}$& $\res{51.05}{0.6760547315121754}$& $\res{40.588}{1.1088372288122372}$& $\res{23.633999999999997}{3.50786687318661}$\\
15 & $\res{56.0}{0.6433894621455949}$& $\res{52.33200000000001}{0.5415902510200887}$& $\res{39.28}{1.0009245725827693}$& $\res{21.308}{2.2549434582711827}$\\
20 & $\res{55.222}{1.1845969778789747}$& $\res{53.77}{0.4645965992126902}$& $\res{38.134}{1.3979377668551625}$& $\res{20.598}{2.7526568983438535}$\\
\bottomrule
\end{tabular}
\caption{Numeric values for results in Table~\ref{fig:cifar10appendix} bottom right plot (CIFAR-10, repeated participation, $(\epsilon,\delta)=(8,10^{-5})$.}
\end{table}

\section{Experimental Results for Different Optimizers}\label{sec:other_optimizers}

In this section we report on experimental results when different optimizers are used to (approximately) 
solve the AOF optimization problem~\eqref{eq:AOF}. 
Besides \texttt{cvxpy} \emph{(CVX)} these are standard \emph{gradient descent (GD)} and 
the \emph{Limited-Memory Broyden-Fletcher-Goldfarb-Shanno algorithm (LBFGS)}. The latter
two we implement in \emph{jax} using the \emph{optax} toolbox.
Similar to~\citep[\texttt{ftrl\_mechanism.py}]{pfl_research}, we use an adaptive line-search for the step size of the gradient-based methods, which at the same time ensures the positive definiteness constraints of the optimization problem.  Our implementation differs from theirs, however, in that our learning rate is not restricted to shrink monotonically, thereby avoiding premature termination.

\subsection{Runtime}
We report the runtimes for the different methods in Tables~\ref{tab:optimizers_first} to~\ref{tab:optimizers_last}. 
For comparison, we also include results for BSR and the CVX optimizers with three 
tolerance levels in the same settings, where practically feasible. 
Note that while the experiments for BSR and CVX used a single-core CPU-only environment, 
the experiments for GD and LBFGS were run on an NVIDIA H100 GPU with 16 available 
CPU cores.
As a consequence, the absolute runtimes are not directly comparable between the methods, 
but they should rather be seen as illustrations of the scaling behavior of the method 
for different workload types and problem sizes. 

Indeed, the results show a clear trend: BSR is the fastest, with almost no overhead. 
Even for the largest problem sizes of $n=10\,000$, BSR never took more than 2.5s to
despite running in the single-core CPU-only setup. GD and LBFGS benefit strongly 
from the GPU hardware. 
In the multiple participation setting ($p=100, k=n/p$), they solve most workload 
sizes within a few minutes, except the largest ones, which for GD can take a few hours.
In the single participation setting ($k=1$), LBFGS also occasionally need several hours 
to converge. In general, stronger weight decay (smaller $\alpha$) tends to lead to 
lower runtimes, while the use of momentum ($\beta=0.9$) to higher times until convergence.
CVX (on weak hardware) is orders of magnitude slower than the other methods. 
Furthermore, its runtime grow approximately cubic with the problem size, whereas 
for GD and LBFGS the relation is not too far from linear. 
Note that despite the stable patterns described above, all runtime results should 
be taken with caution, because internal parameters of the optimization, such as 
the convergence criterion and the specific implementation of the line search can 
substantially influence the overall runtime as well. 

\begin{table}\centering\small
\caption{$\alpha=1.0$, $\beta=0.9$, $p=100$, $k=n/p$}\label{tab:optimizers_first}
\begin{tabular}{r|r|rr|rrr}
\toprule
n & BSR & GD & LBFGS & CVX(tol=0.01) & CVX(tol=0.001) & CVX(tol=0.0001) \\
\midrule
100 & $<1$s & 28.5s &  1m39s & 4.5s &  7m18s &  1h27m30s \\
200 & $<1$s &  1m10s &  2m31s & 37.1s & 21m00s & 10h35m00s \\
300 & $<1$s &  1m44s &  3m14s &  2m16s &  1h12m40s & 22h36m40s \\
400 & $<1$s &  2m35s &  3m46s &  6m06s &  2h14m50s & 53h03m20s \\
500 & $<1$s &  3m47s &  4m47s & 11m45s &  5h31m40s & 90h50m00s \\
600 & $<1$s &  4m27s &  5m11s & 26m50s & 17h36m40s & 40h16m40s \\
700 & $<1$s &  5m13s &  6m12s & 47m50s & 22h10m00s & 66h23m20s \\
800 & $<1$s &  5m52s &  7m30s &  1h29m40s & 38h03m20s & 164h26m40s \\
900 & $<1$s &  6m20s &  7m29s &  1h45m30s & 62h30m00s & 253h53m20s \\
1000 & $<1$s &  6m55s &  8m01s &  1h59m40s & 83h36m40s & 245h00m00s \\
1500 & $<1$s & 10m11s & 11m49s &  6h08m20s & 121h23m20s & timeout \\
2000 & $<1$s & 13m39s & 13m21s & 15h10m00s & 297h13m20s & timeout \\
5000 & 1.1s &  1h09m55s & 33m00s & --- & --- & --- \\
10000 & 1.6s &  6h07m10s &  1h47m39s & --- & --- & --- \\
\bottomrule
\end{tabular}
\end{table}

\begin{table}\centering\small
\caption{$\alpha=1.0$, $\beta=0.0$, $p=100$, $k=n/p$}
\begin{tabular}{r|r|rr|rrr}
\toprule
n & BSR & GD & LBFGS & CVX(tol=0.01) & CVX(tol=0.001) & CVX(tol=0.0001) \\
\midrule
100 & $<1$s & 1.4s & 12.3s & 5.6s & 48.6s & 33.4s \\
200 & $<1$s & 2.4s & 20.0s &  1m09s &  2m38s &  3m39s \\
300 & $<1$s & 3.4s & 25.0s &  2m25s & 11m23s & 19m20s \\
400 & $<1$s & 6.4s & 30.2s & 12m15s & 26m10s &  1h14m40s \\
500 & $<1$s & 6.8s & 36.6s & 33m10s &  1h07m00s &  2h53m20s \\
600 & $<1$s & 9.8s & 44.1s & 52m00s &  7h45m00s &  4h53m20s \\
700 & $<1$s & 12.8s & 51.5s &  1h34m50s & 14h30m00s & 17h48m20s \\
800 & $<1$s & 13.2s & 54.3s &  3h33m20s & 17h26m40s & 26h55m00s \\
900 & $<1$s & 17.9s & 59.1s &  4h43m20s & 27h18m20s & 41h06m40s \\
1000 & $<1$s & 22.9s &  1m09s &  7h31m40s & 45h33m20s & 57h30m00s \\
1500 & $<1$s & 47.3s &  1m36s & 29h43m20s & 84h43m20s & 200h50m00s \\
2000 & $<1$s &  1m24s &  1m45s & 68h53m20s & 258h36m40s & timeout \\
5000 & 2.4s & 16m35s &  4m18s & --- & --- & --- \\
10000 & 1.6s &  2h29m46s & 14m23s & --- & --- & --- \\
\bottomrule
\end{tabular}
\end{table}

\begin{table}\centering\small
\caption{$\alpha=0.9999$, $\beta=0.9$, $p=100$, $k=n/p$}
\begin{tabular}{r|r|rr|rrr}
\toprule
n & BSR & GD & LBFGS & CVX(tol=0.01) & CVX(tol=0.001) & CVX(tol=0.0001) \\
\midrule
100 & $<1$s & 29.9s &  1m35s & 6.1s &  3m42s & 47m00s \\
200 & $<1$s &  1m10s &  2m33s & 36.3s & 18m00s &  6h56m40s \\
300 & $<1$s &  1m45s &  3m04s &  1m48s & 40m30s & 25h46m40s \\
400 & $<1$s &  2m32s &  3m46s &  6m21s &  2h35m20s & 57h13m20s \\
500 & $<1$s &  3m32s &  4m29s & 11m05s &  5h41m40s & 24h08m20s \\
600 & $<1$s &  4m21s &  4m56s & 10m41s & 13h36m40s & 41h56m40s \\
700 & $<1$s &  5m10s &  5m37s & 54m20s & 20h33m20s & 76h56m40s \\
800 & $<1$s &  5m39s &  6m20s &  1h21m40s & 38h20m00s & 158h53m20s \\
900 & $<1$s &  6m25s &  7m05s &  2h16m00s & 51h06m40s & 268h20m00s \\
1000 & $<1$s &  6m56s &  7m46s &  2h11m00s & 68h03m20s & 223h20m00s \\
1500 & $<1$s & 10m09s & 10m15s &  8h08m20s & 115h50m00s & timeout \\
2000 & $<1$s & 13m39s & 12m09s & 12h46m40s & 247h13m20s & timeout \\
5000 & 2.4s &  1h10m18s & 26m09s & --- & --- & --- \\
10000 & 2.6s &  6h06m46s &  1h10m10s & --- & --- & --- \\
\bottomrule
\end{tabular}
\end{table}

\begin{table}\centering\small
\caption{$\alpha=0.9999$, $\beta=0.0$, $p=100$, $k=n/p$}
\begin{tabular}{r|r|rr|rrr}
\toprule
n & BSR & GD & LBFGS & CVX(tol=0.01) & CVX(tol=0.001) & CVX(tol=0.0001) \\
\midrule
100 & $<1$s & 1.2s & 13.7s & 6.0s & 14.1s & 20.3s \\
200 & $<1$s & 2.3s & 18.5s &  1m04s &  4m15s &  4m00s \\
300 & $<1$s & 3.5s & 23.8s &  4m03s & 15m10s & 10m51s \\
400 & $<1$s & 5.3s & 29.7s &  7m45s & 14m32s & 19m50s \\
500 & $<1$s & 6.1s & 38.5s & 22m10s &  1h23m50s &  2h44m20s \\
600 & $<1$s & 8.9s & 43.3s & 29m40s &  3h46m40s &  8h23m20s \\
700 & $<1$s & 12.0s & 44.5s &  1h56m50s & 12h03m20s & 12h11m40s \\
800 & $<1$s & 12.7s & 51.6s &  3h43m20s & 16h40m00s & 23h56m40s \\
900 & $<1$s & 16.1s &  1m02s &  5h11m40s & 26h30m00s & 30h33m20s \\
1000 & $<1$s & 20.3s &  1m05s &  8h23m20s & 40h33m20s & 41h23m20s \\
1500 & $<1$s & 39.8s &  1m19s & 35h50m00s & 78h53m20s & 170h00m00s \\
2000 & $<1$s &  1m10s &  1m27s & 72h30m00s & 239h26m40s & timeout \\
5000 & 1.1s &  9m41s &  3m15s & --- & --- & --- \\
10000 & 2.5s &  1h00m37s & 10m51s & --- & --- & --- \\
\bottomrule
\end{tabular}
\end{table}

\begin{table}\centering\small
\caption{$\alpha=0.999$, $\beta=0.9$, $p=100$, $k=n/p$}
\begin{tabular}{r|r|rr|rrr}
\toprule
n & BSR & GD & LBFGS & CVX(tol=0.01) & CVX(tol=0.001) & CVX(tol=0.0001) \\
\midrule
100 & $<1$s & 32.6s &  1m52s & 3.7s &  3m36s & 58m10s \\
200 & $<1$s & 59.0s &  2m08s & 36.1s & 20m50s &  8h10m00s \\
300 & $<1$s &  1m25s &  2m36s &  3m38s & 37m30s & 25h48m20s \\
400 & $<1$s &  1m54s &  3m02s &  4m48s &  1h30m00s & 56h56m40s \\
500 & $<1$s &  2m13s &  3m33s & 13m45s &  2h58m20s & 84h43m20s \\
600 & $<1$s &  3m07s &  3m57s & 26m20s &  9h51m40s & 41h56m40s \\
700 & $<1$s &  3m04s &  4m09s & 52m30s & 13h10m00s & 85h33m20s \\
800 & $<1$s &  3m28s &  4m18s & 59m00s & 28h03m20s & 164h43m20s \\
900 & $<1$s &  3m59s &  4m33s & 57m00s & 39h10m00s & 280h33m20s \\
1000 & $<1$s &  4m34s &  5m21s &  1h27m00s & 59h43m20s & 258h03m20s \\
1500 & $<1$s &  5m48s &  5m21s &  4h41m40s & 81h40m00s & timeout \\
2000 & $<1$s &  7m26s &  5m28s & 14h16m40s & 219h10m00s & timeout \\
5000 & $<1$s & 34m57s &  9m16s & --- & --- & --- \\
10000 & 2.5s &  2h42m14s & 28m36s & --- & --- & --- \\
\bottomrule
\end{tabular}
\end{table}

\begin{table}\centering\small
\caption{$\alpha=0.999$, $\beta=0.0$, $p=100$, $k=n/p$}
\begin{tabular}{r|r|rr|rrr}
\toprule
n & BSR & GD & LBFGS & CVX(tol=0.01) & CVX(tol=0.001) & CVX(tol=0.0001) \\
\midrule
100 & $<1$s & 1.1s & 11.5s & 7.3s & 28.0s & 27.6s \\
200 & $<1$s & 2.0s & 15.6s &  1m10s & 46.8s &  3m06s \\
300 & $<1$s & 3.1s & 21.3s &  3m39s &  6m32s &  8m10s \\
400 & $<1$s & 3.8s & 26.0s &  8m19s & 30m30s & 29m50s \\
500 & $<1$s & 4.6s & 30.3s &  9m47s & 44m50s &  2h17m50s \\
600 & $<1$s & 5.3s & 29.3s & 48m30s &  2h01m40s &  1h59m50s \\
700 & $<1$s & 6.1s & 39.5s &  1h10m20s &  6h50m00s &  7h01m40s \\
800 & $<1$s & 6.7s & 38.6s &  3h16m40s & 13h10m00s & 13h50m00s \\
900 & $<1$s & 7.9s & 37.4s &  5h26m40s & 21h51m40s & 28h03m20s \\
1000 & $<1$s & 9.8s & 45.1s &  6h26m40s & 30h50m00s & 32h46m40s \\
1500 & $<1$s & 12.8s & 52.1s & 29h26m40s & 77h13m20s & 49h43m20s \\
2000 & $<1$s & 16.6s & 50.3s & 70h00m00s & 91h23m20s & 174h26m40s \\
5000 & 1.3s &  1m19s &  1m29s & --- & --- & --- \\
10000 & 2.4s &  6m40s &  3m23s & --- & --- & --- \\
\bottomrule
\end{tabular}
\end{table}

\begin{table}\centering\small
\caption{$\alpha=0.99$, $\beta=0.9$, $p=100$, $k=n/p$}
\begin{tabular}{r|r|rr|rrr}
\toprule
n & BSR & GD & LBFGS & CVX(tol=0.01) & CVX(tol=0.001) & CVX(tol=0.0001) \\
\midrule
100 & $<1$s & 15.8s &  1m12s & 3.5s &  1m28s &  1h19m10s \\
200 & $<1$s & 19.0s &  1m09s & 14.4s & 22m00s & 11h23m20s \\
300 & $<1$s & 19.6s &  1m15s &  1m10s &  2h29m50s & 30h16m40s \\
400 & $<1$s & 24.5s &  1m23s &  3m07s &  4h55m00s & 59h43m20s \\
500 & $<1$s & 23.8s &  1m21s &  6m55s & 13h06m40s & 67h30m00s \\
600 & $<1$s & 28.7s &  1m20s & 24m10s & 31h40m00s & 44h43m20s \\
700 & $<1$s & 33.4s &  1m24s & 36m10s & 61h23m20s & 78h53m20s \\
800 & $<1$s & 33.7s &  1m29s & 48m20s & 73h36m40s & 146h06m40s \\
900 & $<1$s & 39.2s &  1m37s &  1h06m50s & 50h00m00s & 280h33m20s \\
1000 & $<1$s & 37.3s &  1m32s &  1h53m30s & 66h06m40s & 274h43m20s \\
1500 & $<1$s & 53.9s &  1m44s &  7h36m40s & 302h46m40s & timeout \\
2000 & $<1$s &  1m07s &  1m50s & 22h45m00s & timeout & timeout \\
5000 & $<1$s &  5m56s &  2m51s & --- & --- & --- \\
10000 & 2.4s & 29m34s &  7m45s & --- & --- & --- \\
\bottomrule
\end{tabular}
\end{table}

\begin{table}\centering\small
\caption{$\alpha=0.99$, $\beta=0.0$, $p=100$, $k=n/p$}
\begin{tabular}{r|r|rr|rrr}
\toprule
n & BSR & GD & LBFGS & CVX(tol=0.01) & CVX(tol=0.001) & CVX(tol=0.0001) \\
\midrule
100 & $<1$s & 1.0s & 9.9s & 3.9s & 9.0s & 15.9s \\
200 & $<1$s & $<1$s & 9.8s & 30.3s &  1m40s &  1m38s \\
300 & $<1$s & $<1$s & 10.2s &  2m11s &  2m11s &  1m50s \\
400 & $<1$s & 1.2s & 11.4s & 11m16s & 10m36s &  4m16s \\
500 & $<1$s & 1.3s & 14.6s & 24m10s & 17m40s & 22m20s \\
600 & $<1$s & 1.3s & 10.6s & 32m00s & 39m00s &  1h44m00s \\
700 & $<1$s & 1.4s & 10.7s &  1h06m10s &  1h33m20s &  1h46m30s \\
800 & $<1$s & 1.5s & 11.2s &  1h43m30s &  3h36m40s &  2h41m20s \\
900 & $<1$s & 1.6s & 11.3s &  3h15m00s &  7h23m20s &  8h21m40s \\
1000 & $<1$s & 1.8s & 13.0s &  5h11m40s &  8h31m40s &  9h26m40s \\
1500 & $<1$s & 2.7s & 12.7s & 20h10m00s & 29h10m00s & 30h00m00s \\
2000 & $<1$s & 3.2s & 12.5s & 46h40m00s & 31h56m40s & 33h03m20s \\
5000 & $<1$s & 17.0s & 18.3s & --- & --- & --- \\
10000 & 1.4s &  1m26s & 42.7s & --- & --- & --- \\
\bottomrule
\end{tabular}
\end{table}

\begin{table}\centering\small
\caption{$\alpha=1.0$, $\beta=0.9$, $k=1$}
\begin{tabular}{r|r|rr|rrr}
\toprule
n & BSR & GD & LBFGS & CVX(tol=0.01) & CVX(tol=0.001) & CVX(tol=0.0001) \\
\midrule
100 & $<1$s & 28.5s &  1m39s & 2.3s &  1m46s & 50m20s \\
200 & $<1$s &  1m02s &  2m08s & 25.5s & 31m00s &  7h15m00s \\
300 & $<1$s &  1m37s &  3m07s &  1m45s &  1h53m40s & 21h03m20s \\
400 & $<1$s &  2m25s &  3m34s &  5m48s &  5h00m00s & 50h00m00s \\
500 & $<1$s &  2m39s &  4m05s &  7m38s & 12h28m20s & 71h40m00s \\
600 & $<1$s &  5m10s &  5m32s & 26m30s & 25h48m20s & 43h20m00s \\
700 & $<1$s &  4m22s &  5m30s & 37m10s & 49h26m40s & 94h26m40s \\
800 & $<1$s &  4m22s &  5m42s &  1h09m40s & 57h30m00s & 175h16m40s \\
900 & $<1$s &  5m37s &  6m25s &  2h04m50s & 36h23m20s & 305h33m20s \\
1000 & $<1$s &  6m53s &  6m32s &  2h19m20s & 53h03m20s & 260h00m00s \\
1500 & $<1$s & 10m24s & 10m08s & 13h13m20s & 242h30m00s & timeout \\
2000 & $<1$s & 13m39s & 12m45s & 33h03m20s & 64h10m00s & timeout \\
5000 & 1.1s &  1h10m17s & 44m58s & --- & --- & --- \\
10000 & $<1$s &  6h06m27s &  2h30m09s & --- & --- & --- \\
\bottomrule
\end{tabular}
\end{table}

\begin{table}\centering\small
\caption{$\alpha=1.0$, $\beta=0.0$, $k=1$}
\begin{tabular}{r|r|rr|rrr}
\toprule
n & BSR & GD & LBFGS & CVX(tol=0.01) & CVX(tol=0.001) & CVX(tol=0.0001) \\
\midrule
100 & $<1$s & 1.4s & 12.3s & 6.8s & 6.4s & 19.6s \\
200 & $<1$s & 3.7s & 21.8s & 45.3s &  2m53s &  4m32s \\
300 & $<1$s & 3.3s & 25.1s &  3m00s & 11m09s &  7m37s \\
400 & $<1$s & 9.5s & 33.8s &  6m34s & 19m30s & 52m00s \\
500 & $<1$s & 19.3s & 50.4s & 16m10s & 41m40s & 22m50s \\
600 & $<1$s & 9.6s & 42.9s & 51m20s &  2h26m40s &  2h16m20s \\
700 & $<1$s & 20.3s & 54.3s &  1h47m10s &  4h40m00s &  5h45m00s \\
800 & $<1$s & 33.0s &  1m00s &  3h06m40s &  9h40m00s & 12h18m20s \\
900 & $<1$s & 57.2s &  1m22s &  5h06m40s & 14h05m00s & 16h28m20s \\
1000 & $<1$s &  1m12s &  1m39s &  7h03m20s & 21h13m20s & 28h20m00s \\
1500 & $<1$s &  1m54s &  1m39s & 24h51m40s & 80h16m40s & 53h20m00s \\
2000 & $<1$s &  5m39s &  2m56s & 61h56m40s & 110h33m20s & 146h23m20s \\
5000 & 2.5s & 35m20s &  4m19s & --- & --- & --- \\
10000 & 1.6s &  5h11m15s & 13m51s & --- & --- & --- \\
\bottomrule
\end{tabular}
\end{table}

\begin{table}\centering\small
\caption{$\alpha=0.9999$, $\beta=0.9$, $k=1$}
\begin{tabular}{r|r|rr|rrr}
\toprule
n & BSR & GD & LBFGS & CVX(tol=0.01) & CVX(tol=0.001) & CVX(tol=0.0001) \\
\midrule
100 & $<1$s & 29.9s &  1m35s & 6.5s &  5m12s &  1h14m00s \\
200 & $<1$s &  1m05s &  2m13s & 34.4s & 33m20s & 10h03m20s \\
300 & $<1$s &  1m38s &  2m49s &  2m52s &  1h41m10s & 29h43m20s \\
400 & $<1$s &  2m30s &  3m29s &  2m35s &  6h30m00s & 30h00m00s \\
500 & $<1$s &  2m29s &  4m03s &  7m17s & 11h55m00s & 75h50m00s \\
600 & $<1$s &  4m57s &  4m58s & 19m00s & 26h40m00s & 41h40m00s \\
700 & $<1$s &  5m13s &  7m24s & 42m20s & 43h36m40s & 99h10m00s \\
800 & $<1$s &  3m59s &  5m13s & 53m40s & 56h23m20s & 167h30m00s \\
900 & $<1$s &  5m04s &  5m47s &  1h27m30s & 34h10m00s & 302h46m40s \\
1000 & $<1$s &  6m53s &  6m12s &  2h15m10s & 54h10m00s & 274h10m00s \\
1500 & $<1$s & 10m05s & 13m04s & 12h23m20s & 308h20m00s & timeout \\
2000 & $<1$s & 13m37s &  9m39s & 35h00m00s & timeout & timeout \\
5000 & 2.5s &  1h10m23s & 28m03s & --- & --- & --- \\
10000 & 2.5s &  6h06m15s &  1h54m24s & --- & --- & --- \\
\bottomrule
\end{tabular}
\end{table}

\begin{table}\centering\small
\caption{$\alpha=0.9999$, $\beta=0.0$, $k=1$}
\begin{tabular}{r|r|rr|rrr}
\toprule
n & BSR & GD & LBFGS & CVX(tol=0.01) & CVX(tol=0.001) & CVX(tol=0.0001) \\
\midrule
100 & $<1$s & 1.2s & 13.7s & 5.1s & 21.5s & 17.3s \\
200 & $<1$s & 3.1s & 20.5s & 55.7s &  3m45s &  7m08s \\
300 & $<1$s & 2.9s & 22.7s &  2m58s &  4m10s &  8m32s \\
400 & $<1$s & 9.3s & 34.4s &  6m59s & 31m30s & 26m20s \\
500 & $<1$s & 19.3s & 48.9s & 30m40s &  1h03m00s & 41m40s \\
600 & $<1$s & 8.2s & 41.8s &  1h24m40s &  2h06m40s &  1h22m10s \\
700 & $<1$s & 17.7s & 47.9s &  1h54m40s &  5h25m00s &  4h38m20s \\
800 & $<1$s & 30.3s & 57.8s &  2h26m50s &  8h36m40s &  7h43m20s \\
900 & $<1$s & 45.0s &  1m15s &  4h36m40s & 13h58m20s & 15h35m00s \\
1000 & $<1$s &  1m01s &  1m23s &  8h23m20s & 18h46m40s & 22h43m20s \\
1500 & $<1$s &  1m28s &  1m33s & 28h03m20s & 80h33m20s & 91h23m20s \\
2000 & $<1$s &  3m56s &  2m11s & 64h26m40s & 98h36m40s & 126h06m40s \\
5000 & 2.5s & 58m03s &  4m57s & --- & --- & --- \\
10000 & 2.5s &  6h00m13s & 11m30s & --- & --- & --- \\
\bottomrule
\end{tabular}
\end{table}

\begin{table}\centering\small
\caption{$\alpha=0.999$, $\beta=0.9$, $k=1$}
\begin{tabular}{r|r|rr|rrr}
\toprule
n & BSR & GD & LBFGS & CVX(tol=0.01) & CVX(tol=0.001) & CVX(tol=0.0001) \\
\midrule
100 & $<1$s & 32.6s &  1m52s & 3.5s &  6m20s &  1h28m10s \\
200 & $<1$s &  1m00s &  2m13s & 32.2s & 26m30s &  8h36m40s \\
300 & $<1$s & 59.6s &  2m20s &  2m21s &  2h50m00s & 28h20m00s \\
400 & $<1$s &  2m08s &  2m56s &  5m53s &  3h36m40s & 58h03m20s \\
500 & $<1$s &  4m05s &  4m19s &  9m40s & 13h16m40s & 72h30m00s \\
600 & $<1$s &  2m03s &  3m35s & 12m33s & 22h50m00s & 44h10m00s \\
700 & $<1$s &  2m30s &  3m51s & 47m50s & 46h06m40s & 101h40m00s \\
800 & $<1$s &  3m33s &  4m01s &  1h06m20s & 78h03m20s & 145h00m00s \\
900 & $<1$s &  4m08s &  4m39s &  1h40m20s & 88h36m40s & 305h33m20s \\
1000 & $<1$s &  5m35s &  5m31s &  2h42m00s & 50h00m00s & timeout \\
1500 & $<1$s & 10m10s &  7m29s & 10h10m00s & 280h33m20s & timeout \\
2000 & $<1$s & 13m47s &  9m12s & 26h08m20s & timeout & timeout \\
5000 & $<1$s &  1h10m19s & 21m05s & --- & --- & --- \\
10000 & 2.5s &  6h08m30s & 45m33s & --- & --- & --- \\
\bottomrule
\end{tabular}
\end{table}

\begin{table}\centering\small
\caption{$\alpha=0.999$, $\beta=0.0$, $k=1$}
\begin{tabular}{r|r|rr|rrr}
\toprule
n & BSR & GD & LBFGS & CVX(tol=0.01) & CVX(tol=0.001) & CVX(tol=0.0001) \\
\midrule
100 & $<1$s & 1.1s & 11.5s & 5.9s & 6.3s & 32.4s \\
200 & $<1$s & 2.4s & 16.8s & 47.4s &  1m30s &  1m54s \\
300 & $<1$s & 6.3s & 31.3s &  2m48s &  7m17s & 17m40s \\
400 & $<1$s & 5.0s & 27.8s &  8m10s & 17m30s & 42m50s \\
500 & $<1$s & 9.4s & 33.0s & 13m49s & 45m50s &  1h06m00s \\
600 & $<1$s & 13.6s & 40.9s & 50m20s &  1h04m20s &  3h21m40s \\
700 & $<1$s & 19.3s & 48.0s &  1h53m00s &  3h06m40s &  3h21m40s \\
800 & $<1$s & 25.5s & 52.3s &  2h45m50s &  7h03m20s &  8h45m00s \\
900 & $<1$s & 34.5s &  1m07s &  3h55m00s &  9h23m20s & 13h15m00s \\
1000 & $<1$s & 42.4s &  1m10s &  5h46m40s & 17h08m20s & 18h21m40s \\
1500 & $<1$s &  1m35s &  1m40s & 26h55m00s & 54h10m00s & 56h40m00s \\
2000 & $<1$s &  2m45s &  2m14s & 60h33m20s & 52h13m20s & 82h30m00s \\
5000 & $<1$s & 21m16s &  4m32s & --- & --- & --- \\
10000 & 2.4s &  1h59m54s & 12m44s & --- & --- & --- \\
\bottomrule
\end{tabular}
\end{table}

\begin{table}\centering\small
\caption{$\alpha=0.99$, $\beta=0.9$, $k=1$}
\begin{tabular}{r|r|rr|rrr}
\toprule
n & BSR & GD & LBFGS & CVX(tol=0.01) & CVX(tol=0.001) & CVX(tol=0.0001) \\
\midrule
100 & $<1$s & 15.8s &  1m12s & 3.6s &  1m53s & 35m00s \\
200 & $<1$s & 15.6s &  1m14s & 31.8s & 14m57s &  9h00m00s \\
300 & $<1$s & 22.5s &  1m15s &  1m45s & 51m30s & 22h15m00s \\
400 & $<1$s & 33.6s &  1m33s &  3m56s &  2h19m00s & 61h40m00s \\
500 & $<1$s & 41.1s &  1m33s & 17m20s &  3h31m40s & 73h53m20s \\
600 & $<1$s & 47.8s &  1m40s & 31m50s &  8h40m00s & 41h40m00s \\
700 & $<1$s & 54.5s &  1m50s & 51m30s & 20h26m40s & 89h26m40s \\
800 & $<1$s &  1m03s &  1m53s &  1h12m40s & 39h43m20s & 140h00m00s \\
900 & $<1$s &  1m12s &  1m55s &  2h09m30s & 56h06m40s & 270h00m00s \\
1000 & $<1$s &  1m05s &  2m02s &  3h20m00s & 61h06m40s & 265h16m40s \\
1500 & $<1$s &  1m53s &  2m05s & 13h46m40s & 124h26m40s & timeout \\
2000 & $<1$s &  2m16s &  2m14s & 32h30m00s & 319h26m40s & timeout \\
5000 & $<1$s & 11m48s &  3m20s & --- & --- & --- \\
10000 & 2.5s &  1h02m59s &  7m50s & --- & --- & --- \\
\bottomrule
\end{tabular}
\end{table}

\begin{table}\centering\small
\caption{$\alpha=0.99$, $\beta=0.0$, $k=1$}\label{tab:optimizers_last}
\begin{tabular}{r|r|rr|rrr}
\toprule
n & BSR & GD & LBFGS & CVX(tol=0.01) & CVX(tol=0.001) & CVX(tol=0.0001) \\
\midrule
100 & $<1$s & 1.0s & 9.9s & 3.8s & 8.1s & 22.1s \\
200 & $<1$s & 1.7s & 14.0s & 30.8s &  1m15s &  2m02s \\
300 & $<1$s & 2.5s & 18.8s &  1m18s &  5m23s &  8m00s \\
400 & $<1$s & 3.0s & 19.6s &  3m45s &  7m41s &  7m43s \\
500 & $<1$s & 3.5s & 23.9s &  9m51s & 14m17s & 38m30s \\
600 & $<1$s & 4.4s & 26.4s & 33m20s & 35m00s & 45m20s \\
700 & $<1$s & 4.3s & 24.7s &  1h02m20s &  1h22m40s &  1h42m10s \\
800 & $<1$s & 4.8s & 27.8s &  1h31m40s &  2h27m50s &  2h56m40s \\
900 & $<1$s & 5.4s & 28.7s &  2h11m10s &  4h01m40s &  4h26m40s \\
1000 & $<1$s & 5.6s & 30.3s &  2h51m40s &  5h45m00s &  6h45m00s \\
1500 & $<1$s & 8.5s & 39.3s & 11h21m40s & 17h55m00s & 17h26m40s \\
2000 & $<1$s & 11.5s & 42.9s & 26h46m40s & 45h00m00s & 42h46m40s \\
5000 & $<1$s & 57.1s &  1m07s & --- & --- & --- \\
10000 & 1.3s &  4m58s &  2m42s & --- & --- & --- \\
\bottomrule
\end{tabular}
\end{table}

\subsection{Expected Approximation Error}
Figures~\ref{fig:optimizers_first} and~\ref{fig:optimizers_last} report the 
expected approximation errors achieved by the different optimizers of 
AOF~\eqref{eq:AOF} and by BSR. 
For CVX, we report the smallest error value across all tolerance levels for 
which the optimization converged. 

The curves show several clear trends. GD and LBFGS generally perform similarly,
and achieve expected approximation errors slightly (at most a few percent) 
lower than BSR. 
An exception are the problems with momentum ($\beta=0.9$) in the single 
participation setting, where it appears that SGD occasionally fails to 
find the optimum for large problem sizes ($n\geq 1500$).
CVX performs comparably to the other methods for problems without 
momentum ($\beta=0$). 
With momentum, however, the solutions it find are often worse than 
the other methods, especially in the single-participation 
setting and for medium to large problem sizes ($n\geq 500$). 
Presumably, even smaller \emph{tolerance} values would be require 
here, which, however, would result in even longer runtimes.

\begin{figure}[t]
    \centering
    \includegraphics[width=0.48\linewidth]{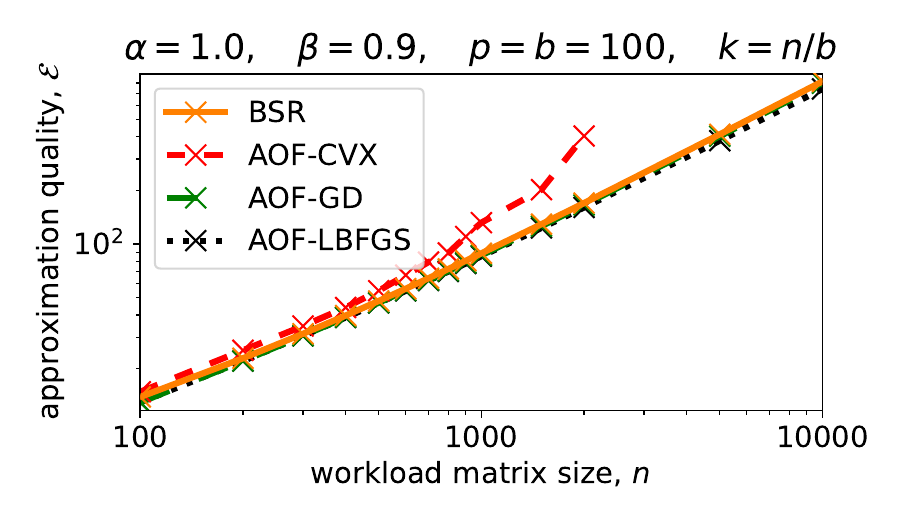}
    \quad   
    \includegraphics[width=0.48\linewidth]{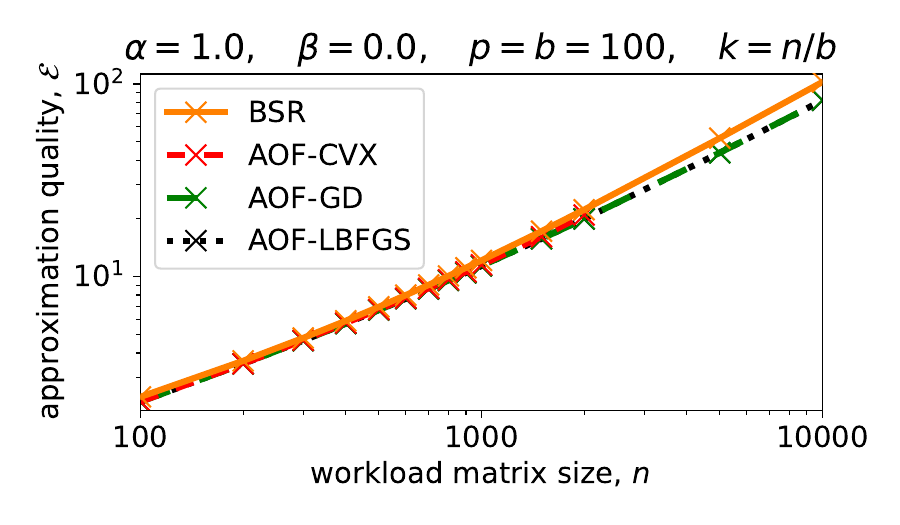}
\\
    \includegraphics[width=0.48\linewidth]{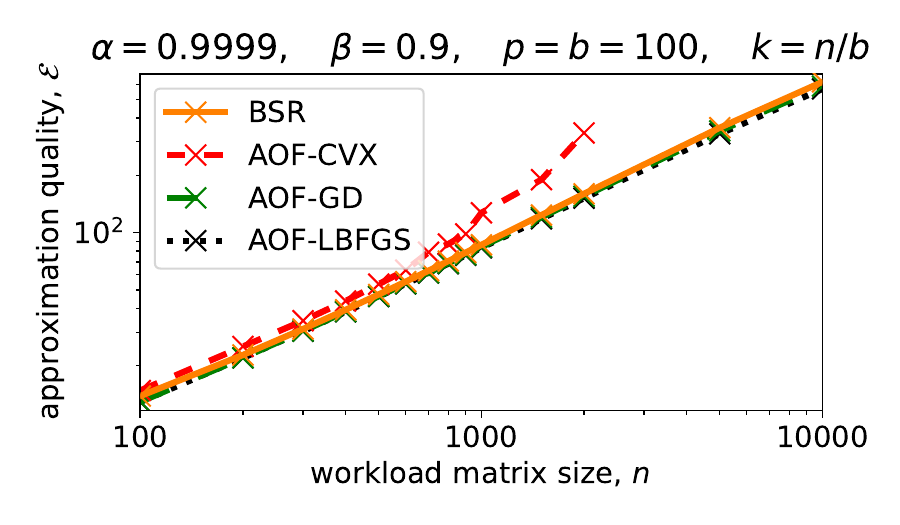}
    \quad   
    \includegraphics[width=0.48\linewidth]{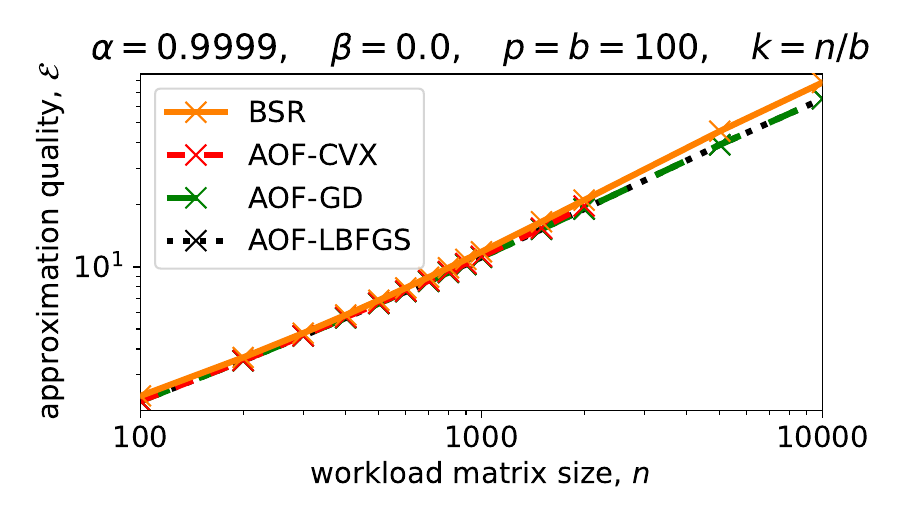}
\\
    \includegraphics[width=0.48\linewidth]{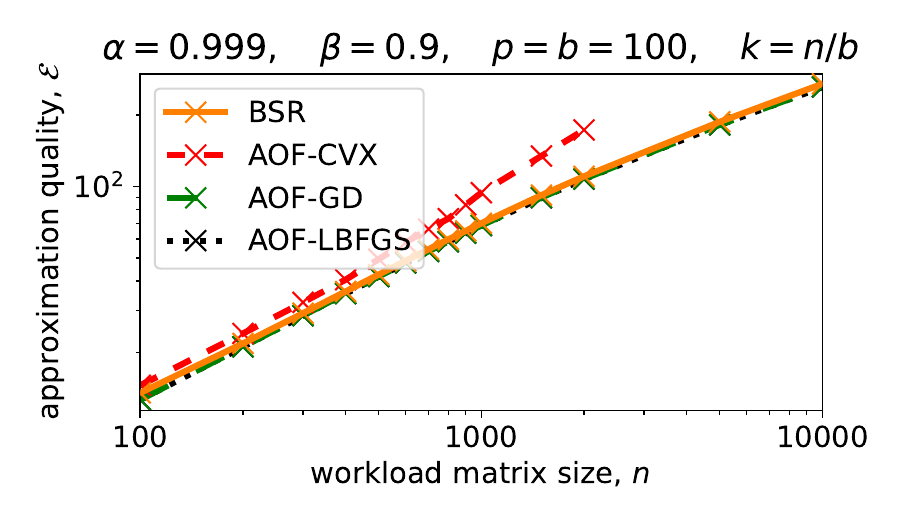}
    \quad   
    \includegraphics[width=0.48\linewidth]{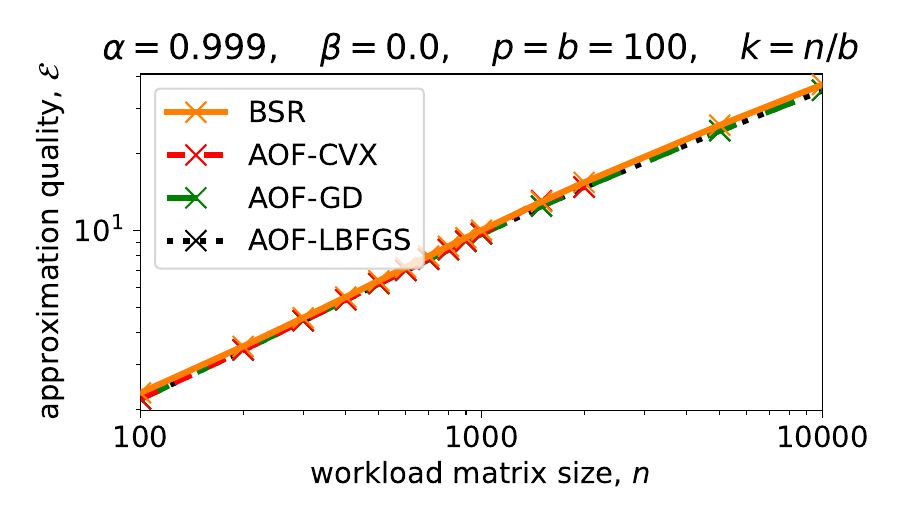}
\\
    \includegraphics[width=0.48\linewidth]{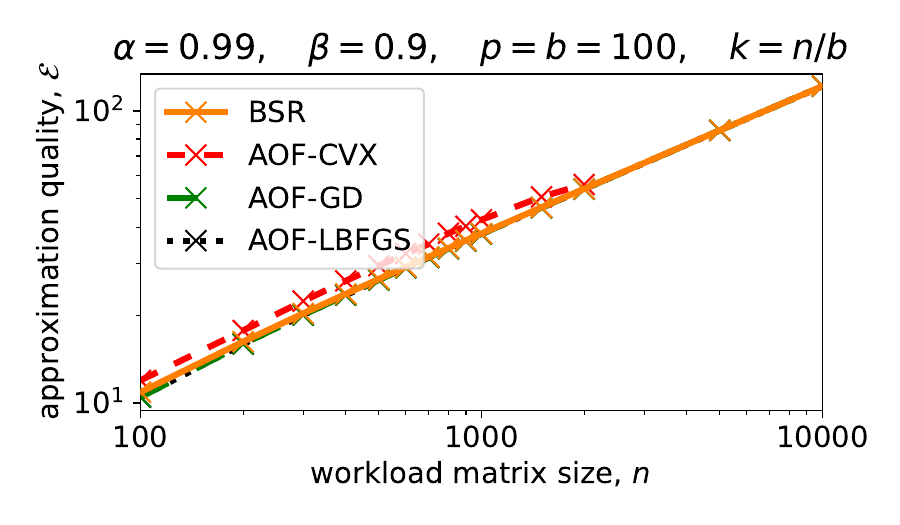}
    \quad   
    \includegraphics[width=0.48\linewidth]{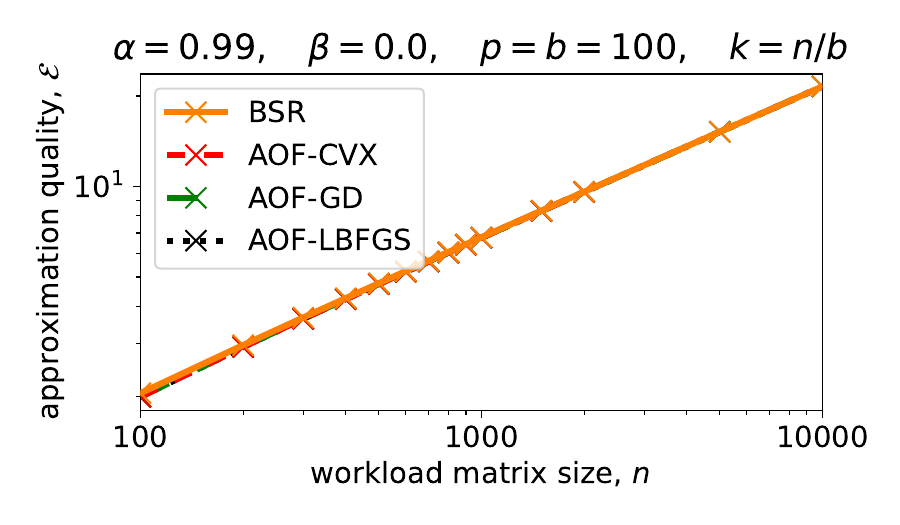}
\caption{Expected approximation error for AOF with GD, LBFGS and CVX optimizers as well as for BSR in the multiple participations setting.}\label{fig:optimizers_first}
\end{figure}

\begin{figure}[t]
    \centering
    \includegraphics[width=0.48\linewidth]{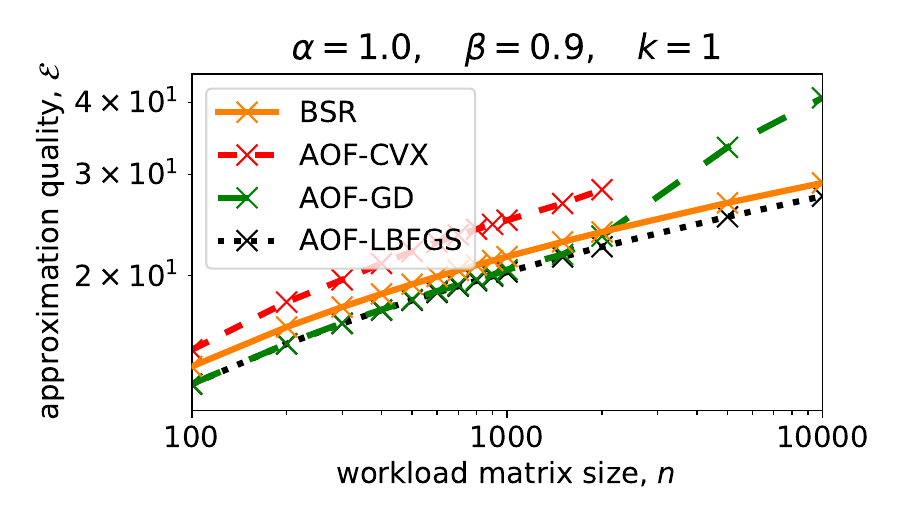}
    \quad   
    \includegraphics[width=0.48\linewidth]{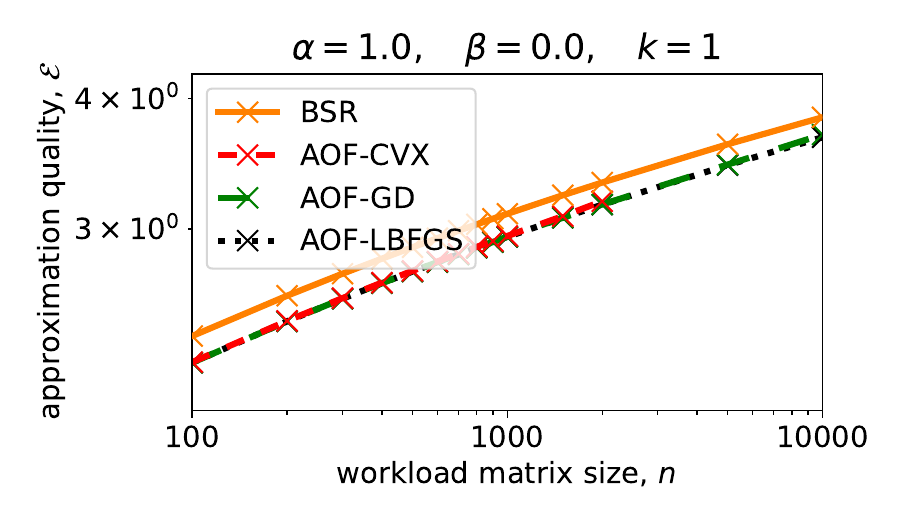}
\\
    \includegraphics[width=0.48\linewidth]{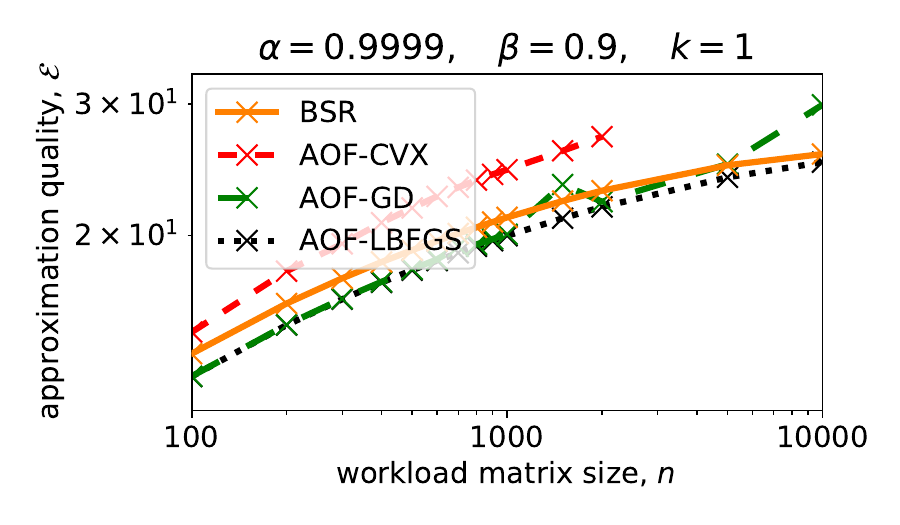}
    \quad   
    \includegraphics[width=0.48\linewidth]{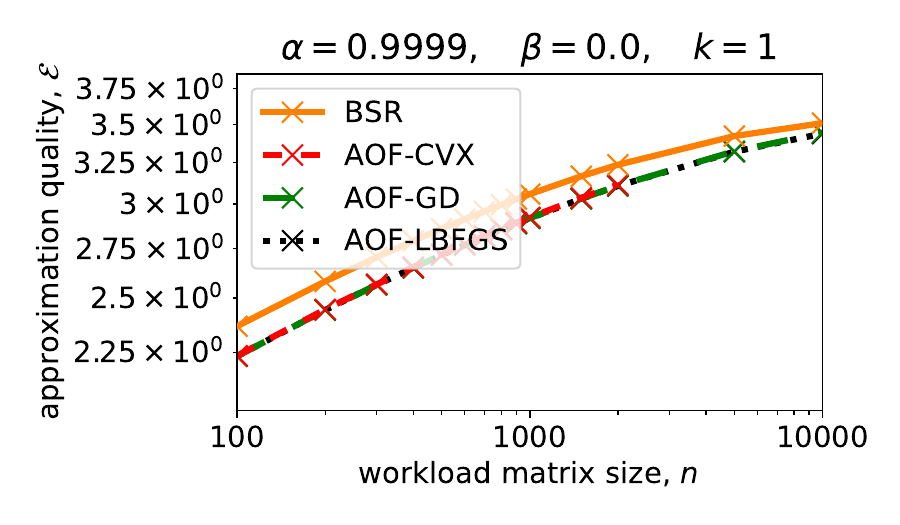}
\\
    \includegraphics[width=0.48\linewidth]{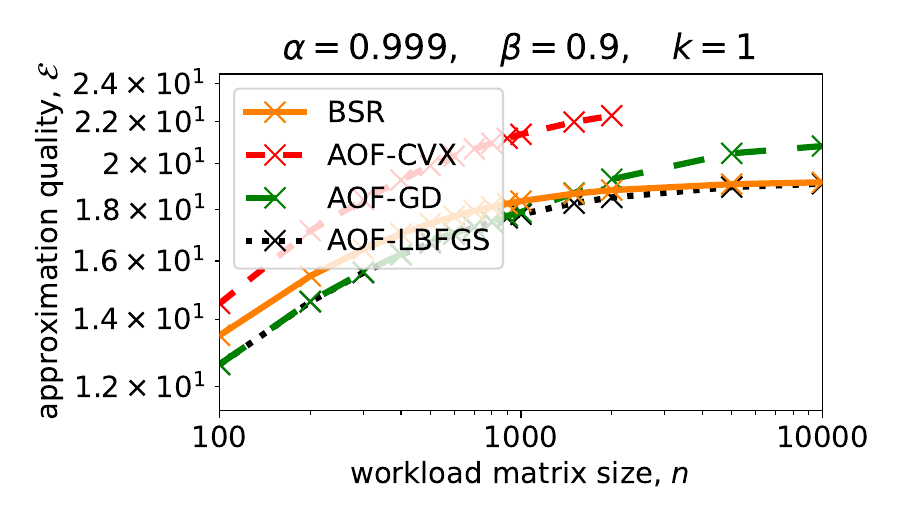}
    \quad   
    \includegraphics[width=0.48\linewidth]{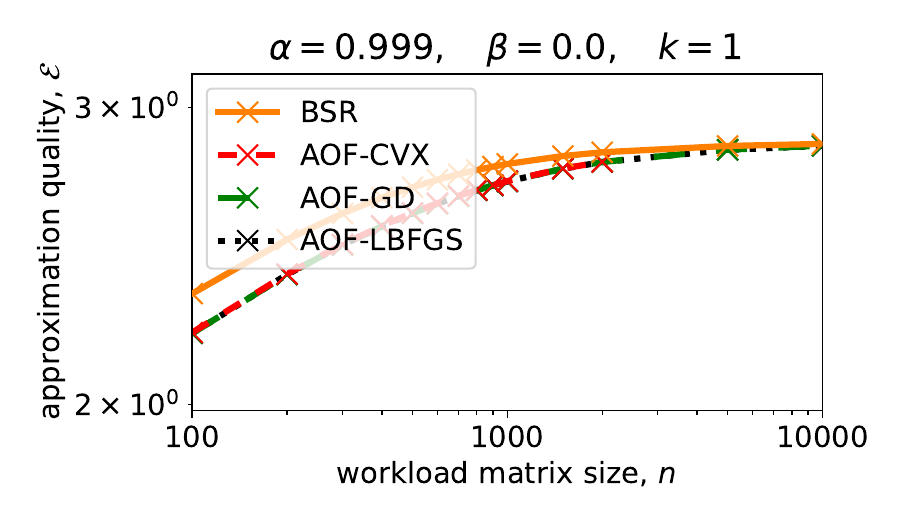}
\\
    \includegraphics[width=0.48\linewidth]{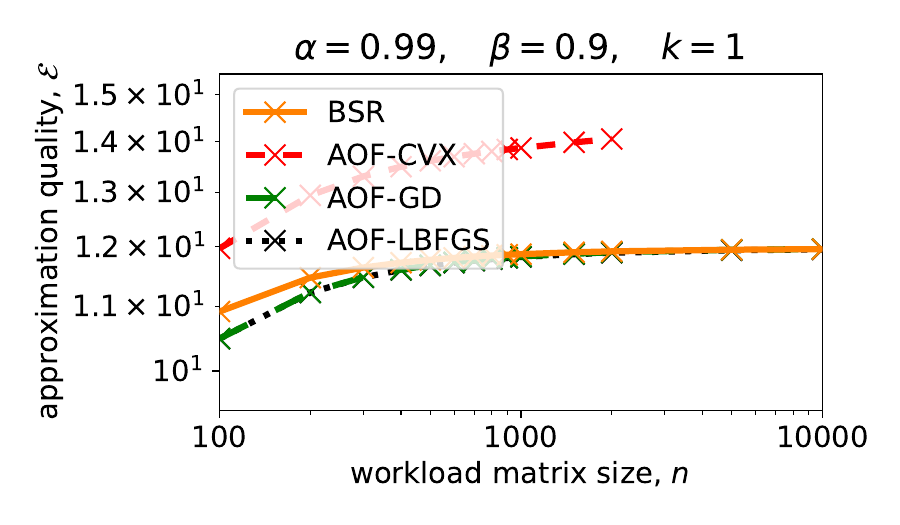}
    \quad   
    \includegraphics[width=0.48\linewidth]{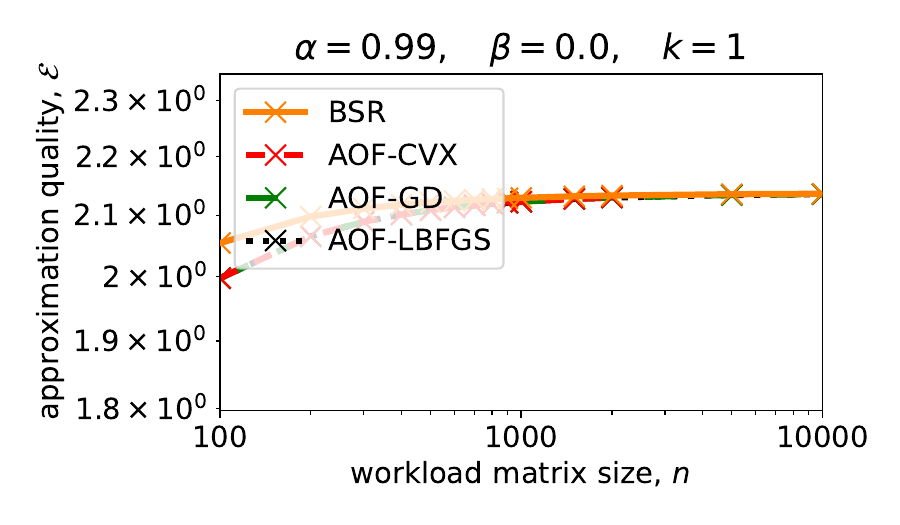}
\caption{Expected approximation error for AOF with GD, LBFGS and CVX optimizers as well as for BSR in the single participation setting.}\label{fig:optimizers_last}
\end{figure}

\clearpage
\section{Complete Proofs}\label{sec:proofs}


In this section, we provide the complete proofs for our 
results from the main manuscript. For the convenience of the
reader, we also restate the statements themselves. 

\subsection{Proof of Theorem~\ref{thm:SGDM_Root}}\label{sec:SGDM_Root_proof}
\SGDroot*
\begin{proof}
We observe that $A_{\alpha, \beta}$ can be written as  
\begin{align}A_{\alpha, \beta} =\begin{pmatrix}
    1 & 0 & \dots &0 \\
    \alpha & 1 & \dots & 0\\
    \vdots & \vdots & \ddots & \vdots \\
    \alpha^{n - 1} & \alpha^{n - 2} & \dots & 1\\
\end{pmatrix}\times \begin{pmatrix}
    1 & 0 & \dots &0 \\
    \beta & 1 & \dots & 0\\
    \vdots & \vdots & \ddots & \vdots \\
    \beta^{n - 1} & \beta^{n - 2} & \dots & 1\\
\end{pmatrix} =: E_\alpha\times E_\beta
\end{align}

Relying on the result from \citet{Henzinger}, that 
%
$\displaystyle 
    E_1^{1/2} = \begin{pmatrix}
    1 & 0 & \dots & 0 \\
    r_1 & 1 & \dots & 0 \\
    \vdots & \vdots & \ddots & \vdots\\
    r_{n - 1} & r_{n - 2} & \dots & 1
\end{pmatrix}$
%
with $r_k = \left|\binom{-1/2}{k}\right|$, 
one can check that the square roots of the matrices $E_\alpha, E_\beta$ are:

\begin{align}
E_\alpha^{1/2} &= \begin{pmatrix}
    1 & 0 & \dots & 0 \\
    \alpha r_1 & 1 & \dots & 0 \\
    \vdots & \vdots & \ddots & \vdots\\
    \alpha^{n - 1} r_{n - 1} & \alpha^{n - 2}r_{n - 2} & \dots & 1
\end{pmatrix} \;\;\; E_\beta^{1/2} = \begin{pmatrix}
    1 & 0 & \dots & 0 \\
    \beta r_1 & 1 & \dots & 0 \\
    \vdots & \vdots & \ddots & \vdots\\
    \beta^{n - 1}r_{n - 1} & \beta^{n - 2}r_{n - 2} & \dots & 1
\end{pmatrix}.
\end{align}

An explicit check yields that these matrices commute, \ie $E_\alpha^{1/2} E_\beta^{1/2} = E_\beta^{1/2} E_\alpha^{1/2}$. Therefore

\begin{align}
C_{\alpha, \beta} &= A_{\alpha, \beta}^{1/2} = E_\alpha^{1/2} \times E_\beta^{1/2} = \begin{pmatrix}
    1 & 0 & \dots & 0\\
    c_1 & 1 & \dots & 0 \\
    \vdots & \vdots & \ddots & \vdots\\
    c_{n - 1} & c_{n - 2} & \dots & 1\\
\end{pmatrix}, \ \ \text{with}\ \ c_k = \sum_{i = 0}^{k} \alpha^i r_i r_{k - i} \beta^{k-i}.
\end{align}
\end{proof}

\subsection{Proof of Theorem \ref{thm:b-sensitivity}}
\label{sec:b-sensitivity-proof}
\toeplitzsens*

\begin{proof}
Because all entries of $M$ are positive, so are the entries of $M^\top M$. 
Therefore, the condition is fulfilled such that \eqref{eq:sensPi} holds 
with equality, and  
\begin{align}
\sens^2_{k,b}(M) &= \max_{\pi \in \Pi_{k,b}}
\sum_{i, j \in \pi}(M^\top\!M)_{[i, j]}
= 
\max_{\pi \in \Pi_{k,b}} f(\pi,\pi)
\quad\text{for}\quad f(\pi,\pi')=\sum_{i \in \pi}\sum_{j \in \pi'} \langle M_{[\cdot, i]}, M_{[\cdot, j]}\rangle,
\end{align}
where $\Pi_{k,b} = \{\ \pi\subset \{1,\dots,n\} \ :\  |\pi|\leq k \wedge (\{i, j\} \subset \pi \ \Rightarrow\ i=j \ \vee\ |i-j|\geq b)\ \}$.
We now establish that $\{1,1+b,\dots,1+(k-1)b\}$ is an optimal index set, which
implies the statement of the theorem.

To see this, let $\pi^*$ be any optimal solution and let $i^*\in\pi^*$ be a column index that is not as far left as possible, \ie, $\pi^*\setminus\{i^*\}\cup\{i^*-1\}$ would be a valid index set in $\Pi_{k,b}$. 
If such an index $i^*$ exists, we split $\pi^*$ into the 
\emph{left} indices, which are smaller than $i^*$, and 
the remaining, \emph{right}, ones:
$\pi^* = \pi^*_L\dot\cup \pi^*_R$ with 
$\pi^*_L=\{i \mid i\in\pi \wedge i<i^*\}$, 
$\pi^*_R=\{i \mid i\in\pi \wedge i\geq i^*\}$.
Then, we construct a new index set in which the 
left indices are kept but all right ones are
shifted by one position to the left:
$\pi' = \pi^*_L \cup \overleftarrow{\pi}^*_R$
with $\overleftarrow{\pi}^*_R=\{i-1 \mid i\in\pi^*_R\}$.
By the condition on $i^*$, we know $\pi'\in\Pi_{k,b}$.

We now prove that $f(\pi',\pi')\geq f(\pi,\pi)$, so $\pi'$
must also be optimal. 
First, we observe two inequalities: for any $i,j>1$:
\begin{align}
\langle M_{[\cdot, i-1]}, M_{[\cdot, j-1]} \rangle 
&=\langle M_{[\cdot, i]}, M_{[\cdot, j]} \rangle + m_{n - i + 1}m_{n - j + 1} 
\geq 
\langle M_{[\cdot, i]}, M_{[\cdot, j]} \rangle,\label{eq:Mshiftboth}
\end{align}
and for $i\geq 1, j>1$:
\begin{align}
    \langle M_{[\cdot, i]}, M_{[\cdot, j-1]} \rangle 
    &= \sum_{l=1}^n M_{[l, i]}, M_{[l, j-1]} = \sum_{l=j-1}^n m_{l-i}m_{l-j+1}
    \\
    &= \sum_{l=j-1}^{n-1} m_{l-i}m_{l-j+1} \quad+\quad  m_{n-i}m_{n-j} 
    \\
    &\geq  \sum_{l=j}^{n} m_{l-i-1}m_{l-j} 
    \\
    &\geq \sum_{l=j}^n m_{l-i}m_{l-j} = \sum_{l=1}^n M_{[l, i]}, M_{[l, j]}  = \langle M_{[\cdot, i]}, M_{[\cdot, j]} \rangle,    \label{eq:Mshiftone}
\end{align}
where the last inequality holds because by assumption $m_{l-i-1}\geq m_{l-i}$ for $l\geq i+1$.

Now, we split $f(\pi',\pi')$ and $f(\pi',\pi')$ into three terms: 
the inner products of indices below $i^*$, the ones of terms
above $i^*$ and the ones between both,
\begin{align}
    f(\pi^*,\pi^*) &= f(\pi^*_L, \pi^*_L) 
    + f(\pi^*_R,\pi^*_R) + 2f(\pi^*_L,\pi^*_R).
\\
    f(\pi',\pi') &= f(\pi^*_L, \pi^*_L) + f(\overleftarrow{\pi}^*_R,\overleftarrow{\pi}^*_R)
    + 2f(\pi^*_L,\overleftarrow{\pi}^*_R).
\end{align}
The first term appears identically in both expressions.
The second term fulfills 
\begin{align}
f(\overleftarrow{\pi}^*_R,\overleftarrow{\pi}^*_R)
&=\sum_{i,j\in\overleftarrow{\pi}^*_R} \langle M_{[\cdot, i]}, M_{[\cdot, j]}\rangle
=\sum_{i,j\in\pi^*_R} \langle M_{[\cdot, i-1]}, M_{[\cdot, j-1]}\rangle
\\
&\geq \sum_{i,j\in\pi^*_R} \langle M_{[\cdot, i]}, M_{[\cdot, j]} \rangle
= f(\pi^*_R,\pi^*_R)
\end{align}
by Equation~\eqref{eq:Mshiftboth}.
The third term fulfills 
\begin{align}
    f(\pi^*_L,\overleftarrow{\pi}^*_R)
    &= \sum_{i\in\pi^*_L}\sum_{j\in\overleftarrow{\pi}^*_R}
    \langle M_{[\cdot, i]}, M_{[\cdot, j]} \rangle 
    = \sum_{i\in\pi^*_L}\sum_{j\in\pi^*_R}
    \langle M_{[\cdot, i]}, M_{[\cdot, j-1]} \rangle 
    \\
    &\geq \sum_{i\in\pi^*_L}\sum_{j\in\pi^*_R}
    \langle M_{[\cdot, i]}, M_{[\cdot, j]} \rangle = f(\pi^*_L,\pi^*_R)
\end{align}
by Equation~\eqref{eq:Mshiftone}.
In combination, this establishes $f(\pi',\pi')\geq f(\pi^*,\pi^*)$,
and since $\pi^*$ was already optimal, the same must hold for $\pi'$.

Using the above construction, we can create a new optimal index 
sets, $\pi^*$, until reaching one that does not contain any 
index $i^*$ as described anymore. 
Then $\pi^*=\{1,1+b,\dots,1+(l-1)b\}$ for some $l\in\mathbb{N}$ 
must hold. 
If $l=k$, the statement of Theorem~\ref{thm:b-sensitivity} is confirmed.
Otherwise, $\pi'=\{1,\dots,1+(k-1)b\}$ is superset of $\pi^*$, so 
because of the positivity of entries, $f(\pi',\pi')\geq f(\pi^*,\pi^*)$
must hold. 
Once again, because $\pi^*$ was optimal, the same must hold for $\pi'$, 
which concludes the proof.
\end{proof}

\subsection{Proof of Corollary  \ref{cor:decreasing_entities}}\label{sec:decreasing_entities_proof}

\BSRsensitivity*

\begin{proof}
From \eqref{thm:SGDM_Root} we know that $C_{\alpha, \beta}$ 
is a Toeplitz matrix with coefficients $(1,c_1,\dots,c_{n-1})$,
where 
$c_j=\sum_{i=0}^j \alpha^i r_i r_{j-i} \beta^{j-i}$
for 
$0\leq \beta < \alpha \leq 1$, with $r_i=|\binom{-1/2}{i}|
= \frac{B_i}{4^i}$,
where $B_i=\binom{2i}{i}$ is the $i$-\emph{central binomial coefficient}.
It suffices to show that $c_j\geq c_{j+1}$ for any $j\in\{1,\dots,n-1\}$.

First, we show for the $r_i$ coefficients: 
\begin{align}
r_i - r_{i+1} &= \frac{1}{4^i}\binom{2i}{i} - \frac{1}{4^{i+1}}\binom{2i+2}{i+1} 
= \frac{1}{4^i}\frac{(2i)!}{i!\,i!}  -\frac{1}{4^{i+1}}\frac{(2i+2)!}{(i+1)!\,(i+1)!}
\\
&= \frac{1}{4^i}\frac{(2i)!}{i!\,i!}\Big(1 -\frac{1}{4}\frac{(2i+2)(2i+1)}{(i+1)\,(i+1)}\Big)
= r_i\Big(1 -\frac{2i+1}{2(i+1)}\Big)
\\
&=\frac{r_i}{2(i+1)} = \frac{1}{4^i\cdot 2}C_{i+1}
\end{align}
where $C_j=\frac{1}{j+1}B_j = \frac{1}{j+1}\binom{2j}{j}$ is the $j$-th \emph{Catalan number}.

Now, we study the case $\alpha=1$. 
If $\beta=0$, then $c_1=c_2=\cdots=c_n=1$, so monotonicity is fulfilled.
Otherwise, \ie $0<\beta<1$, we write

\begin{align}
c_{k}-c_{k+1} &= \sum_{i=0}^{k}r_i (r_{k - i} - r_{k + 1 - i})\beta^{i} - r_{k + 1}r_0 \beta^{k + 1}
\\
&\geq \frac{1}{4^k}\sum_{i = 0}^{k}\frac{1}{2}B_i C_{k - i}\beta^{k-i} - \frac{1}{4^{k +1}}B_{k + 1}\beta^{k + 1}
\\
& = \frac{\beta^{k}}{4^{k + 1}} \left[ 2\sum_{i = 0}^{k}B_i C_{k - i}\beta^{-i} - B_{k + 1}\beta\right]
\intertext{Using the classic identity between Catalan numbers, $2\sum_{i = 0}^{k}B_{k - i} C_{i} = B_{k + 1}$, \eg \cite[Identity 4.2]{Catalan} we obtain}
& = \frac{\beta^{k}}{4^{k + 1}} \left[ 2\sum_{i = 0}^{k}B_i C_{k - i}(\beta^{-i} - \beta)\right] > 0,
\end{align}
where the last inequality follow from the fact that $\beta^{-i}-\beta>0$ for each $i=0,\dots,k$ and any $\beta<1$. This proves the monotonicity of $c_k$. 

For $\alpha<1$, we observe that $c_j=\alpha^j\sum_{i=1}^j r_i r_{k-i}\
\gamma^{j-i}$ for $\gamma=\frac{\alpha}{\beta}<1$. Clearly, the sequence 
$\alpha^j$ is decreasing, and by the above argument, the sum is 
decreasing, too. Consequently, $c_j$ is the product of two decreasing sequences, so it is also decreasing, which concludes the proof.
\end{proof}

\subsection{Useful Lemmas}
Before the remaining proofs, we establish a number of useful lemmas.

\begin{lemma}\label{lem:sens_n_identity}
For any $C\in\R^{n\times n}$ with $C^\top C\geq 0$ it holds for any $b\in\{1,\dots,n\}$ that 
\begin{align}
\sens_{1,b}^2(C) = \|C\|_{2,\infty},
\label{eq:sens_n_identity}
\end{align}
where $\|C\|^2_{2,\infty}=\max_{i=1,\dots,n} \|C_{[\cdot,i]}\|^2$.
\end{lemma}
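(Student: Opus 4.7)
The plan is to invoke the sensitivity identity \eqref{eq:sensPi}, which, under the hypothesis $C^\top C \geq 0$ (entrywise non-negativity of $C^\top C$), holds with equality, and then to observe that the constraint set $\Pi_{1,b}$ for $k=1$ collapses to singletons, trivializing the maximization.

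More concretely, first I would unpack $\Pi_{1,b}$: every $\pi \in \Pi_{1,b}$ satisfies $|\pi| \leq 1$, so $\pi$ is either empty (which contributes $0$ and can be ignored) or a singleton $\{i\}$ for some $i \in \{1,\dots,n\}$; the $b$-separation constraint is vacuous since there are no two distinct indices to separate. Therefore
\begin{align}
\sens_{1,b}^2(C) = \max_{\pi \in \Pi_{1,b}} \sum_{i,j \in \pi}\bigl|(C^\top C)_{[i,j]}\bigr| = \max_{i \in \{1,\dots,n\}} (C^\top C)_{[i,i]},
\end{align}
where the first equality uses \eqref{eq:sensPi} with equality (justified by the assumption $C^\top C \geq 0$, which is exactly the condition under which the bound is tight, as noted after \eqref{eq:sensPi}).

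To finish, I would identify the diagonal entry $(C^\top C)_{[i,i]}$ with the squared norm of the $i$-th column of $C$, namely $\|C_{[\cdot,i]}\|^2$, so that
\begin{align}
\sens_{1,b}^2(C) = \max_{i=1,\dots,n} \|C_{[\cdot,i]}\|^2 = \|C\|_{2,\infty}^2,
\end{align}
which is the claim. There is essentially no obstacle here: the lemma is a direct consequence of the definition of $\Pi_{k,b}$ specialized to $k=1$ combined with the tightness of the sensitivity formula under the non-negativity hypothesis. The only care needed is the (apparently typographical) distinction between $\|C\|_{2,\infty}$ and $\|C\|_{2,\infty}^2$ in the statement, which I would clarify so that the identity reads $\sens_{1,b}^2(C) = \|C\|_{2,\infty}^2$ in accordance with the definition given in the lemma.
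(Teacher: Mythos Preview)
Your proposal is correct and follows essentially the same approach as the paper's proof: specialize the sensitivity identity \eqref{eq:sensPi} (with equality under $C^\top C\geq 0$) to $k=1$, observe that $\Pi_{1,b}$ reduces to singletons, and identify $(C^\top C)_{[i,i]}=\|C_{[\cdot,i]}\|^2$. Your remark about the missing square on $\|C\|_{2,\infty}$ is also apt; the paper's own proof indeed concludes with $\|C\|_{2,\infty}^2$.
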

\begin{proof}
This follows directly from Theorem~\ref{thm:b-sensitivity}: 
\begin{align}
\sens_{1,b}^2(C) = \max_{\pi\in\Pi_{1,b}} \sum_{i,j\in\pi} [C^\top C]_{i,j}
= \max_{i=1,\dots,n}[C^\top C]_{i,i}  
= \max_{i=1,\dots,n} \|C_{[\cdot,i]}\|^2.
\end{align}
\end{proof}

\begin{lemma}\label{lem:sens_inequalities}
For any $C\in\R^{n\times n}$ with $C^\top C\geq 0$ it holds 
for any $b\in\{1,\dots,n\}$ and $k\in\{1,\dots,\frac{b}{n}\}$ that 
\begin{align}
\frac{k}{n}\|C\|^2_{\Fr}  &\leq \sens_{k,b}^2(C) \leq k \|C\|_{\Fr}^2,
\label{eq:sens_upper_lower_bound}
\end{align}
\end{lemma}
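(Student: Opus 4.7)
The plan is to exploit the identity established by the assumption $C^\top C\geq 0$ (entrywise) together with equation~\eqref{eq:sensPi}, which gives
\begin{align*}
\sens_{k,b}^2(C) \;=\; \max_{\pi\in\Pi_{k,b}} \sum_{i,j\in\pi}(C^\top C)_{[i,j]} \;=\; \max_{\pi\in\Pi_{k,b}}\Big\|\sum_{i\in\pi} C_{[\cdot,i]}\Big\|^2.
\end{align*}
Both inequalities will be read off directly from this expression.

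For the \textbf{upper bound}, I apply Cauchy--Schwarz to the sum of $|\pi|\leq k$ vectors, obtaining $\|\sum_{i\in\pi} C_{[\cdot,i]}\|^2 \leq k\sum_{i\in\pi}\|C_{[\cdot,i]}\|^2$, and then bound the inner sum by the full sum $\sum_{i=1}^n\|C_{[\cdot,i]}\|^2 = \|C\|_{\Fr}^2$. Taking the maximum over $\pi\in\Pi_{k,b}$ yields $\sens_{k,b}^2(C)\leq k\|C\|_{\Fr}^2$.

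For the \textbf{lower bound}, I use a simple probabilistic/averaging argument. Partition $\{1,\dots,n\}$ into the $b$ residue classes $R_s=\{s,s+b,\dots,s+(m-1)b\}$ with $m=n/b$, each of size $m\geq k$. Draw $\pi$ at random by first sampling $s\sim\mathrm{Unif}\{1,\dots,b\}$ and then picking a uniformly random $k$-subset of $R_s$. Any two elements of such a $\pi$ differ by a nonzero multiple of $b$, so $\pi\in\Pi_{k,b}$ almost surely. A direct computation gives $\Pr[i\in\pi]=\frac{1}{b}\cdot\frac{k}{m}=\frac{k}{n}$ for every $i$. Since $(C^\top C)_{[i,j]}\geq 0$ by hypothesis, discarding off-diagonal terms only decreases the sum, so
\begin{align*}
\sens_{k,b}^2(C) \;\geq\; \E_\pi\Big[\sum_{i,j\in\pi}(C^\top C)_{[i,j]}\Big] \;\geq\; \E_\pi\Big[\sum_{i\in\pi}(C^\top C)_{[i,i]}\Big] \;=\; \frac{k}{n}\trace(C^\top C) \;=\; \frac{k}{n}\|C\|_{\Fr}^2.
\end{align*}

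The main (and only nontrivial) step is the construction of the random $\pi$: one needs a distribution supported on $\Pi_{k,b}$ under which every index is equally likely to be sampled. Stratifying by residue class mod $b$ accomplishes this cleanly, and the integrality assumption $k\leq n/b$ from the paper's standing conventions ensures that each $R_s$ is large enough to admit a size-$k$ subset. Everything else is a one-line application of Cauchy--Schwarz or nonnegativity.
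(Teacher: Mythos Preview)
Your proof is correct. The upper bound is essentially identical to the paper's argument (Cauchy--Schwarz followed by bounding the partial column-norm sum by the full Frobenius norm).

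For the lower bound, however, you take a genuinely different and cleaner route. The paper first reduces to the diagonal contribution (as you do, via entrywise nonnegativity of $C^\top C$) and then proves $\max_{\pi\in\tilde\Pi_k}\sum_{i\in\pi}\|C_{[\cdot,i]}\|^2\geq\frac{k}{n}\|C\|_{\Fr}^2$ by \emph{backwards induction} on $k$: the base case $k=n/b$ is handled by partitioning $[n]$ into the $b$ residue classes and averaging, and the induction step removes the column of smallest norm from an optimal $k$-set. Your probabilistic argument collapses this entire induction into a single line: by sampling a residue class uniformly and then a uniform $k$-subset of it, every index has marginal probability exactly $k/n$, so $\max\geq\E$ gives the bound directly. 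Your approach is shorter, avoids the case split between $k=n/b$ and $k<n/b$, and makes transparent why the constant $k/n$ is the right one; the paper's induction is more hands-on but buys nothing extra here. Both arguments rely on the standing assumption that $n/b$ is an integer, which you correctly invoke.
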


\begin{proof}
We first show the upper bound.
Observe that for any $\pi\subset[n]$:
\begin{align}
\sum_{i,j\in\pi} [C^\top C]_{i,j}
&= \sum_{i,j\in\pi} \langle C_{[\cdot, i]}, C_{[\cdot, j]} \rangle
\leq \sum_{i,j\in\pi}
\|C_{[\cdot, i]}\|\|C_{[\cdot, j]}\|
\\
&=(\sum_{i\in\pi} \|C_{[\cdot, i]}\|)^2
\leq 
|\pi|\sum_{i\in\pi} \|C_{[\cdot, i]}\|^2
\leq 
|\pi|\|C\|^2_{\Fr}
\intertext{Therefore, using Theorem~\ref{thm:b-sensitivity}:}
\sens^2(C) &= \max_{\pi\in\Pi_{k,b}}\sum_{i,j\in\pi} [C^\top C]_{i,j} \leq k\|C\|^2_{\Fr}
\end{align}

For the lower bound, we introduce some additional notation.
Let $\tilde\Pi_k$ be the set of $b$-separated index sets with \emph{exactly} $k$ elements.
Then, from 
Theorem~\ref{thm:b-sensitivity}, we obtain
\begin{align}
\sens^2_{k,b}(C)&= \max_{\pi\in\Pi_{k,b}} \sum_{i,j\in\pi}[C^\top\!C]_{ij}
\geq \max_{\pi\in\Pi_{k,b}} \sum_{i\in\pi}[C^\top\!C]_{ii}
=\max_{\pi\in\Pi_{k,b}} S(\pi) \geq \max_{\pi\in\tilde\Pi_k} S(\pi), 
\label{eq:sens_sum_norms}
\end{align}
with the notation $S(I) = \sum_{i\in I}\|C_{[\cdot,i]}\|^2$ for any index set $I\subset \{1,\dots,n\}$.

Now, we prove by backwards induction over $k=1,\dots,\frac{n}{b}$:
\begin{align}
\max_{\pi\in\tilde\Pi_k} S(\pi) &\geq \frac{k}{n}\|C\|^2_{\Fr}
\label{eq:sens_Pi_geq_Fr}
\end{align}

As base case, let $k=\frac{n}{b}$. Denote by $\pi_{i} := \{i, i+b, i+2b, \dots, i+(n-b)\}$ for $i=1,\dots,b$ the uniformly spaced index sets. 
By construction they all fulfill $\pi_i\in\tilde\Pi_{n/b}$ and $\bigcup_{i=1}^n \pi_i = [n]$, where the union is disjoint. 
Therefore
\begin{align}
\max_{\pi\in\tilde\Pi_{n/b}} S(\pi)&\geq \max_{i=1,\dots,b}S(\pi_i) \geq 
\frac{1}{b}\sum_{i=1}^b S(\pi_i) = \frac{1}{b}S([n]) = \frac{1}{b}\|C\|^2_{\Fr}.
\end{align}
This proves the statement~\eqref{eq:sens_Pi_geq_Fr}, because $\frac{1}{b}=\frac{k}{n}$ in this case. 

As an induction step, we prove that if \eqref{eq:sens_Pi_geq_Fr} holds for some value $k\leq\frac{n}{b}$, then it also holds for $k-1\geq 1$. 

Let $\pi^*\in\operatorname{argmax}_{\pi\in\tilde\Pi_{k}}S(\pi)$
and $j^*=\operatorname{argmin}_{j\in\pi^*}S(\{j\})$, such that 
we know that $S(\{j\})\leq \frac{1}{k}S(\pi^*)$.
Now, set $\pi'=\pi^*\setminus\{j^*\}$. Because $\pi'\in\tilde\Pi_{k-1}$, 
it follows that
\begin{align}
    \max_{\pi\in\tilde\Pi_{k-1}} S(\pi) &\geq S(\pi') 
    = S(\pi^*) - S(\{j\}) \geq \frac{k-1}{k}S(\pi^*) 
    \geq \frac{k-1}{n}\|C\|^2_{\Fr},
\end{align}
where in the last step we used the induction hypothesis. This concludes the proof.
\end{proof}

\begin{lemma}\label{lem:Ecal_C_alpha_beta}
For $C_{\alpha, \beta}$ as in \eqref{thm:SGDM_Root}, and $k=1$, it holds that 
\begin{align}
\frac{1}{n} \sum_{j=1}^{n}\sum_{i=0}^{n-j}c_i^2 \leq 
\Ecal(C_{\alpha, \beta},C_{\alpha, \beta}) &\leq \sum_{i = 0}^{n - 1} c_i^2
\end{align}
\end{lemma}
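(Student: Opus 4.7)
The plan is to reduce the claim to elementary inequalities between the two Toeplitz-induced quantities that enter the definition of $\Ecal$. Recall that by definition
\begin{align}
\Ecal(C_{\alpha,\beta},C_{\alpha,\beta})^2 \;=\; \tfrac{1}{n}\,\sens_{1,n}(C_{\alpha,\beta})^2\,\|C_{\alpha,\beta}\|_{\Fr}^2 ,
\end{align}
so I need closed-form expressions for the sensitivity and the Frobenius norm of $C_{\alpha,\beta}$, and then a pair of simple comparisons between them.

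First I would compute both quantities using that $C_{\alpha,\beta}=\toep(1,c_1,\dots,c_{n-1})$ is a lower triangular Toeplitz matrix. Since the coefficients $c_0,c_1,\dots,c_{n-1}$ are non-negative and (by Corollary~\ref{cor:decreasing_entities}) monotonically decreasing, Theorem~\ref{thm:b-sensitivity} with $k=1$ yields $\sens_{1,n}(C_{\alpha,\beta})^2=\sum_{i=0}^{n-1}c_i^2$ (the norm of the first column). For the Frobenius norm, I would count how often each coefficient $c_i$ appears on a diagonal of length $n-i$, giving
\begin{align}
\|C_{\alpha,\beta}\|_{\Fr}^2 \;=\; \sum_{i=0}^{n-1}(n-i)\,c_i^2 \;=\; \sum_{j=1}^{n}\sum_{i=0}^{n-j}c_i^2.
\end{align}

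With these two identities in hand, the upper and lower bounds in the lemma reduce to two elementary inequalities between $S:=\sum_{i=0}^{n-1}c_i^2$ and $F:=\sum_{i=0}^{n-1}(n-i)c_i^2$. For the upper bound, each weight $(n-i)$ satisfies $n-i\leq n$, so $F\leq n\,S$ and therefore
\begin{align}
\Ecal(C_{\alpha,\beta},C_{\alpha,\beta})^2 \;=\; \tfrac{1}{n}\,S\,F \;\leq\; S^2,
\end{align}
which gives $\Ecal\leq S=\sum_{i=0}^{n-1}c_i^2$. For the lower bound, the inequality $S\geq F/n$ is equivalent to $\sum_{i}\tfrac{i}{n}c_i^2\geq 0$, which is trivially true, and hence
\begin{align}
\Ecal(C_{\alpha,\beta},C_{\alpha,\beta})^2 \;=\; \tfrac{1}{n}\,S\,F \;\geq\; \tfrac{F^2}{n^2},
\end{align}
so $\Ecal\geq F/n=\tfrac{1}{n}\sum_{j=1}^{n}\sum_{i=0}^{n-j}c_i^2$, matching the lower bound.

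There is no real obstacle here beyond bookkeeping: the only thing to verify carefully is the reindexing that identifies $\|C_{\alpha,\beta}\|_{\Fr}^2$ with the double sum in the stated lower bound, and the appeal to monotonicity of $c_i$ to invoke Theorem~\ref{thm:b-sensitivity} (with $k=1$, $b=n$) to get the closed form of the sensitivity.
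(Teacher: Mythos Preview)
Your proof is correct and follows essentially the same route as the paper: both compute $\sens_{1,b}(C_{\alpha,\beta})^2=\sum_i c_i^2$ and $\|C_{\alpha,\beta}\|_{\Fr}^2=\sum_i (n-i)c_i^2$, then use the two trivial comparisons $F\le nS$ and $S\ge F/n$ (which the paper phrases via Lemma~\ref{lem:sens_inequalities}). A small simplification: for $k=1$ you do not need the monotonicity of the $c_i$ from Corollary~\ref{cor:decreasing_entities}; Lemma~\ref{lem:sens_n_identity} (or simply the observation that for a lower-triangular Toeplitz matrix the first column has maximal norm) already gives $\sens_{1,b}^2=\sum_i c_i^2$.
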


\begin{proof}
From Lemmas~\ref{lem:sens_inequalities} and~\ref{lem:sens_n_identity} 
we obtain
\begin{align}
\Ecal(C_{\alpha, \beta},C_{\alpha, \beta}) &\leq \frac{1}{\sqrt{n}}\|C_{\alpha, \beta}\|_{\Fr}\sens_{1,b}(C_{\alpha, \beta})
\leq \Big(\sens_{1,b}(C_{\alpha, \beta})\Big)^2
\leq \|C\|^2_{2,\infty} = \sum_{i=0}^{n-1} c_i^2,
\end{align}
where the last identify follows from the explicit form of $C_{\alpha, \beta}$.
The lower bound follows from
\begin{align}
\Ecal(C_{\alpha, \beta},C_{\alpha, \beta}) &= \frac{1}{\sqrt{n}}\|C_{\alpha, \beta}\|_{\Fr}\sens_{1,b}(C_{\alpha, \beta})
\geq \frac{1}{n}\|C\|^2_{\Fr} 
\end{align}
and again the explicit form of $\|C\|^2_{\Fr}$.
\end{proof}

\begin{lemma}\label{lem:r_bounds}
For $r_j = |\binom{-1/2}{j}| = \frac{1}{4^j}\binom{2j}{j}$ it holds that:
\begin{align}
r_0 &= 1 \qquad\text{and}\qquad r_1=\frac12\qquad\text{and in general}\qquad 
\frac{1}{2\sqrt{j}}\le r_j \le  \frac{1}{\sqrt{\pi j}}
\label{eq:r_bounds}
\qquad\text{for $j\geq 1$.}
\end{align}
\end{lemma}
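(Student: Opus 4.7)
The plan is straightforward. First I would verify the closed-form values $r_0 = 1$ and $r_1 = 1/2$ by direct substitution into $r_j = \binom{2j}{j}/4^j$, using $\binom{0}{0}=1$ and $\binom{2}{1}=2$.

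For the two-sided bound, the plan is to analyze the auxiliary quantity $a_j := j\, r_j^2$. From the telescoping identity $r_j = \prod_{i=1}^{j} \tfrac{2i-1}{2i}$, equivalent to the recursion $r_{j+1}/r_j = (2j+1)/(2j+2)$, I would compute
\begin{align}
\frac{a_{j+1}}{a_j} \;=\; \frac{j+1}{j}\left(\frac{2j+1}{2j+2}\right)^{\!2} \;=\; \frac{(2j+1)^2}{4j(j+1)} \;=\; 1 + \frac{1}{4j(j+1)} \;>\; 1,
\end{align}
so that $(a_j)_{j\ge 1}$ is strictly increasing.

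Combined with the base case $a_1 = 1\cdot(1/2)^2 = 1/4$, strict monotonicity immediately gives $a_j \ge 1/4$, i.e.\ $r_j \ge 1/(2\sqrt{j})$ for all $j\ge 1$. For the matching upper bound I would invoke Wallis's product (equivalently, Stirling's approximation applied to the central binomial coefficient), which yields $\lim_{j\to\infty} a_j = 1/\pi$; since $(a_j)$ is increasing and tends to this limit from below, $a_j \le 1/\pi$, and hence $r_j \le 1/\sqrt{\pi j}$.

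There is no substantive obstacle here: the only nontrivial input is the value of the Wallis limit, and everything else is routine algebra on the recursion for $r_j$. The one point warranting care is the direction of monotonicity (one must normalize by $j$ rather than $j+\tfrac12$ or similar), but the ratio computation above makes the correct choice apparent.
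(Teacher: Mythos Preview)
Your argument is correct and more elementary than the paper's. The paper simply invokes the general entropy bound
\[
\sqrt{\tfrac{m}{8k(m-k)}}\,2^{mH(k/m)} \le \binom{m}{k} \le \sqrt{\tfrac{m}{2\pi k(m-k)}}\,2^{mH(k/m)}
\]
from \cite{Binomials} and specializes to $m=2j$, $k=j$, treating the lemma as an instance of a known inequality rather than proving it from scratch. Your route---showing that $a_j=j\,r_j^2$ increases strictly, so it is sandwiched between its initial value $a_1=1/4$ and its Wallis limit $1/\pi$---is self-contained and arguably more illuminating for this specific sequence, since it makes visible that the lower bound is attained with equality at $j=1$ and the upper bound is the sharp asymptotic constant. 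The trade-off is that you rely on the Wallis product (or equivalently $\binom{2j}{j}\sim 4^j/\sqrt{\pi j}$), whereas the paper's cited inequality packages that asymptotic into a non-asymptotic statement.
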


\begin{proof}
The double inequality is a particular case of a more general pair of binomial inequalities when $k = j$ and $m = 2j$:
    \begin{equation}
        \label{eq:binomial_inequality}
        \sqrt{\frac{m}{8k(m -k)}}2^{mH(k/m)} \le \binom{m}{k} \le \sqrt{\frac{m}{2\pi k(m -k)}}2^{mH(k/m)},
    \end{equation}
where $H(k/m)$ is the binary entropy function, with $H(1/2) = 1$.
The proof of the general result \eqref{eq:binomial_inequality}, can be found in \citet[Chapter 10, Lemma 7, p309]{Binomials}.
\end{proof}

\begin{lemma}\label{lem:c_bounds}
Let $c_k=\sum_{j=0}^k \alpha^j r_j r_{k-j}\beta^{k-j}$ as in \eqref{thm:SGDM_Root}. Then $c_0=1$, and for $j\geq 1$: 
\begin{align}
    \frac{\alpha^j}{2 \sqrt{j+1}} &\leq c_j \leq \frac{\alpha^j}{(1 - \frac{\beta}{\alpha}) \sqrt{j+1}}.
\end{align}
\end{lemma}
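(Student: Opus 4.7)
The plan is to normalize the sum by pulling out a factor of $\alpha^j$ and setting $\gamma=\beta/\alpha\in[0,1)$, so that
\begin{align}
c_j = \alpha^j \sum_{m=0}^{j} r_m\, r_{j-m}\, \gamma^{m},
\end{align}
and then attack each inequality separately. For the lower bound there is essentially nothing to do: since every term in the sum is non-negative, the $m=0$ contribution alone gives $c_j \geq \alpha^j r_0 r_j = \alpha^j r_j$, and Lemma~\ref{lem:r_bounds} yields $r_j \geq \tfrac{1}{2\sqrt{j}} \geq \tfrac{1}{2\sqrt{j+1}}$, which is the stated lower bound.

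The upper bound is where the actual work lies. The key combinatorial claim I plan to establish is that the function $f(m):=r_m r_{j-m}$, viewed on $\{0,1,\dots,j\}$, is maximized at the endpoints $m=0$ and $m=j$, so that $r_m r_{j-m} \leq r_0 r_j = r_j$ uniformly in $m$. To prove this I would use the explicit ratio $r_m/r_{m-1} = 1-\tfrac{1}{2m}$ coming from the definition $r_m = \binom{2m}{m}/4^m$, compute
\begin{align}
\frac{f(m)}{f(m-1)} = \frac{(2m-1)(2j-2m+2)}{2m(2j-2m+1)},
\end{align}
and check that this ratio is $<1$ for $m < (j+1)/2$ and $\geq 1$ for $m\geq (j+1)/2$. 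Thus $f$ is U-shaped in $m$, so its maximum over $\{0,\dots,j\}$ is attained at the symmetric endpoints.

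Given this, the upper bound follows by a two-line calculation: dropping the dependence on $m$ inside the sum and bounding the resulting geometric series,
\begin{align}
\frac{c_j}{\alpha^j} \;\leq\; r_j \sum_{m=0}^{j} \gamma^m \;\leq\; \frac{r_j}{1-\gamma} \;\leq\; \frac{1}{(1-\gamma)\sqrt{\pi j}} \;\leq\; \frac{1}{(1-\gamma)\sqrt{j+1}},
\end{align}
where the last step uses $\pi j \geq j+1$ for $j\geq 1$, and the penultimate step uses the upper half of Lemma~\ref{lem:r_bounds}. Recalling that $1-\gamma = 1-\beta/\alpha$ gives the claimed form. The main obstacle is really the endpoint-maximization claim for $r_m r_{j-m}$; once that is in hand, everything else is a direct calculation with the bounds already isolated in Lemma~\ref{lem:r_bounds}.
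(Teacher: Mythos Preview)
Your proof is correct, and in fact it is cleaner than the paper's own argument. The lower bound is handled identically (drop all but the $m=0$ term and apply Lemma~\ref{lem:r_bounds}). The difference is in the upper bound: the paper treats $j=1$ by hand and then, for $j\geq 2$, splits off the two endpoint terms and bounds each interior term separately via $r_m r_{j-m}\le \frac{1}{\pi\sqrt{m(j-m)}}\le \frac{1}{\pi\sqrt{j-1}}$, followed by a somewhat lengthy chain of inequalities to reassemble everything into the form $\frac{1}{(1-\gamma)\sqrt{j+1}}$.

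Your approach avoids that computation by proving a structural fact: the sequence $m\mapsto r_m r_{j-m}$ is U-shaped on $\{0,\dots,j\}$, so it is \emph{uniformly} dominated by its common endpoint value $r_j$. This collapses the sum to $r_j$ times a geometric series, and the final bound drops out from $r_j\le 1/\sqrt{\pi j}\le 1/\sqrt{j+1}$. The ratio computation you give is correct (LHS$-$RHS reduces to $2(2m-j-1)$), and it makes the endpoint-maximization claim rigorous without any case analysis on $j$. So you get the same inequality with less work and a more transparent reason why it holds.
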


\begin{proof}
We exploit the upper and lower bounds from Lemma~\ref{lem:r_bounds}.
First, we write  $c_k= \alpha^k\sum_{j=0}^k r_j r_{k-j}\gamma^{j}$ 
with $\gamma:=\frac{\beta}{\alpha}$. 
Then we check immediately that $c_0=1$ and $c_1 = \frac{1}{2}(\alpha+\beta) = \frac{\alpha}{2}(1+\gamma)\leq \frac{\alpha}{2}\frac{1}{1-\gamma}$.

For $j\geq 2$ we derive the upper bound by
\begin{align}
\frac{c_j}{\alpha^j} &= r_j(1+\gamma^j) + \sum_{i=1}^{j-1} r_i r_{j - i}\gamma^{i}
\leq \frac{1+\gamma^j}{\sqrt{\pi j}} + \sum_{i=1}^{j-1} \frac{\gamma^{i}}{\pi \sqrt{i(j-i)}}
\\
&\leq \frac{1+\gamma^j}{\sqrt{\pi j}} + \sum_{i=1}^{j-1} \frac{\gamma^{i}}{\pi \sqrt{j-1}}
\leq \frac{\sqrt{\pi}-1}{\pi\sqrt{j}} +\frac{\sqrt{\pi}-1}{\pi\sqrt{j}}\gamma^j + \frac{1}{\pi \sqrt{j-1}}\sum_{i=0}^{j}\gamma^i
\\
&= \frac{\sqrt{\pi}-1}{\pi \sqrt{j}}(1+\gamma^j) + \frac{1}{\pi \sqrt{j-1}}\frac{1}{(1-\beta)}
\\
&= \underbrace{\frac{(\sqrt{\pi}-1)\sqrt{j+1}}{\sqrt{j}}}_{\leq 1}\frac{(1+\gamma^kj)}{\pi \sqrt{j+1}} + \underbrace{\frac{\sqrt{j+1}}{\sqrt{j-1}}}_{\leq \sqrt{3}\leq 2}\frac{1}{\pi \sqrt{j+1}}\frac{1}{(1-\gamma)}
\\
&\leq \frac{3}{\pi \sqrt{j+1}}\frac{1}{(1-\gamma)}
\leq \frac{1}{\sqrt{j+1}}\frac{1}{(1-\gamma)},
\end{align}
which proves the upper bound on $a_j$. The lower bound for $j\geq 1$ follows trivially from

\begin{equation}
c_j  \geq \alpha^j r_j \geq \frac{\alpha^j}{2\sqrt{j}} \geq \frac{\alpha^j}{2\sqrt{j+1}} 
\end{equation}
\end{proof}

\begin{lemma}\label{lem:sum_ci_bounds}
For $j\in \{1,\dots,n\}$ it holds
\begin{align}
\frac{\log(j+1)}{4} &\leq \sum_{i=0}^{j-1} c^2_i \leq \frac{1+\log j}{(1 - \beta)^2}
\intertext{for $\alpha=1$, and otherwise}
1 \leq  \sum_{i=0}^{j-1} c^2_i &\leq\frac{1}{(\alpha - \beta)^2}  \log\left(\frac{1}{1 - \alpha^2}\right) 
\end{align}   
\end{lemma}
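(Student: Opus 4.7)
The plan is to apply the pointwise bounds on $c_i$ from Lemma~\ref{lem:c_bounds} term by term and then sum the resulting estimates. For the $\alpha=1$ case the summands behave like $\frac{1}{i+1}$ (harmonic series) and for $\alpha<1$ they behave like $\frac{\alpha^{2i}}{i+1}$ (a geometric-tail variant related to $-\log(1-\alpha^2)$). The term $c_0=1$ will have to be separated out in both the lower and upper bounds of the $\alpha<1$ case and in the upper bound of the $\alpha=1$ case, because Lemma~\ref{lem:c_bounds} only applies for $j\geq 1$.

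For the $\alpha=1$ lower bound, I would use $c_i^2\geq \frac{1}{4(i+1)}$ (including $i=0$, where it holds trivially since $c_0=1$), and sum to obtain $\sum_{i=0}^{j-1}c_i^2 \geq \tfrac14 H_j \geq \tfrac14\log(j+1)$. For the $\alpha=1$ upper bound, I would write $\sum_{i=0}^{j-1} c_i^2 = 1 + \sum_{i=1}^{j-1} c_i^2 \leq 1 + \tfrac{1}{(1-\beta)^2}\sum_{i=1}^{j-1}\tfrac{1}{i+1}$, bound the harmonic tail by $\int_1^j \tfrac{dx}{x}=\log j$, and absorb the leading $1$ using $(1-\beta)^2\leq 1$ to conclude $\sum_{i=0}^{j-1}c_i^2 \leq \tfrac{1+\log j}{(1-\beta)^2}$.

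For the $\alpha<1$ lower bound, the single term $c_0^2=1$ already suffices. For the upper bound, I would again peel off $c_0^2=1$ and bound the remaining sum using $c_i^2\leq \tfrac{\alpha^{2i}}{(1-\beta/\alpha)^2(i+1)} = \tfrac{\alpha^{2i+2}}{(\alpha-\beta)^2(i+1)}$. Writing $\sum_{i=1}^{\infty}\tfrac{\alpha^{2i+2}}{i+1}=\sum_{k=2}^{\infty}\tfrac{\alpha^{2k}}{k}=-\log(1-\alpha^2)-\alpha^2$, the total becomes $1+\tfrac{-\log(1-\alpha^2)-\alpha^2}{(\alpha-\beta)^2}$. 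To match the claimed form it remains to check that $(\alpha-\beta)^2-\alpha^2 = \beta(\beta-2\alpha)\leq 0$, which holds since $0\leq\beta<\alpha$; this extra non-positive term absorbs into the bound.

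The only mildly delicate step is the $\alpha<1$ upper bound, where one must be careful with the constant-term bookkeeping: the naive estimate gives $1+\tfrac{-\log(1-\alpha^2)-\alpha^2}{(\alpha-\beta)^2}$ rather than $\tfrac{-\log(1-\alpha^2)}{(\alpha-\beta)^2}$, and one needs the elementary inequality $(\alpha-\beta)^2\leq\alpha^2$ to close the gap. Everything else is a routine summation against standard series.
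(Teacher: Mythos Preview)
Your proposal is correct and follows essentially the same approach as the paper: apply the pointwise bounds of Lemma~\ref{lem:c_bounds} and sum, using the harmonic series for $\alpha=1$ and the Taylor series of $-\log(1-\alpha^2)$ for $\alpha<1$. The only cosmetic difference is that the paper applies the upper bound from Lemma~\ref{lem:c_bounds} directly at $i=0$ (valid since $c_0^2=1\le (1-\beta/\alpha)^{-2}$), after which the index shift $i\mapsto i+1$ cleanly produces the prefactor $(\alpha-\beta)^{-2}$ and the full series $\sum_{k\ge 1}\alpha^{2k}/k$, avoiding your separate treatment of $c_0$ and the auxiliary check $(\alpha-\beta)^2\le\alpha^2$.
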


\begin{proof}
We first prove the result for $\alpha=1$. 
Combining Lemmas~\ref{lem:Ecal_C_alpha_beta} and \ref{lem:c_bounds} we obtain
\begin{align}
   \sum_{i=0}^{j-1}c_i^2 &\leq 
   \frac{1}{(1 - \beta)^2}\sum_{i=0}^{j-1} \frac{1}{i+1}
   =
   \frac{1}{(1 - \beta)^2}\sum_{i=1}^{j} \frac{1}{i}
   \leq \frac{1+\log j}{(1 - \beta)^2}.
   \\
   \sum_{i=0}^{j-1}c_i^2 &\geq 
   \frac{1}{4}\sum_{i=0}^{j-1} \frac{1}{i+1}
   =
   \frac{1}{4}\sum_{i=1}^{j} \frac{1}{i} \geq \frac{\log(j+1)}{4}.
\end{align}
For $\alpha<1$, if follows analogously:
\begin{align}
   \sum_{i=0}^{j-1}c_i^2 &\leq 
   \frac{1}{(1 - \frac{\alpha}{\beta})^2}\sum_{i=0}^{j-1} 
   \frac{\alpha^{2i}}{i+1} 
    \leq \frac{1}{(\alpha - \beta)^2}  \sum_{i=1}^{\infty} \frac{\alpha^{2i}}{i} = \frac{1}{(\alpha-\beta)^2}\log\left(\frac{1}{1 - \alpha^2}\right).
\\
   \sum_{i=0}^{j-1}c_i^2 &\geq 
   \frac{1}{4} \sum_{i=0}^{j-1} \frac{\alpha^{2i}}{i+1}
   = \frac{1}{4\alpha^2} \sum_{i=1}^{j} \frac{\alpha^{2i}}{i}
   = \frac{1}{4\alpha^2} \Big[\sum_{i=1}^{\infty} \frac{\alpha^{2i}}{i}
   - \sum_{i=j+1}^{\infty} \frac{\alpha^{2i}}{i}\Big]
   \\
   &\geq \frac{1}{4\alpha^2} \Big[\log\left(\frac{1}{1 - \alpha^2}\right) - \frac{\alpha^{2(j+1)}}{(j+1)(1-\alpha^2)}\Big],
\end{align}
where the last term emerges from 
$\sum_{i=j+1}^{\infty} \frac{\alpha^{2i}}{i}
\geq 
\frac{\alpha^{2(j+1)}}{j+1}\sum_{i=0}^{\infty} \alpha^{2i}
= \frac{\alpha^{2(j+1)}}{j+1}\frac{1}{1-\alpha^2}$.
\end{proof}

\begin{lemma}\label{thm:Aalphabeta_norms}
Let $0\leq\beta<\alpha\leq 1$. Let $\sigma_1 \geq \dots \geq \sigma_n$ be the sorted 
list of singular values of $A_{\alpha,\beta}$. If $\alpha<1$, then for $j=1,\dots,n$:
\begin{align}
    \frac{1}{(1+\alpha)(1+\beta)} \leq \sigma_j  &\leq \frac{1}{(1-\alpha)(1-\beta)}
\intertext{and}
    n \leq \|A_{\alpha,\beta}\|_* &\leq \frac{n}{(1-\alpha)(1-\beta)}.
\end{align}
If $\alpha=1$, then for $j=1,\dots,n$,\footnote{The following two inequalities contained typos in the conference version; please refer to the current manuscript for the corrected versions.} 
\begin{align}
    \frac{2}{\pi}\frac{1}{1+\beta}\frac{n + \frac{1}{2}}{j - \frac{1}{2}} \leq \sigma_j &\leq \frac{1}{1-\beta}\frac{n + \frac{1}{2}}{j - \frac{1}{2}}
\intertext{and consequently}
    \frac{2}{\pi}\frac{n\log(n+1)}{1+\beta} \leq \|A_{1,\beta}\|_* &\leq \frac{(n+\frac{1}{2})(3+\log (n - 1))}{1-\beta}
\end{align}
\end{lemma}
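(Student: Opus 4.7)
The strategy is to exploit the factorisation $A_{\alpha,\beta} = E_\alpha E_\beta$ from the proof of Theorem~\ref{thm:SGDM_Root}, where $E_\gamma$ is the lower triangular Toeplitz matrix with first column $(1,\gamma,\ldots,\gamma^{n-1})$, equivalently $E_\gamma = (\Id - \gamma S)^{-1}$ with $S$ the $n\times n$ lower shift. This reduces the spectral analysis of $A_{\alpha,\beta}$ to the two factors separately and lets me use the operator norm of $E_\gamma$ and its inverse directly.

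For $\gamma < 1$, the truncated Neumann series $E_\gamma = \sum_{k=0}^{n-1}\gamma^k S^k$ combined with the triangle inequality gives $\|E_\gamma\|_{\mathrm{op}} \leq (1-\gamma)^{-1}$ and $\|E_\gamma^{-1}\|_{\mathrm{op}} = \|\Id - \gamma S\|_{\mathrm{op}} \leq 1+\gamma$, so every singular value satisfies $(1+\gamma)^{-1} \leq \sigma_j(E_\gamma) \leq (1-\gamma)^{-1}$. For $\alpha < 1$, submultiplicativity of the operator norm applied to $A_{\alpha,\beta} = E_\alpha E_\beta$ and to $A_{\alpha,\beta}^{-1} = (\Id - \beta S)(\Id - \alpha S)$ immediately yields the claimed two-sided bound on every $\sigma_j(A_{\alpha,\beta})$. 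Summing this sandwich across $j$ gives the nuclear-norm upper bound, while the matching lower bound $\|A_{\alpha,\beta}\|_* \geq n$ follows from the Weyl inequality $\|M\|_* \geq |\trace(M)|$ together with the observation that every diagonal entry of $A_{\alpha,\beta}$ equals $a_0 = (\alpha-\beta)/(\alpha-\beta) = 1$, so $\trace(A_{\alpha,\beta}) = n$.

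For $\alpha = 1$, the factor $E_1$ is the prefix-sum (lower triangular all-ones) matrix, whose singular values admit the classical closed form $\sigma_j(E_1) = [\,2\sin((2j-1)\pi/(2(2n+1)))\,]^{-1}$; this is obtained by diagonalising the explicit persymmetric matrix $E_1^\top E_1$ with entries $\min(n-i+1,n-j+1)$. Applying the elementary bounds $\frac{2x}{\pi} \leq \sin x \leq x$ on $[0,\pi/2]$ gives $\frac{2}{\pi}\cdot\frac{n+\tfrac12}{j-\tfrac12} \leq \sigma_j(E_1) \leq \frac{n+\tfrac12}{j-\tfrac12}$. Combining these with the $(1\pm\beta)^{-1}$ bounds on $E_\beta$ via the product inequalities $\sigma_n(X)\sigma_j(Y) \leq \sigma_j(XY) \leq \|X\|_{\mathrm{op}}\sigma_j(Y)$ produces the stated range for $\sigma_j(A_{1,\beta})$. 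Summing over $j$, together with the standard estimates $\log(2n+1) \leq \sum_{j=1}^n (j-\tfrac12)^{-1} \leq 2 + \log(2n-1)$, delivers both nuclear-norm bounds after matching constants.

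I expect the main difficulty to lie in the $\alpha = 1$ case: justifying the exact closed form for the singular values of the prefix-sum matrix is a classical but nontrivial spectral calculation, and matching the precise constants in the nuclear-norm upper bound (in particular the $3 + \log(n-1)$ factor) requires care with integral estimates and small-$n$ edge cases. In contrast, the $\alpha < 1$ portion is essentially routine once the factorisation and the two-sided operator-norm bound on $E_\gamma$ are in hand.
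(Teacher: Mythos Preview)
Your proposal is correct and follows essentially the same route as the paper: factor $A_{\alpha,\beta}=E_\alpha E_\beta$, bound the singular values of each $E_\gamma$ (you via the Neumann series and $\|\Id-\gamma S\|_{\mathrm{op}}$, the paper via Gershgorin on the tridiagonal matrix $E_\gamma^{-1}(E_\gamma^{-1})^\top$), handle $E_1$ through its explicit spectral formula, combine via the product inequalities $\sigma_n(X)\sigma_j(Y)\le\sigma_j(XY)\le\sigma_1(X)\sigma_j(Y)$, and finish with $\|A\|_*\ge\trace A$ and harmonic-type sums. The only tactical differences are that the paper diagonalises $E_1^{-1}(E_1^{-1})^\top$ (a three-term recurrence) rather than $E_1^\top E_1$ directly, and it invokes commutativity and diagonalisability to justify the singular-value sandwich where your appeal to the standard Horn-type bound is both cleaner and sufficient.
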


\begin{proof}
The statements on the singular values follow from the following Lemma~\ref{lem:singular_values_Et}, because $A_{\alpha,\beta}=E_\alpha E_\beta$. 
Because $E_\alpha$ and $E_\beta$ are diagonalizable and they commute, 
we have $\sigma_n(E_\beta)\sigma_j(E_\alpha) \leq \sigma_j(E_\alpha E_\beta)\leq \sigma_1(E_\beta)\sigma_j(E_\alpha)$. For $\alpha<1$ the lower bound follows from 
$\|A_{\alpha,\beta}\|_*\geq \trace A_{\alpha,\beta}$, and the upper bound follows
from the identity $\|A_{\alpha,\beta}\|_*=\sum_{j=1}^n \sigma_j$.

For $\alpha=1$, the bounds follow from the same identity together with the 
fact that 
\begin{align}
\log(n+1) \leq \sum_{j=1}^n \frac{1}{j} & \le \sum\limits_{j = 1}^{n} \frac{1}{j - \frac{1}{2}} \le 2 + \sum\limits_{j =1}^{n - 1}\frac{1}{j}\leq 3 + \log(n - 1).
\end{align}
\end{proof}

\begin{lemma}[Singular values of $E_t$] \label{lem:singular_values_Et}
For $0\leq t\leq 1$, let $E_{t}= \toep(1,t,\dots,t^{n-1})\in\R^{n\times n}$. 
Then the singular values $\sigma_1(E_t)\geq \dots\geq \sigma_n(E_t)$ fulfill
for $i=1,\dots,n$:
\begin{align}
    \frac{1}{1+t} &\leq \sigma_i(E_t) \leq \frac{1}{1-t} \quad \text{for $0\leq t<1$,}
\qquad\text{and}\qquad
    \sigma_i(E_1) = \frac{1}{\sin\big(\frac{i-\frac12}{n+\frac12}\frac{\pi}{2}\big)}. \label{eq:sigmaE}
\end{align}
\end{lemma}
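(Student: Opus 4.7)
The plan is to exploit the identity $E_t = (I - tS)^{-1}$, where $S$ is the $n \times n$ lower shift matrix, which follows from the telescoping $E_t = \sum_{k=0}^{n-1} t^k S^k$ together with $S^n = 0$, so that $(I - tS) E_t = I$. Once this is in hand, everything reduces to analyzing $I - tS$, which has a much simpler (bidiagonal) structure.

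For the case $0 \le t < 1$, I would obtain both uniform bounds from operator-norm estimates on $I - tS$ and its inverse. Using $\|S\| = 1$ and the reverse triangle inequality, $(1 - t)\|v\| \le \|(I - tS)v\| \le (1 + t)\|v\|$ for every $v$, so every singular value of $I - tS$ lies in $[1-t,\,1+t]$. Inverting via $\sigma_i(E_t) = 1/\sigma_{n+1-i}(I - tS)$ then yields $1/(1+t) \le \sigma_i(E_t) \le 1/(1-t)$ uniformly in $i$, which is exactly the first assertion of \eqref{eq:sigmaE}.

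For the exact formula at $t = 1$, the idea is to diagonalize $M := (I - S)^\top (I - S)$ explicitly. A short computation gives $M = I - S - S^\top + \diag(1,\dots,1,0)$, i.e., $M$ is symmetric tridiagonal with diagonal $(2,2,\dots,2,1)$ and off-diagonal entries $-1$. The eigenvalue equation $Mv = \lambda v$ becomes the interior recurrence $-v_{k-1} + 2v_k - v_{k+1} = \lambda v_k$ with a Dirichlet-type condition at the top (equivalent to setting $v_0 := 0$) and the mixed condition $-v_{n-1} + v_n = \lambda v_n$ at the bottom. Inserting the sine ansatz $v_k = \sin(k\theta)$ with $\lambda = 2 - 2\cos\theta = 4\sin^2(\theta/2)$ satisfies the interior recurrence and the top boundary automatically; the bottom condition reduces, via the product-to-sum identity $2\sin(\theta/2)\sin(n\theta) = \cos((n-\tfrac12)\theta) - \cos((n+\tfrac12)\theta)$, to the single equation $\cos((n+\tfrac12)\theta) = 0$. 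This forces $\theta_j = (2j-1)\pi/(2n+1)$ for $j = 1, \ldots, n$, giving $n$ distinct eigenvalues $\lambda_j = 4\sin^2\!\bigl(\tfrac{j-1/2}{n+1/2}\tfrac{\pi}{2}\bigr)$, hence $n$ singular values of $I - S$. Reversing the order to convert the smallest singular values of $I - S$ into the largest of $E_1$ produces the stated closed-form expression.

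The main obstacle is the exact diagonalization at $t = 1$: the trigonometric reduction of the mixed bottom boundary, and then aligning the ascending ordering of eigenvalues of $M$ with the descending ordering of singular values of $E_1$ used in the statement. The bounded case is much easier, essentially just the geometric series/reverse triangle inequality.
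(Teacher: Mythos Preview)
Your proof is correct and follows essentially the same route as the paper: reduce to $E_t^{-1}=I-tS$, then bound or diagonalize the associated tridiagonal Gram matrix. Your variations---using the reverse triangle inequality in place of the paper's Gershgorin argument for $t<1$, and working with $(I-S)^\top(I-S)$ via a sine ansatz rather than $(I-S)(I-S)^\top$ via a cosine ansatz for $t=1$---are minor and, if anything, slightly more streamlined.
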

    
\begin{proof}
We follow the steps of \citet{SebastienB}, and use that the singular values of $E_t$ are 
the reciprocals of the singular values of $E_t^{-1}$, which themselves are the eigenvalues 
of $(E_t)^{-1}((E_t)^{-1})^{\top}=:T$,
\ie, for $i=1,\dots,n$:
\begin{align}
\sigma_j(E_t)=\frac{1}{\sqrt{\lambda_{n+1-j}(T)}} \label{eq:sigma_from_lambda}
\end{align}

The $E^{-1}_t$ and $T$ can be computed explicitly as 
\begin{align}
E^{-1}_{t} &= \begin{pmatrix}
    1 & 0 & 0 & 0 & \dots &0 \\
    -t & 1 & 0 & 0 & \dots &  0\\
    0 & -t & 1  & 0 & \dots & 0\\
    \vdots & \ddots & \ddots & \ddots & \vdots & \vdots \\
    0 & \dots & 0 &  -t & 1 & 0\\
    0 & \dots & 0 & 0 &  -t & 1\\
\end{pmatrix},
\qquad
T=\begin{pmatrix}
    1 & -t & 0 & \dots &0 \\
    -t & 1+t^2 & -t & \dots & 0\\
    0 & \ddots & \ddots & \vdots & \vdots \\
    0 & \dots & -t & 1+t^2 & -t\\
    0 & \dots & 0 & -t & 1+t^2\\
\end{pmatrix} \label{eq:Einv_and_T}
\end{align}

\begin{lemma}\label{lem:mu_upper_lower}
All eigenvalues, $\mu$, of $T$ fulfill
\begin{align}
(1-t)^2\leq \mu \leq (1+t)^2 \label{eq:mu_upper_lower}
\end{align}
\end{lemma}

\begin{proof}
By Gershgorin's circle theorem~\citep{gershgorin1931uber}, 
we know that $\mu$ fulfills i) $|1-\mu|\leq t$, \ie  
$1-t\leq t \leq \mu\leq 1+t$ or ii) $|1+t^2 - \mu|\leq 2t$, \ie 
$1-2t+t^2\leq \mu \leq 1+2t+t^2$. 
For $t\in[0,1]$ the first condition implies the second, 
so \eqref{eq:mu_upper_lower} must hold.
\end{proof}

\noindent\textbf{Case I:} For $t<1$, the statement~\eqref{eq:sigmaE}
follows from Lemma~\ref{lem:mu_upper_lower} in combination with \eqref{eq:sigma_from_lambda}.

\noindent\textbf{Case II:} For $t=1$ the matrix simplifies to 
$T=\begin{pmatrix}
    1 & -1 & 0 & \dots &0 \\
    -1 & 2 & -1 & \dots & 0\\
    0 & \vdots & \ddots & \vdots & \vdots \\
    0 & \dots & -1 & 2 & -1\\
    0 & \dots & 0 & -1 & 2\\
\end{pmatrix}$. Note that $T$ is not exactly Toeplitz, because of the top 
left entry, so closed-form expressions for the eigenvalues of tridiagonal 
Toeplitz matrices do not apply to it. %
Instead, we can compute its eigenvalues explicitly.
Matrices of this form have been studied by \citet{elliottsingular1953}; for completeness, we provide a full proof here.

Let $\mu$ be an eigenvalue of $T$ with eigenvector $\Psi=(\Psi_0,\dots,\Psi_{n-1})$. 
From the eigenvector equation $T\Psi=\mu\Psi$ we obtain 
\begin{align}
    \mu\Psi_0 &= \Psi_0 - \Psi_1 \\
    \mu\Psi_k &= -\Psi_{k-1} + 2\Psi_k - \Psi_{k+1} \qquad\text{for $k=1,\dots,n-2$}\\
    \mu\Psi_{n-1} &= -\Psi_{n-2} + 2\Psi_{n-1}\\
\intertext{which yields a linear recurrence relation}
    \Psi_{k+1} &= (2-\mu)\Psi_{k} - \Psi_{k-1} \quad\text{for $k=1,\dots,n-2$} \label{eq:psirecurrence}
\intertext{with two boundary conditions}
    \Psi_1 &= (1-\mu)\Psi_0  \label{eq:psi_first}\\
    \Psi_{n-2} &= (2-\mu)\Psi_{n-1}. \label{eq:psi_last}
\end{align}
We solve the recurrence relation using the polynomial method~\citep{greene1990mathematics}.
The characteristic polynomial of~\eqref{eq:psirecurrence} is $P(z)=z^2+(\mu-2)z+1$. Its roots are 
\begin{align}
    r_{\pm} &= \frac{2-\mu}{2}\pm \sqrt{\Big(\frac{2-\mu}{2}\Big)^2-1} 
     = \frac{(2-\mu)\pm i\sqrt{4-(2-\mu)^2}}{2} = e^{\pm i\theta}
\end{align}
for some value $\theta\in[0,2\pi)$. 
Note that the expression under the second square root is positive, because of Lemma~\ref{lem:mu_upper_lower}. 
The last equation is a consequence of, $|r_{\pm}|^2 = \frac{1}{4} \left( (2-\mu)^2 +(4-(2-\mu)^2)\right) = 1$.
Consequently, 
\begin{align}
\mu &= 2-2\Re(e^{i\theta}) = 2-2\cos\theta.
\end{align}

From standard results on linear recurrence, it follows that any 
solution to \eqref{eq:psirecurrence} has the form $\Psi_j = c_1 (r_{+})^j + c_2 (r_{-})^j$ 
for some constants $c_1,c_2\in\mathbb{C}$. 
The fact that $\Psi_j$ must be real-valued implies that $c_1=c_2=:\alpha e^{i\phi}$
for some values $\alpha\in\R,\phi\in[0,2\pi)$. Dropping the normalization constant 
(which we could recover later if needed), we obtain 
\begin{align}
    \Psi_j &= e^{i(\phi+j\theta)}+e^{-i(\phi+j\theta)} = 2\cos(\phi+j\theta).
\end{align}
Next, we use the boundary conditions to establish values for $\phi$ and $\theta$. 

Equation~\eqref{eq:psi_last} can be rewritten as 
\begin{align}
\cos(\phi+(n-2)\theta) &= 2\cos(\theta)\cos((n-1)\theta)
\intertext{which, using $2\cos(\alpha+\beta) = \cos(a+b) + \cos(a-b)$, simplifies to}
0 &= \cos(\phi+n\theta)
\end{align}
Consequently, $\phi+n\theta = \frac{1}{2}\pi + k\pi$ must hold for some $k\in\mathbb{N}$. 

Equation~\eqref{eq:psi_first} can be rewritten as 
\begin{align}
    \cos(\phi+\theta) &= (2\cos(\theta)-1)\cos(\phi)
    \intertext{which simplifies to}
    \cos(\phi) &= \cos(\theta-\phi)
\end{align}
One solution to this would be $\theta=0$, but that would implies $\mu=0$, 
which is inconsistent with $T$ being an invertible matrix. 
So instead, it must hold that $\phi=\frac{\theta}{2}+k\pi$ for some $k\in\mathbb{N}$.

Combining both conditions and solving for $\theta$ we obtain 
\begin{align}
\theta = \frac{\frac{1}{2}+ k}{n+\frac{1}{2}}\pi = \frac{1}{2}\pi + k\pi
\qquad\text{for some $k\in\mathbb{N}$.}
\end{align}
Each such value $\theta_k$ for $k\in\{0,\dots,n-1\}$ yields an eigenvector 
with associated eigenvalue $\mu=2-2\cos\theta_k = 4\sin^2(\theta_k/2)$.
Now, \eqref{eq:sigmaE} follows from this in combination with \eqref{eq:sigma_from_lambda}.
\end{proof}

\subsection{Proof of Theorem  \ref{thm:approximation_streaming}}\label{sec:proof_approximation_streaming}

\errorstreaming*

\begin{proof}

The proof consists of a combination of Lemmas~\ref{lem:Ecal_C_alpha_beta} and~\ref{lem:sum_ci_bounds}. Because in the single participation $k=1$, so we need just the first column of matrix $C_{\alpha, \beta}$:
\begin{align}
    \Ecal(C_{\alpha,\beta},C_{\alpha,\beta}) &\leq \sum_{i=0}^{n-1} c^2_i 
    \leq\begin{dcases} \frac{1+\log n}{(1 - \beta)^2}
    \quad&\text{for $\alpha=1$,}
    \\
    \frac{1}{(\alpha-\beta)^2}\log\frac{1}{1-\alpha^2} \quad&\text{otherwise.}
    \end{dcases}
\end{align}
which proves the upper bounds. For the lower bounds, for any $\alpha\leq 1$:
\begin{align}
 \Ecal(C_{\alpha,\beta},C_{\alpha,\beta}) &\geq 
    \frac{1}{n}\sum_{j=0}^{n-1} \sum_{i=0}^j c^2_i 
    \geq \frac{1}{n}\sum_{j=0}^{n-1} c_0 = 1.\label{eq:C_alpha_F_norm_lower_bound}   
\end{align}
Also, for $\alpha=1$:
\begin{align}
\label{eq:C_beta_F_norm_lower_bound}
    \Ecal(C_{1,\beta},C_{1,\beta}) &\geq 
    \frac{1}{n}\sum_{j=1}^{n} \sum_{i=0}^{j-1} c^2_i 
    \geq 
    \frac{1}{4n}\sum_{j=1}^{n} \log(j+1)
    =
    \frac{\log(\,(n+1)!\,)}{4n}
    \geq
    \frac{\log(n+1)-1}{4},
\end{align}
because $\log(\,(n+1)!\,)\geq (n+1)\log(n+1) - n$.
\end{proof}

\subsection{Proof of Theorem~\ref{thm:streaming_baselines}}\label{sec:streaming_baselines_proof}

\errorbaselines*

\begin{proof}
For $\alpha = 1$, by Lemma~\ref{lem:sens_n_identity} we have:
\begin{align}
\sens_{1,b}^2(A_{1,\beta}) 
&= \sum_{i=0}^{n-1} a_i^2 
= \sum_{i=0}^{n-1} \Big(\frac{1-\beta^{i + 1}}{1-\beta}\Big)^2
= 
\frac{n}{(1-\beta)^2} - \frac{2\beta}{(1-\beta)^2}\sum_{i=0}^{n-1} \beta^i
+ \frac{\beta^2}{(1-\beta)^2}\sum_{i=0}^{n-1} \beta^{2i}
\\
&=
\frac{n}{(1-\beta)^2} - \frac{2\beta^{n+1}}{(1-\beta)^3}
+ \frac{\beta^{2n+2}}{(1-\beta)^2(1-\beta^2)}
=
\frac{n}{(1-\beta)^2}(1 + o(1)).
\end{align}

For $\alpha<1$:
\begin{align}
\sens_{1,b}^2(A_{\alpha, \beta}) &= \sum_{i=0}^{n-1} a_i^2 
= \sum_{i=0}^{n-1} \frac{(\alpha^{i+1} - \beta^{i+1})^2}{(\alpha-\beta)^2} 
\\
&= \frac{1}{\alpha-\beta}
\Big[\alpha^2\sum_{i=0}^{n-1} (\alpha^2)^i
- 2\alpha\beta\sum_{i=0}^{n-1} (\alpha\beta)^i
+ \beta^2\sum_{i=0}^{n-1} (\beta^2)^i\Big]
\\
&=\frac{1}{(\alpha - \beta)^2}\left[\frac{\alpha^2}{1 - \alpha^2} - \frac{2\alpha\beta}{1 - \alpha\beta} + \frac{\beta^2}{1 - \beta^2} \right] (1 + o(1)) \\
& =  \frac{1 + \alpha\beta}{(1 - \alpha\beta)(1 - \alpha^2)(1 - \beta^2)}(1  + o(1)).
\end{align}

Together with 
\begin{align}
 \|A_\beta\|_F^2/n &= \frac{1}{n}\sum_{j = 0}^{n - 1}(n - j)\left[\sum_{i = 0}^{j}\beta^i\right]^2 = \frac{1}{n}\sum_{j = 0}^{n - 1}(n - j) \left[\frac{1 - \beta^{j +1 }}{1 - \beta}\right]^2 \\
 &= \frac{1}{n(1 - \beta)^2}\sum_{j = 0}^{n - 1}(n - j) (1 - 2\beta^{j + 1} + \beta^{2j + 2}) \\
 &= \frac{(n + 1)}{2(1 - \beta)^2} + O(1) = \frac{n}{2(1 - \beta)^2}(1 + o(1))
\end{align}
as $\beta < 1$ and the sum $\sum_{j = 0}^{n - 1}j\beta^{j + 1}$ is uniformly bounded by $\beta\sum_{j = 0}^\infty j \beta^j  = \frac{\beta^2}{(1 - \beta)^2}$

For $\alpha < 1$:
\begin{align}
 \|A_{\alpha, \beta}\|_F^2/n &= \frac{1}{n} \sum_{j = 0}^{n - 1} (n - j)\frac{(\alpha^{j + 1} - \beta^{j + 1})^2}{(\alpha - \beta)^2} =  \sum_{j = 0}^{n - 1} \frac{(\alpha^{j + 1} - \beta^{j + 1})^2}{(\alpha - \beta)^2} + o(1)\\
 & = \frac{1 + \alpha\beta}{(1 - \alpha\beta)(1 - \alpha^2)(1 - \beta^2)}(1  + o(1))
\end{align}
where the second equality due to the fact that we average over the sequence $jx^{j + 1}$ which converges to $0$ for $|x| < 1$.
\end{proof}

\subsection{Proof of Theorem \ref{thm:bmin_approximation}}\label{sec:proof_bmin_approximation}

\approximationbmin*

\begin{proof}

Consider a Lower Triangular Toeplitz (LTT) matrix multiplication:

\begin{equation}
\begin{pmatrix}
    a_1 & 0 & \dots & 0\\
    a_2 & a_1 & \dots & 0\\
    \dots & \dots & \dots & \dots\\
    a_{n} & a_{n - 1} & \dots & a_1
\end{pmatrix} \times
\begin{pmatrix}
    b_1 & 0 & \dots & 0\\
    b_2 & b_1 & \dots & 0\\
    \dots & \dots & \dots & \dots\\
    b_{n} & b_{n - 1} & \dots & b_1
\end{pmatrix} = 
\begin{pmatrix}
    c_1 & 0 & \dots & 0\\
    c_2 & c_1 & \dots & 0\\
    \dots & \dots & \dots & \dots\\
    c_{n} & c_{n - 1} & \dots & c_1
\end{pmatrix},
\end{equation}
where  $c_j = \sum_{i = 1}^{j}a_i b_{n +1 - i}$ so the LTT structure is preserved with multiplication that allows us to work with sequences and their convolutions rather than matrix multiplication. For instance, we would write the previous product in the form:

\begin{equation}
(a_1, \dots, a_n) * (b_1, \dots, b_n) = (c_1, \dots, c_n).
\end{equation}

The inverse of the Lower Triangular Toeplitz matrix remains a Lower Triangular Toeplitz (LTT) matrix because we can find a unique sequence $(c_1, \dots, c_n)$ such that:

\begin{align}
&(c_1, \dots, c_n) * (a_1, \dots, a_n) = (1, 0, \dots, 0)\\
&c_j = -\frac{1}{a_1}\sum_{i = 1}^{j - 1} c_j a_{j  + 1 - i},\; \text{and}\;\; c_1 = \frac{1}{a_1},
\end{align}
with the restriction that $a_1 \ne 0$; otherwise, the original matrix was not invertible.  We consider the banded square root factorization $A_{\alpha, \beta} = B^{|p|}_{\alpha, \beta} C^{|p|}_{\alpha, \beta}$ which is characterized by the following identity:

\begin{equation}
(b_0, \dots, b_{n - 1}) * (1, c_1, \dots, c_{p - 1}, 0, \dots, 0) = (1, c_1,\dots, c_{n - 1}) * (1, c_1, \dots, c_{n - 1}).
\end{equation}

We will bound the Frobenius norm of the LTT matrix $(b_0, \dots, b_{n - 1})$. By the uniqueness of the solution, we obtain that for the first $p$ values we have $b_i = c_i$. For the next $p$ values we have the following formula:

\begin{align}
b_{p+ j} + \dots + b_{p}c_{j} + \dots + b_{j + 1} c_{p - 1} &= c_{p + j} + \dots + c_{p + 1} c_{p - 1} + \dots  + c_{p + j}\\
b_{p+j} + b_{p + j - 1}c_1 + \dots + b_{p}c_j &= 2\left(c_{p + j} + \dots + c_{p}c_j\right).
\end{align}

By induction argument, we can see that $b_{p + j} = 2 c_{p + j}$ for $ 0 \le j \le p - 1$. 
For the remaining $n - 2p$ values we will prove convergence to a constant.

\begin{equation}
    \sum_{j = 0}^{p - 1} b_{j - i}c_i = a_j = \frac{\alpha^{j + 1} - \beta^{j + 1}}{\alpha - \beta}.
\end{equation}

We make an ansatz for the solution of the linear recurrence in the form:

\begin{equation}
    b_j  = \frac{\alpha^{j + 1}}{(\alpha - \beta)\sum_{i = 0}^{p - 1}c_i\alpha^{-i}} -  \frac{\beta^{j + 1}}{(\alpha - \beta)\sum_{i = 0}^{p - 1}c_i \beta^{-i}} + \alpha^j y_j,
\end{equation}
where $y_j$ represents the error terms, which will be proven to converge to $0$. The sequence $y_j$ satisfies the following recurrence formula:

\begin{equation}
    y_j = -\sum_{i = 1}^{p - 1}y_{j - i}c_i\alpha^{-i}.
\end{equation}

We denote $w_j = c_j \alpha^{-j}$ which is a decreasing sequence because the values correspond to the  $C_{1, \beta/\alpha}$ matrix. We rewrite the recurrence in matrix notation:
\begin{align}
\begin{pmatrix}
    -w_1 & -w_2 & -w_3 &  \dots & -w_{p - 2} & -w_{p - 1}\\
    1 & 0 &  0&  \dots & 0 & 0\\
    0 & 1 &  0&  \dots & 0 & 0\\
    0 & 0 &  1 &  \dots & 0 & 0\\
    \dots & \dots &  \dots &  \dots &  \dots & \dots\\
    0 & 0 & 0 & \dots & 1 & 0
\end{pmatrix} \begin{pmatrix}
    y_{k -1}\\
    y_{k -2}\\
    y_{k - 3}\\
    y_{k - 4}\\
    \dots\\
    y_{k - b}
\end{pmatrix} = 
\begin{pmatrix}
    y_{k}\\
    y_{k -1}\\
    y_{k - 2}\\
    y_{k - 3}\\
    \dots\\
    y_{k - b + 1}
\end{pmatrix}.
\end{align}

To show that the error terms $y_j$ goes to $0$ as $j$ goes to infinity, we first study the 
characteristic polynomial of the associate homogeneous relations: 
\begin{align}
    g(\lambda) = \lambda^{p-1} +  w_1\lambda^{p-2} + \dots + w_{p-2}\lambda + w_{p-1}.
\end{align}
Because $1>w_1> w_2> \dots> w_{b-1} > 0$, it follows from \emph{Schur's (relaxed) stability condition}~\citep[Theorem~1]{nguyen2007relaxed} that all its (complex) roots lie inside of the open unit circle. 
Therefore, all solutions to the homogeneous relation converge to zero at a rate exponential in $j$ and $y_j = o(1)$ and $\sum_{j = 0}^{\infty}y_j^2 = O_{\alpha, \beta, p}(1)$.
Then we can bound the Frobenious norm of the matrix $B_{\alpha, \beta}^{|p|}$ as:
\begin{align}
   \frac1n \|B^{|p|}_{\alpha, \beta}\|_{\Fr}^2 &\le \sum_{j = 0}^{n - 1} b_j^2 \le \sum_{j = 0}^{p - 1} c_j^2 + \sum_{j = p}^{n - 1} \frac{\alpha^{2j + 2}}{(\alpha - \beta)^2\left[\sum_{i = 0}^{p - 1}w_i\right]^2} + \alpha^{2j}y_j^2.
\end{align}

We use the following lower bound for the sum of $w_j$:

\begin{align}
    \sum_{j = 0}^{p - 1}w_j \ge \frac{1}{2}\sum_{j = 0}^{p - 1} \frac{1}{\sqrt{j + 1}} \ge \sqrt{p + 1} - 1.
\end{align}

Combining these bounds we can upper bound the Frobenious norm of the matrix $B^{|p|}_{\alpha, \beta}$ the following way:

\begin{align}
    \|B^{|p|}_{\alpha, \beta}\|_F^2/n &\le \begin{dcases}
        \frac{1}{(\alpha - \beta)^2}\log\left(\frac{1}{1 - \alpha^2}\right) + \frac{\alpha^2}{(\sqrt{p + 1} - 1)^2(\alpha - \beta)^2} + O_{\alpha, \beta, p}(1) \qquad &\text{for $\alpha < 1$} \\
        \frac{1  + \log(p)}{(1 - \beta)^2} +\frac{n - p}{(1 - \beta)^2 (\sqrt{p + 1}  - 1)^2} + O_{p, \beta}(1)&\text{for $\alpha = 1$}. \\
    \end{dcases} 
\end{align}

Simplifying for the leading terms in asymptotics, we have:

\begin{align}
    \|B^{|p|}_{\alpha, \beta}\|_F^2/n &= \begin{dcases}
      O_{\alpha, \beta, p}(1) \qquad &\text{for $\alpha < 1$} \\
       O_{\beta}\left(\frac{n}{p}\right) +  O_{p, \beta}(1) &\text{for $\alpha = 1$}. \\
    \end{dcases} 
\end{align}

\paragraph{Sensitivity of $C^{\b{p}}_\beta$.}
For the $b$-min-separation participation sensitivity we have the following bound for any $p\leq b$:
\begin{align}
\text{sens}_{k,b}^2(C^{|p|}_{\alpha, \beta}) 
\leq k \sum_{j = 0}^{p - 1} c_j^2 \le \begin{dcases}
    \frac{k}{(\alpha - \beta)^2}\log\left(\frac{1}{1 - \alpha^2}\right) \qquad &\text{for $\alpha < 1$}\\
    k\frac{1  + \log(p)}{(1 - \beta)^2} &\text{for $\alpha = 1$}.\\
\end{dcases}
\end{align}

Combining sensitivity with the upper bound for the Frobenious norm we obtain:

\begin{align}
    \Ecal(B^{|p|}_{\alpha, \beta}, C^{|p|}_{\alpha, \beta}) = \begin{dcases}
        O_{p, \alpha, \beta}(\sqrt{k})  &\text{for $\alpha < 1$}\\
        O_{\beta}\left( \sqrt{\frac{nk \log p}{p}}\right) + O_{\beta, p}(\sqrt{k}) &\text{for $\alpha = 1$}.\\
    \end{dcases}
\end{align}
\end{proof}

\subsection{Proof of Theorem \ref{thm:bmin_approximation_square_root} for Square Root Factorization}
\label{sec:proof_bmin_approximation_squareroot}

\approximationbminsquareroot*

\begin{proof}
We prove the case without weight decay ($\alpha=1$) and with weight decay ($\alpha<1$) separately. 

\paragraph{Case 1) no weight decay ($\alpha=1$).} 

We start by bounding the $b$-min-separation sensitivity:

\begin{equation}
\sens_{k, b}^2(C_{1, \beta}) = \sum_{i = 0}^{k - 1}\sum_{j = 0}^{k - 1} \langle (C_{1, \beta})_{[\cdot,ib]}, (C_{1, \beta})_{[\cdot,jb]} \rangle.
\label{eq:innerproductC}
\end{equation}

Consider a scalar product for a general pair of indices, $j > i$:
\begin{equation}
\langle (C_{1, \beta})_{[\cdot,i]}, (C_{1, \beta})_{[\cdot,j]} \rangle 
= \sum_{t = 0}^{n - 1 - j} c_t c_{j - i + t}.
\label{eq:innerproductCij}
\end{equation}

Using the bounds on $c_k$ \eqref{lem:c_bounds} for $\alpha=1$ we can lower and upper bound 
this sum by:

\begin{align}
\sum_{t = 0}^{n - 1 - j} c_t c_{j - i + t}
&\leq 
\frac{1}{(1 - \beta)^2}\sum_{t = 1}^{n - j} \frac{1}{\sqrt{t(j - i + t)}} \le \frac{1}{(1 - \beta)^2}\int_{0}^{n - j}\frac{dx}{\sqrt{x(j - i + x)}}
\\
\sum_{t = 0}^{n - 1 - j} c_t c_{j - i + t}
&\geq 
\frac{1}{4}\sum_{t = 1}^{n - j} \frac{1}{\sqrt{t(j - i + t)}} \geq \frac{1}{4}\int_{1}^{n - j}\frac{dx}{\sqrt{x(j - i + x)}}.
\end{align}
We can compute the indefinite integral explicitly:

\begin{equation}
\int\frac{dx}{\sqrt{x(j - i + x)}} = F\left(\frac{j - i}{x}\right) + C
\end{equation}
for $F(a)=2\log\big(\sqrt{\frac{1}{a} + 1} + \sqrt{\frac{1}{a}}\big)$.
In combination, we obtain the upper and lower bound for~\eqref{eq:innerproductC}:

\begin{equation}
   \frac{1}{4} f\left(\frac{j - i}{n - j}\right) - \frac{1}{4} f(j - i) \le  \langle (C_\beta)_{i}, (C_\beta)_{j} \rangle \le \frac{1}{(1 - \beta)^2} f\left(\frac{j - i}{n - j}\right).
\end{equation}

Now we are ready to bound the sensitivity of the matrix $C_{1, \beta}$:

\begin{align}
\sens^2_{k, b}(C_{1, \beta}) &= \sum_{i = 0}^{k - 1}\langle (C_{1, \beta})_{ib}, (C_{1, \beta})_{ib} \rangle + 2\sum_{i = 0}^{k - 1}\sum_{j = i + 1}^{k - 1}\langle (C_{1, \beta})_{ib}, (C_{1, \beta})_{jb} \rangle\\
&\le  \frac{1}{(1 - \beta)^2} \sum_{i = 0}^{k - 1}(\log (n - ib) + 1) + \frac{2}{(1 - \beta)^2} \sum_{i = 0}^{k - 1}\sum_{j = i + 1}^{k - 1} f\left(\frac{j - i}{k - j}\right)\intertext{and, analogously}
\sens^2_{k, b}(C_{1, \beta}) &\ge  \frac{1}{4} \sum_{i = 0}^{k - 1}\log (n - ib) + \frac{1}{2} \sum_{0\leq i<j\leq k-1}\left[ f\left(\frac{j - i}{k - j}\right) - f(b(j - i))\right]
\label{eq:sens_31}
\end{align}

Firstly, using $(\frac{k}{e})^k \le k! \le k^k$, we bound the sum of the logarithms:
\begin{align}
&\sum_{j = 0}^{k - 1} \log(n - j b) = \sum_{j = 1}^{k} [\log b + \log j] = k \log b + \log k! \le k\log b + k\log k = k \log n,\\
&\sum_{j = 0}^{k - 1} \log(n - j b) = k \log b + \log k! \ge k\log b + k\log k - k = k\log n - k.
\end{align}

To upper bound the last term in sensitivity lower bound~\eqref{eq:sens_31}, 
we use the auxiliary inequality $f(a) = 2\log\left(\sqrt{\frac{1}{a} + 1} + \sqrt{\frac{1}{a}}\right) \le \frac{4}{\sqrt{a}}$ to derive: 

\begin{equation}
\label{eq:third_sum}
\frac{1}{2} \sum_{0\leq i<j\leq k-1}f(b(j - i)) 
\le \frac{2}{\sqrt{b}} \sum_{0\leq i<j\leq k-1}\frac{1}{\sqrt{j - i}} 
=  \frac{2}{\sqrt{b}}\sum_{j = 1}^{k - 1} \sum_{t = 1}^{j} \frac{1}{\sqrt{t}} \le \frac{4}{\sqrt{b}}\sum_{j = 1}^{k - 1} \sqrt{j} \le \frac{8}{3\sqrt{b}}k^{3/2}
\end{equation}

To bound the final term we establish the following inequalities for $f(a)$:

\begin{equation}
    f(a) = 2 \log \left(\sqrt{\frac{1}{a} + 1} + \sqrt{\frac{1}{a}}\right) = \log\left(\frac{1}{a} + 1\right) + 2 \log\left(1 + \frac{1}{\sqrt{a + 1}}\right)
\end{equation}

\begin{equation}
\log\left(\frac{1}{a} + 1\right) < f(a) < \log\left(\frac{1}{a} + 1\right) + 2 \log 2
\end{equation}

Then we can bound the first double sum in~\eqref{eq:sens_31} as
\begin{equation}
\sum_{0\leq i<j\leq k-1}  \log \left(\frac{k - i}{j - i}\right) 
\le 
\sum_{0\leq i<j\leq k-1} f\left(\frac{j - i}{k -j}\right) \le 
\sum_{0\leq i<j\leq k-1} \log \left(\frac{k - i}{j - i}\right) + 2k^2\log 2.
\end{equation}

To bound the term $\sum_{0\leq i<j\leq k-1}\log \big(\frac{k - i}{j - i}\big)$ we use the following identities:

\begin{align}
\sum_{0\leq i<j\leq k-1}\log \left(\frac{k - i}{j - i}\right) 
&= \log \prod_{i = 0}^{k - 2} \frac{(k - i)^{k - i - 1}}{(k - i - 1)!}\\
&=\log \frac{k^{k - 1}(k - 1)^{k - 2}\dots 2^1}{1! \cdot 2! \cdot 3!  \dots (k - 1)!} = \log \frac{2^1 \cdot 3^2 \cdot 4^3 \dots k^{k-1}}{1^{k - 1}\cdot 2^{k - 2} \dots (k - 1)^1}
\\
&= \log \prod_{j = 1}^{k - 1} \left(\frac{j + 1}{k - j}\right)^j = \sum_{j = 1}^{k - 1} j \log(j + 1) - \sum_{j = 1}^{k - 1} j \log (k - j)
\\
&= \sum_{j = 1}^{k} (j - 1)\log (j) - \sum_{j = 1}^{k - 1}(k - j)\log(j)
\\
&=2\sum_{j = 1}^{k - 1} j \log (j) - \log k! + 2k\log k - k\log k! 
\end{align}
Now, using that $x\log x$ is a monotonically increasing function,
\begin{align}
\sum\sum_{0\leq i<j\leq k-1} \log \left(\frac{k - i}{j - i}\right) 
&\le 2 \int_{1}^{k}x\log x dx + k \log k+ k - k^2\log k + k^2
\label{eq:upper_40}
\\
&= k^2\log k - \frac{k^2}{2} + k\log k - k -k^2\log k + k^2 
\\
&\le \frac{3}{2}k^2
\end{align}
As a lower bound, we obtain
\begin{align}
\sum_{0\leq i<j\leq k-1} &\log \left(\frac{k - i}{j - i}\right) 
\ge 2\int_{1}^{k - 1}x \log x dx - k \log k + k \log k - k(k - 1)\log (k - 1)
\label{eq:lower_40}
\\
&= (k - 1)^2 \log (k - 1) - \frac{(k - 1)^2}{2} + k\log k - k(k - 1) \log (k - 1)
\\
&= -(k - 1) \log(k - 1) - \frac{(k - 1)^2}{2}  + k \log k 
\\
&\ge -\frac{k^2}{2}
\end{align}

Therefore, combining the upper bound~\eqref{eq:upper_40} and the lower bound~\eqref{eq:lower_40} yields
\begin{align}   
\sum_{0\leq i<j\leq k-1} f\left(\frac{j - i}{k -j}\right) &\le (2\log2 + 3/2)k^2 \le 3k^2,
\\
\sum_{0\leq i<j\leq k-1} f\left(\frac{j - i}{k -j}\right) &\ge (2\log 2 - 1/2)k^2 \ge \frac{4k^2}{5}.
\end{align}

Combining all three terms together we obtain the following bounds for 
the squared sensitivity~\eqref{eq:sens_31}:

\begin{align}
\label{eq:sens_final_bound}
&\sens^2_{k, b}(C_{1, \beta}) \le  \frac{k}{(1 - \beta)^2}(\log n + 1) +\frac{6}{(1 - \beta)^2} k^2\\
&\sens^2_{k, b}(C_{1, \beta}) \ge  \frac{k}{4}(\log n - 1) - \frac{8}{3\sqrt{b}}k^{3/2} + \frac{2}{5}k^2
\end{align}

Now, we recall the bounds for the Frobenius norm of the matrix $C_{1, \beta}$ \ref{lem:sum_ci_bounds}  and \eqref{eq:C_beta_F_norm_lower_bound}:
\begin{align}
\frac{\log (n + 1) - 1}{4} \le  \|C_{1, \beta}\|_F^2/n  &\le \frac{\log n + 1}{(1 - \beta)^2}.
\end{align}

With the auxiliary inequality $\sqrt{\frac{a}{2}} +\sqrt{\frac{b}{2}}  \le \sqrt{a + b} \le \sqrt{a} + \sqrt{b}$ and combining \eqref{eq:sens_final_bound} and the bounds on 
Frobenius norm \eqref{lem:sum_ci_bounds} and \eqref{eq:C_beta_F_norm_lower_bound} 
we get that:

\begin{align}
&\Ecal(C_{1, \beta}, C_{1, \beta}) \le \frac{\sqrt{k}}{(1 - \beta)^2}(\log n + 1) +\frac{\sqrt{5} k}{(1 - \beta)^2}\sqrt{\log n + 1}\\
&\Ecal(C_{1, \beta}, C_{1, \beta}) \ge \frac{1}{4\sqrt{2}}\sqrt{k} (\log n - 1) +  \frac{k}{2\sqrt{5}}  \sqrt{\log(n) - 1} \sqrt{1 - \frac{20}{3\sqrt{n}}}
\end{align}
Making the lower bound well-defined requires $n \ge 45$, otherwise one can
simply take $\Ecal(C_{1, \beta}, C_{1, \beta}) \geq 1$. 
As a final step, we combine both inequalities in the following asymptotic statement:
\begin{equation}
    \Ecal(C_{1, \beta}, C_{1, \beta}) = \Theta_\beta\left(\sqrt{k} \log n + k \sqrt{\log n}\right),
\end{equation}
which concludes the proof of the case without weight decay.

\paragraph{Case 2) with weight decay ($\alpha<1$).} 

As above, we first express the $b$-min-separation sensitivity of the matrix $C_{\alpha, \beta}$ in terms of inner products, 
\begin{equation}
\sens_{k, b}^2(C_{\alpha, \beta}) = \sum_{i = 0}^{k - 1}\sum_{j = 0}^{k - 1} \langle (C_{\alpha, \beta})_{ib}, (C_{\alpha, \beta})_{jb} \rangle. 
\label{eq:sens_C_alpha_b2}
\end{equation}
and then consider a scalar product for a general pair of indexes $j > i$:
\begin{equation}
\langle (C_{\alpha, \beta})_{i}, (C_{\alpha, \beta})_{j} \rangle = \sum_{t = 0}^{n - 1 - j}  c_t c_{j - i + t}.
\label{eq:scalar_product_Calphabeta}
\end{equation}
Now, we use the bounds on $c_t$ from Lemma~\ref{lem:c_bounds} for $\alpha<1$, 
to upper and lower bound this sum with the following expression, 
where $\gamma = \frac{\beta}{\alpha}$:

\begin{align}
    &\langle (C_{\alpha, \beta})_{i}, (C_{\alpha, \beta})_{j} \rangle \le \frac{\alpha^{j - i}}{(1 - \gamma)^2}\sum_{t = 0}^{n - 1 - j} \frac{\alpha^{2t}}{\sqrt{(t + 1)(j - i + t + 1)}} \le \frac{\alpha^{j - i}}{(1 - \gamma)^2(1 - \alpha^2)\sqrt{j - i}}\\
    &\langle (C_{\alpha, \beta})_{i}, (C_{\alpha, \beta})_{j} \rangle \ge \frac{\alpha^{j - i}}{4}\sum_{t = 0}^{n - 1 - j} \frac{\alpha^{2t}}{\sqrt{(t + 1)(j - i + t + 1)}} \ge \frac{\alpha^{j - i}}{4\sqrt{j - i + 1}}.
\end{align}

We substitute these bounds into Equation~\eqref{eq:sens_C_alpha_b2} to 
obtain the following upper bound for sensitivity of matrix $C_{\alpha, \beta}$:

\begin{align}
    \sens^2_{k, b}(C_{\alpha, \beta}) &\le \sum_{i = 0}^{k - 1} \langle (C_{\alpha, \beta})_{ib}, (C_{\alpha, \beta})_{ib} \rangle + \frac{2}{(1 - \gamma)^2 (1 - \alpha^2)\sqrt{b}}\sum_{j > i} \frac{\alpha^{b(j - i)}}{\sqrt{j - i}}\\
    & \le  \frac{k}{(\alpha-\beta)^2}\log\frac{1}{1-\alpha^2}+ \frac{2}{(1 - \gamma)^2 (1 - \alpha^2)\sqrt{b}} \sum_{j > i} \alpha^{b(j - i)}\\ 
    & \le \frac{k}{(\alpha-\beta)^2}\log\frac{1}{1-\alpha^2} + \frac{2k\alpha^b}{(1 - \gamma)^2 (1 - \alpha^2)(1 - \alpha^b)\sqrt{b}},
\end{align}
where the second inequality is due to Equation~\eqref{eq:C_alpha_F_norm_lower_bound}. 

A lower bound for the sensitivity follows directly from Lemma~\ref{lem:sens_inequalities}:
\begin{align}
    \sens^2_{k, b}(C_{\alpha, \beta}) &\ge k\|C_{\alpha, \beta}\|_{\Fr} \geq k.
\end{align}
The Frobenius norm of the matrix $C_{\alpha, \beta}$ is the same as that for one round participation; thus, we could reuse Inequalities~\eqref{eq:C_alpha_F_norm_lower_bound}:

\begin{align}
    1 \le \|C_{\alpha, \beta}\|_F^2 / n \le \frac{1}{(\alpha-\beta)^2}\log\frac{1}{1-\alpha^2}.
\end{align}

By merging the bounds for sensitivity and Frobenius norm, we derive the following bounds for error:

\begin{align}
&\Ecal(C_{\alpha, \beta}, C_{\alpha, \beta}) \le \sqrt{k} \left[\frac{1}{(\alpha-\beta)^2}\log\frac{1}{1-\alpha^2} + \frac{2\alpha^b}{(1 - \gamma)^2 (1 - \alpha^2)(1 - \alpha^b)\sqrt{b}}\right]\\
&\Ecal(C_{\alpha, \beta}, C_{\alpha, \beta}) \ge \sqrt{k}. 
\end{align}

The combination of these results yields the following asymptotic statement:

\begin{equation}
    \Ecal(C_{\alpha, \beta}, C_{\alpha, \beta}) = \Theta_{\alpha, \beta}(\sqrt{k}),
\end{equation}
which concludes the proof.
\end{proof}

\subsection{Proof of Theorem~\ref{thm:bmin_lower_bound}}\label{sec:bmin_lower_bound_proof}

\lowerboundbmin*

\begin{proof}
Let $A_{\alpha,\beta}=BC$ be any factorization with $CC^\top\geq 0$. 
From Lemma~\ref{lem:sens_inequalities} it follows that 
\begin{align}
    \Ecal(B,C) &= \frac{1}{\sqrt{n}}\|B\|_{\Fr}\sens_{k,b}(C)
    \geq\frac{\sqrt{k}}{n}\|B\|_{\Fr}\|C\|_{\Fr}
    \geq \frac{\sqrt{k}}{n}\|A_{\alpha,\beta}\|_*,
\end{align}
where $\|\cdot\|_*$ denotes the \emph{nuclear norm}, and the 
last inequality follows from its variational form, $\|M\|_*=\min_{\{X,Y : XY^T=M\}}\|X\|_{\Fr}\|Y\|_{\Fr}$.
The statement of the Theorem follows by inserting the 
corresponding bounds on $\|A_{\alpha,\beta}\|_*$ from 
Lemma~\ref{thm:Aalphabeta_norms}.
\end{proof}

\subsection{Proof of Theorem~\ref{thm:bmin_baselines}}\label{sec:bmin_approximation_baselines}
\baselinesbmin*

\begin{proof}
\paragraph{Case 1) $A_{\alpha,\beta}=BC$ with $B=A_{\alpha,\beta}$ and $C=\Id$.}
It is easy to check that $\sens_{k,b}(C)=\sqrt{k}$, so $\Ecal(B,C)=\sqrt{\frac{k}{n}}\|A_{\alpha,\beta}\|_{\Fr}$.
Because $A_{\alpha,0} \leq A_{\alpha,\beta}\leq \frac{1}{\alpha-\beta}A_{\alpha,0}$ componentwise, 
we have for $\alpha=1$,
\begin{align}
\frac{n(n+1)}{2} = \|A_{1,0}\|^2_{\Fr}\leq \|A_{1,\beta}\|^2_{\Fr} 
&\leq \frac{1}{(1-\beta)^2}\|A_{1,0}\|^2_{\Fr} = \frac{n(n+1)}{2(1-\beta)^2},
\end{align}
which implies the corresponding statement of the theorem.
For $0<\alpha<1$, we use that $A_{1,0}>\Id$ componentwise, so $\|A_{\alpha,0}\|^2_{\Fr} > \|\Id\|^2_{\Fr}= n$, which conclude the proof of this case.

\paragraph{Case 2) $A_{\alpha,\beta}=BC$ with $B=\Id$ and $C=A_{\alpha,\beta}$.}
We observe that $\|\Id\|_{\Fr}=\sqrt{n}$, so 
\begin{align}
\Ecal(B,C)&=\sens_{k,b}(A_{\alpha,\beta}).
\end{align}
Again, we use the fact that $A_{\alpha,0}\leq A_{\alpha,\beta}\leq \frac{1}{\alpha-\beta}A_{\alpha,0}$.
Now $A_{\alpha,0}$ fulfills the conditions of Theorem~\ref{thm:b-sensitivity}, so from Equation~\eqref{eq:sens_toeplitz} we know 
\begin{align}
    \sens_{k,b}(A_{\alpha,0}) &= 
    \Big\|\sum_{j = 0}^{k-1}(A_{\alpha,0})_{[\cdot, 1 + jb]}\Big\|,
    \label{eq:sens_Aalpha_0}
\end{align}

We first study~\eqref{eq:sens_Aalpha_0} for $\alpha=1$.
Then, from the explicit structure of $A_{1,0}=\toep(1,1,\dots,1)$ 
one sees that the vectors inside the norm have a block structure
\begin{align}
    \sum_{j = 0}^{k-1}(A_{1,0})_{[\cdot, 1 + jb]} &= \begin{pmatrix}v_{1}\\ \vdots\\ v_{k}\\v'\end{pmatrix}
\qquad\text{with}\qquad
v_i = \begin{pmatrix}i\\\vdots\\i\end{pmatrix}\in\R^{b}
\end{align}
for $i=1,\dots,k$, and 
$\displaystyle v' = \begin{pmatrix}k\\\vdots\\k\end{pmatrix}\in\R^{n-bk}$, appears only if $k<\frac{n}{b}$.
Now we check
\begin{align}
\|v'\|^2 + \sum_{i=0}^{k}\|v_i\|^2_2
&= (n-bk)k^2 + b\Big(\sum_{i=1}^{k}i^2\Big) 
\\
&= nk^2 - bk^3 + b\frac{k(k+1)(2k+1)}{6}
\\
&\geq nk^2 - \frac{2}{3}bk^3 \geq \frac{1}{3}nk^2,
\end{align}
because $bk\geq n$. Consequently
\begin{align}
    \sens_{k,b}(A_{1,\beta}) &\geq \frac{k\sqrt{n}}{\sqrt{3}},
\end{align}
which concludes the proof of this case.
For $\alpha<1$, $A_{\alpha,\beta}\geq\Id$ componentwise readily implies 
\begin{align}
    \sens_{k,b}(A_{\alpha,\beta}) \geq \sqrt{k},
\end{align}
which implies the statement of the theorem.
\end{proof}

\end{document}